\documentclass[10pt,twocolumn,letterpaper]{article}


\usepackage[pagenumbers]{cvpr} 
\usepackage{hyperref} 

%

\usepackage{amsthm}
\usepackage{amsmath}


\usepackage{amsmath,amsfonts,amssymb}
\usepackage{bm}
\usepackage{caption}
\usepackage{booktabs} 
\usepackage{algorithm}
\usepackage{algorithmic}
\usepackage{bigfoot} 

\usepackage[table]{xcolor}
\usepackage{footmisc}



\usepackage{subcaption}

\usepackage{multirow}
\newcommand{\blfootnote}[1]{%
  \begingroup
  \renewcommand{\thefootnote}{}%
  \renewcommand{\thefootnotemark}{}%
  \footnotetext{#1}%
  \endgroup
}

\def\NN{\mathbb{N}}
\def\RR{\mathbb{R}}

\def\f{\frac}

\def\T{\mathcal{T}}

\def\C{\mathcal{C}}
\def\D{\mathcal{D}}

\def\X{\mathcal{X}}
\def\Y{\mathcal{Y}}

\def\EE{\mathbb{E}}

\def\PP{\mathbb{P}}
\def\dist{\operatorname{dist}}

\def\M{\mathcal{M}}

\def\en{\operatorname{in}}
\def\de{\operatorname{out}}
\def\app{\operatorname{bot}}
\def\proj{\operatorname{Proj}}

\def\nn{\mathcal{NN}}
\def\unet{\mathcal{U}}

\def\T{\mathcal{T}}
\def\A{\mathcal{A}}
\def\HH{\operatorname{H}}
\def\S{\mathcal{S}}

\def\error{\mathcal{E}}

\def\S{\mathcal{S}}
\def\Sc{\mathcal{S}^c}

\newcounter{JFCounter}

\newcounter{IRCounter}

\newtheorem{theorem}{Theorem}[section]
\newtheorem{definition}[theorem]{Definition}   
\newtheorem{lemma}[theorem]{Lemma}      
\newtheorem{proposition}[theorem]{Proposition}  
\newtheorem{assumption}[theorem]{Assumption}  
\newtheorem{Remark}[theorem]{Remark}

\title{CHEM: Estimating and Understanding Hallucinations in Deep Learning for Image Processing}

\providecommand{\thefootnotemark}{\thefootnote}

\author{
Jianfei Li$^{1,\dagger,*}$ \\
{\tt\small lijianfei@math.lmu.de}
\and
Ines Rosellon-Inclan$^{1,2,\dagger}$ \\
{\tt\small rosellon@math.lmu.de}
\and
Gitta Kutyniok$^{1,2,3,4}$ \\
{\tt\small kutyniok@math.lmu.de}
\and
Jean-Luc Starck$^{5,6}$ \\
{\tt\small jstarck@cea.fr}
}

\begin{document}

\maketitle

\blfootnote{\newline$^1$Ludwig-Maximilians-Universit{\"a}t M{\"u}nchen, Munich, Germany.\newline
$^2$Munich Center for Machine Learning (MCML), Germany.\newline
$^3$University of Troms{\o}, Norway.\newline
$^4$German Aerospace Center (DLR), Germany.\newline
$^5$Université Paris-Saclay, Université Paris Cité, CEA, CNRS, AIM, France.\newline
$^6$Foundation for Research and Technology Hellas (FORTH), Greece.\newline
$^*$Corresponding author.\newline
$^\dagger$Co-first author.
}

\begin{abstract}

Deep learning-based methods have recently achieved significant success in image reconstruction problems. However, challenges have emerged, as these methods may generate unrealistic artifacts or hallucinations, which can interfere with analysis in safety-critical scenarios. This paper introduces a framework for quantifying and characterizing 
hallucinated artifacts in image reconstruction models. The proposed method, termed the Conformal Hallucination Estimation Metric (CHEM), enables the identification 
of hallucination-prone regions in model predictions. It leverages wavelet and shearlet representations to localize such regions at the level of 
image features, and uses conformalized quantile regression to assess hallucination levels in a distribution-free manner. A theoretical analysis is provided, characterizing the sensitivity of CHEM to hallucinated artifacts and its relationship to the mean squared error. Building on these insights and adopting a viewpoint grounded in approximation theory, we investigate why U-shaped networks—widely used architectures for image reconstruction— tend to hallucination-prone predictions. We assess the effectiveness of the proposed approach on astronomical image deconvolution using the CANDELS dataset with architectures such as U-Net, SwinUNet, and Learnlets, and on natural image super-resolution using the DIV2K dataset with models such as DRUNet, Unfolded DRS, RAM, and DPS.
\end{abstract}



\section{Introduction}
In recent decades, artificial intelligence has permeated nearly every domain, including medical diagnosis pipelines \cite{rajeev_ranjan_kumar1_advances_2025}, autonomous driving \cite{zhang_road_2018}, natural language processing \cite{gurevych_rate_2021}, and speech recognition \cite{mehrish2023reviewdeeplearningtechniques}, among others. Despite these advances, hallucinations pose a recurring challenge that accompany the success of deep learning. The phenomenon of hallucination has been observed in a wide range of deep learning models, including Large Language Models \cite{zhang_sirens_2025}, Large Vision-Language Models \cite{liu_survey_2024}, Natural Language Generation systems \cite{ji_survey_2023}, and Large Foundation Models \cite{sahoo_comprehensive_2024}. 

\begin{figure}[]
    \centering
    \includegraphics[width=\linewidth]{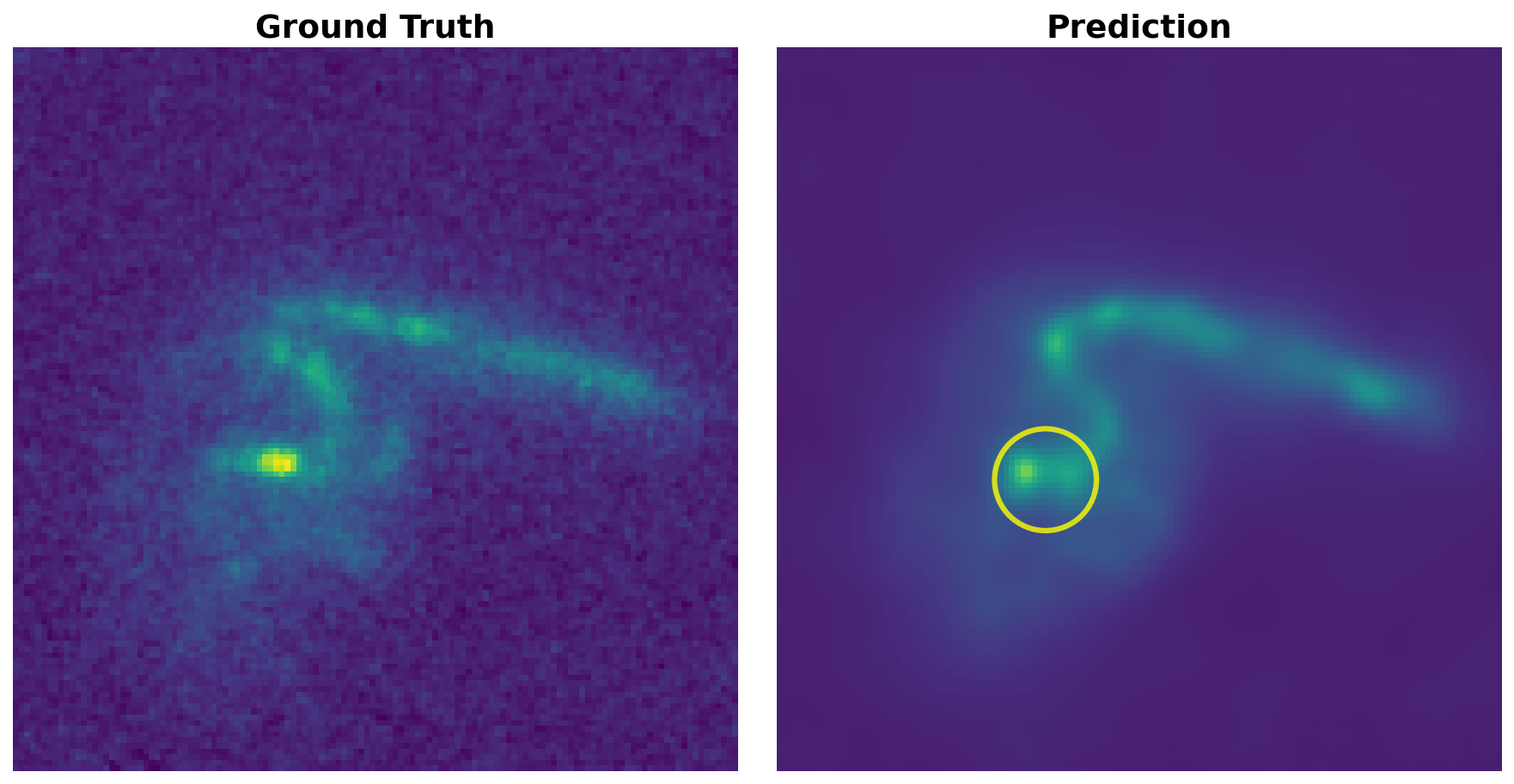}
    \caption{Hallucinations in astronomical image deconvolution: U-Net predictions 
exhibit visually plausible artifacts absent from the ground truth.
    }
    \label{fig: introductionplot}
\end{figure}
Specifically, the hallucination phenomenon has been observed in image processing \cite{gottschling_troublesome_2025-1}, raising particular concern in sensitive applications such as medical imaging \cite{bhadra_hallucinations_2021,tivnan_hallucination_2024}, as well as in scientific imaging domains including astronomy and fluorescence microscopy \cite{akhaury_ground-based_2024,belthangady_applications_2019}. Figure \ref{fig: introductionplot} illustrates that hallucinations appear in astronomical image deconvolution tasks \cite{akhaury_ground-based_2024}. Despite recent efforts to formalize hallucinations in imaging tasks ~\cite{bhadra_hallucinations_2021, tivnan_hallucination_2024}, there is no consensus on a common framework to detect and quantify them. In this work, we characterize hallucinations as image features whose 
magnitude is incompatible with noise fluctuations, and build our 
framework upon this. 
In addition to their measurement, another key challenge is to identify the underlying mechanisms that lead to such artifacts. Both understanding and estimating the confidence of the model in its outputs are crucial, as unreliable predictions may propagate through downstream pipelines and compromise decisions based on them  \cite{cohen_looks_2024}.

\subsection{Related Work}
\label{sec: Related Work}

In recent years, deep learning methods have greatly improved image reconstruction performance across a wide range of applications. However, hallucinations have increasingly been identified as a major concern when integrating deep learning into the image reconstruction pipeline \cite{9420272, bhadra_hallucinations_2021, gottschling_troublesome_2025-1, ANTUN2026303}.  We review the main approaches that have been proposed to address this challenge.

Hallucinations have been mainly studied in the context of generative models, which are widely used for image reconstruction \cite{suganthi2021review, denker2020conditional, song2021solving, moser2024diffusion}. Cohen et al.~\cite{cohen_looks_2024} characterize hallucinations as realistic-looking details that do not exist in the ground truth images— 
a notion our framework encompasses. Adopting the definition of perceptual quality as the divergence between probability densities \cite{Blau_2018}, Cohen et al.\  \cite{cohen_looks_2024} establish a fundamental trade-off in image restoration: improving perceptual quality inevitably increases uncertainty. This trade-off induces the well-known perception--distortion trade-off \cite{Blau_2018}. In their work, hallucinations are measured via entropy power, which assesses the statistical spread of a random variable. However, estimating entropy in high-dimensional spaces remains challenging.

Another approach to quantify hallucination in this setting measures the Hellinger distance between the distribution of reconstructed images and a zero-hallucination reference distribution \cite{tivnan_hallucination_2024}. In this work, the forward diffusion process is taken as the reference. This approach requires density estimation and relies on the assumption that both distributions can be well approximated by multivariate normal distributions.

A further approach \cite{ren2026hallucinationscoremitigatinghallucinations} proposes a Hallucination Score (HS) obtained by prompting a large language model (LLM) to examine predictions for possible hallucinations and assign a score between 1 and 5. To avoid the computational cost of repeatedly querying the LLM, the authors first use it to annotate a dataset and then train lighter models, such as a CNN, to predict the Hallucination Score. The resulting metric correlates well with a ``human judgment'' metric obtained by asking 11 participants to assign scores to the images. Although this metric has the advantage of incorporating semantic information aligned with human perception, the score itself may be affected by hallucinations introduced by both the LLM used for annotation and the subsequent CNN predictor, and a formal mathematical foundation is not provided.

All these methods measure hallucination at the level of an entire model. However, in many applications it is not only important to determine whether a model is prone to hallucinations, but also to localize regions in an image that pose a high risk of hallucination.
One attempt to localize hallucinations in ill-posed inverse problems is the concept of \emph{hallucination maps}, which aim to pinpoint features that are demonstrably absent from the measurements \cite{bhadra_hallucinations_2021}. This method was developed for MRI, where the imaging operator is known, and the computation of the hallucination map relies on explicit knowledge of this operator.
In galaxy image deconvolution, the \emph{hallucination rate} is defined as the difference between objects and clumps derived from SCARLET \cite{akhaury_ground-based_2024}. Detections whose centroids lie more than five pixels away from those in the ground-truth images are considered hallucinations. Although suitable for galaxy images, this method may introduce bias due to the use of SCARLET and does not account for object shape or detailed texture variations.

In other image processing tasks,  e.g. in explainability studies, hallucinations are noted in attribution masks. A \emph{Hallucination Score} quantifies them as the ratio of edges appearing solely in the explanation to those in an input image \cite{kolek_explaining_2023}. This metric is tailored for analyzing the decision-making process in image classification models rather than hallucinations in general image reconstruction tasks. 

\subsection{Contributions}
This paper focuses on providing a comprehensive analysis to assess and understand hallucination in image processing tasks. We propose a general framework that encompasses hallucinations across applications and tasks, enables their localization within an image, and can be tailored to the application domain through the choice of a representation method.
Our contributions are summarized as follows:
\begin{itemize}
    \item Hallucination quantification: We introduce a novel hallucination detection approach: \textbf{C}onformal \textbf{H}allucination \textbf{E}stimation \textbf{M}etric (CHEM). Our method is designed to highlight the hallucination-prone regions in a distribution-free manner by leveraging the expressive power of wavelets and shearlets. 

    \item Theoretical justification of CHEM: We show that CHEM captures artifacts with exponential sensitivity in the wavelet or shearlet domain and establish a formal connection between CHEM and the mean squared error, showing that controlling one has implications for the other. Building on this relationship, we employ approximation theory to analyze the reconstruction error of U-shaped networks, characterizing how architectural constraints, such as the number of parameters and the number of layers, contribute to hallucination artifacts.
    Our result also characterizes the expressivity of U-shaped networks for general image processing tasks.

    
    
    \item Experimental Results:
    We evaluate the proposed approach on two image reconstruction tasks. 
    First, we consider astronomical image deconvolution using the CANDELS dataset \cite{grogin_candels_2011,koekemoer_candels_2011} and analyze hallucination behavior in models such as U-Net \cite{ronneberger2015u}, SwinUNet \cite{fan_sunet_2022}, and Learnlets \cite{ramzi2023wavelets}. 
    Second, we study natural image super-resolution using the DIV2K dataset  \cite{8014884} across several reconstruction models, including DRUNet-Pnp \cite{zhang_plug-and-play_2021}, Unfolded DRS  \cite{Douglas1956OnTN}, RAM \cite{terris2025reconstruct}, and DPS  \cite{chung2022diffusion}. 
    Across these experiments, we further analyze the influence of different dictionaries (wavelets and shearlets), loss functions ($\ell_1$ and $\ell_2$), and training epochs. 
    Two main findings emerge: (i) there is a tradeoff between accuracy and hallucination, and (ii) models with similar accuracy may exhibit different hallucination levels.
\end{itemize}
The source code for the experiments is publicly available\footnote{
\url{https://github.com/inesrosellon/CHEM}}.

\subsection{Impact and Future Directions}
    The proposed metric CHEM is directly applicable to the astronomical imaging community, where deep learning models are increasingly used for image reconstruction (e.g., deconvolution) but are still evaluated mainly with standard performance metrics. CHEM supports a more comprehensive evaluation by localizing and measuring high-risk hallucinated artifacts within individual predictions, offering a more informative foundation for comparing models.
    
    More generally, across any safety-critical setting involving image reconstruction, CHEM provides a principled approach for thorough model assessment, training process surveillance, and for identifying predicted regions that are prone to hallucinate. This is particularly important in medical imaging, where hallucinations can be mistaken for clinically relevant anatomy, potentially leading to severe outcomes. Through its distribution-free, model-agnostic framework built on wavelet and shearlet representations together with conformalized quantile regression, CHEM delivers a methodologically robust and practically usable tool for the trustworthy AI community focused on reconstruction tasks.

 \section{Methodology}

\subsection{Notations}
Let $\RR$, $\RR_{+}$, $\NN$, and $\NN_{+}$ denote the sets of real numbers, positive numbers, nonnegative integers, and positive integers, respectively. We denote $[N]:=\{1,\dots,N\}$ if $N\in\NN_{+}$. For $x \in \RR^d$, we denote its $i$-th element as $x_i$ or $(x)_i$. We denote $\|\cdot\|_p$, $p\in[1,\infty]$ as the vector norm. Let $\Omega:= [-1,1]^d$. The $L_p(\Omega)$ space contains measurable functions that have a finite $L_p$ norm
$    \|f\|_{L_p(\Omega)} = \left( \int_{\Omega} |f(x)|^p dx \right)^{1/p} <\infty$. For $p=\infty$, we define $\|f\|_{L_\infty(\Omega)}:=\operatorname{ess\,sup}|f|$.


\subsection{Quantification of hallucination-prone regions}
In image processing tasks, we consider an operator $\mathcal{M}: \mathcal{X} \rightarrow \mathcal{Y}$ that maps an input space $\mathcal{X}$ (e.g., degraded or low-resolution images) to an output space $\mathcal{Y}$. Our goal is to learn a predictive model $\Phi(\cdot; w)$, parameterized by weights $w \in \mathbb{R}^N$, such that $\Phi(X) \approx \mathcal{M}(X)$.

To characterize the reliability of the prediction $\Phi(X)$, we construct a set-valued mapping $B_\alpha(\Phi(X)) \subset \mathcal{Y}$, representing a confidence region centered at the prediction. More precisely, for a pre-defined significance level $\alpha \in (0,1)$, we require that this region fulfill the coverage condition:
\begin{equation*}
    \mathbb{P} \left( \mathcal{M}(X) \in B_\alpha(\Phi(X)) \right) \geq 1 - \alpha.
\end{equation*}
Within this framework, a prediction that is prone to hallucinate is identified as any statistical inconsistency with the uncertainty of the model. When $\mathcal{M}(X) \notin B_\alpha(\Phi(X))$, the model yields a feature that lies outside the range of plausible reconstructions defined by its own confidence region. This indicates that the textures produced in $\Phi(X)$ are spurious artifacts rather than faithful approximations of the underlying signal.

To quantify the severity of these hallucinations, we measure the discrepancy of the ground truth from the prediction set. Let $\operatorname{dist}(\cdot, \cdot): \mathcal{Y} \times \mathcal{Y} \rightarrow \mathbb{R}_{\geq 0}$ be a metric such that $\operatorname{dist}(Y_1, Y_2) = 0$ if and only if $Y_1 = Y_2$. For any subset $A \subset \mathcal{Y}$, we denote the distance from a point $Y$ to the set $A$ as $\operatorname{dist}(Y, A) := \inf_{Z \in A} \operatorname{dist}(Y, Z)$. 

\begin{definition}[Hallucination Estimation Metric]\label{def:hallucination}
Let $\alpha \in (0,1)$ and $\mathbb{P}$ be a probability measure over the joint space $\mathcal{X} \times \mathcal{Y}$. For a neural network $\Phi(\cdot)$, we define the hallucination metric $\operatorname{H}(\Phi; \alpha)$ as the expected distance between $Y$ and the corresponding confidence region $B_\alpha(\Phi(X))$ 
\begin{equation*}
    \operatorname{H}(\Phi; \alpha) := \mathbb{E}_{(X,Y) \sim \mathbb{P}} \left[ \operatorname{dist}\left( Y, B_\alpha(\Phi(X)) \right) \right],
\end{equation*}
subject to the coverage constraint $\mathbb{P} \left( Y \in B_\alpha(\Phi(X)) \right) \geq 1 - \alpha$.
\end{definition}

In the absence of hallucinations, the coverage property ensures that the distance is zero with high probability ($1-\alpha$). Conversely, the distance becomes strictly positive, capturing the magnitude by which the model's textures deviate from the true image. In this setting, ground truth is necessary to compute the hallucination risks of $\Phi(X)$, in line with previous work \cite{bhadra_hallucinations_2021} and the formalization of hallucination in \cite{jesson_estimating_2024}. In practical scenarios, the hallucination risk of model $\Phi$ can be assessed on a test dataset, allowing us to select the model that achieves the best balance between prediction error and hallucination risk.

We emphasize that the choice of metric $\dist(\cdot,\cdot)$ directly influences how effectively we can highlight regions susceptible to hallucinations. In image processing, we are particularly interested in structured hallucinations, such as fabricated textures or objects that tend to be well-localized and anisotropic. As a result, performing hallucination detection directly in the pixel space, for example via Euclidean distance, may be suboptimal. This motivates a discussion on how to design $\dist(\cdot,\cdot)$.

\begin{Remark}
    Quantification of hallucinations is an emerging paradigm. In in-context learning with generative AI, the hallucination rate proposed in \cite{jesson_estimating_2024} is defined similarly to Definition~\ref{def:hallucination}, where $\dist(Y, B_{\alpha}(\Phi(X))):= \chi_{\{Y\notin B_{\alpha}(\Phi(X))\} }$, with $\chi_{A}(x) = 1 $ if $x$ belongs to $A$ and zero otherwise. An analogous characterization of hallucination for image processing was explored in \cite{angelopoulos2022image} and subsequently improved in \cite{kutiel_conformal_2023}, enabling the identification of reliable regions in predicted images.
\end{Remark}

\subsection{Conformal hallucination estimation metric}

In the following, we develop an algorithm to assess the hallucination estimation metric as defined in Definition~\ref{def:hallucination}. In image processing,  wavelets and shearlets have been shown as powerful dictionaries \cite{mallat1999wavelet,kutyniok2012shearlets,kutyniok2016shearlab,guo2006sparse,li2022convolutional}. Their ability to extract localized and directional texture information makes them ideal tools for highlighting hallucination-prone regions. Let $X \in \X \subset \RR^{t_1}$, $Y \in \Y \subset \RR^{t_2}$. Here, we flatten the images to vectors for simplicity. We refer to $W$ as a discrete wavelet transform (DWT) or a discrete shearlet transform (DST). For an in-depth overview of wavelets and shearlets, please refer to \cite{mallat1999wavelet}. 
The texture information is contained in $\hat X := WX \in \RR^{\hat{t}_1}$ for a certain $\hat{t}_1$ determined by $W$. 
Each $i$-th element in $\hat{X}$ contains different directional texture information in $X$. In the output space $\Y$, we can similarly compute texture information of $Y$ by a DWT/DST, denoted as $\hat Y \in \RR^{\hat{t}_2}$.

To identify areas with a high risk of hallucination in the wavelet or shearlet domain, our objective is to estimate a mapping $\hat R(X)$ such that the interval centered at $(\widehat{\Phi(X)})_j$ with radius $\hat R(X)_j$ gathers comparable information surrounding $\widehat{\Phi(X)}_j$
\begin{small}
    \begin{align*}
        B_\alpha\left(\widehat{\Phi(X)}\right)_j :=\left [\widehat{\Phi(X)}_j- \hat R(X)_j, \widehat{\Phi(X)}_j+\hat R(X)_j \right],
    \end{align*}
\end{small}
and the actual value is likely to lie within this interval with high probability
\begin{align}\label{eq:alphaball_0}
        \PP  \left( (\hat Y)_j \in B_\alpha \left(\widehat{\Phi(X)} \right)_j  \right) \geq 1-\alpha.
\end{align}
This characterizes confidence intervals with a mapping $\hat R:\RR^{t_1}\rightarrow \RR^{\hat{t}_2}$. Let us choose the metric function as $\dist(Y_1, Y_2):=\frac{1}{\hat t_2}\sum_{j=1}^{\hat t_2}|(\widehat{ Y_1})_j-(\widehat{ Y_2})_j|$ for $(Y_1,Y_2)\in \Y\times \Y$ and $B_\alpha \left(\widehat{\Phi(X)} \right) := \prod_{j=1}^{\hat t_2}B_\alpha \left(\widehat{\Phi(X)} \right)_j$. Then, Definition~\ref{def:hallucination} for measuring hallucination over a full image prediction can be equivalently rewritten by averaging texture-wise hallucination scores
\begin{align}\label{eq:ndhall}
    \operatorname{H}(\Phi):=  \frac{1}{\hat t_2}\sum_{j=1}^{\hat t_2}\operatorname{H}(\Phi)_j ,
\end{align}
where the degree of hallucination at the $j$-th directional texture information is given by
\begin{align}\label{eq:1dhall}
\begin{aligned}
    \operatorname{H}(\Phi)_j
    &=\EE  \dist\left( (\hat Y)_j, B_\alpha\left(\widehat{\Phi(X)}\right)_j  \right) \\
    &=\EE \left( |\widehat{\Phi(X)}_j - (\widehat{Y})_j | - \hat R( X)_j \right)_{+} ,
\end{aligned}
\end{align}
with $(z)_+ := \max\{z,0\}$ for any $z\in \RR$. In this context, $\alpha$ represents the level of confidence and can be extended to define the overall confidence in the image as in Definition~\ref{def:hallucination}. 

Let $\hat r(\cdot) : \RR^{t_1}\rightarrow \RR^{\hat{t}_2}$ be an initialization of $\hat R(\cdot)$.
The remaining unresolved issue is to determine how to modify an initial radius $\hat r( X)$ to reach a target radius $\hat R( X)$, while ensuring the condition \eqref{eq:alphaball_0} is met. This goal can be achieved  without strong distributional assumptions by using conformal quantile regression, as proposed in \cite{leterme2025distribution}.
Given a finite calibration dataset $\D:=\{(X_n,Y_n)\in \X\times\Y \}_{n=1}^N$. 
We define $\hat R( X)_j$ as $\hat R( X)_j := g_{\lambda_j}(\hat r( X)_j)$, where $g_{\lambda}:\RR_{+}\rightarrow\RR_{+}$ denotes a non-decreasing calibration function with $\lambda \in [a,b]$. We choose $\lambda_j$ as the $(1-\alpha)(1+1/N)$-th empirical quantile of $\{\lambda_j^n\}_{n=1}^{N}$, where
    \begin{align}\label{eq:lambda_n_j}
        \begin{aligned}
            \lambda_j^n := 
                 \inf\Big \{ \lambda \in [a,b]:  g_{\lambda}(\hat r(X_n)_j) 
                 &\geq \left | \widehat{\Phi(X_n)}_j - (\widehat{Y_n})_j \right| \Big \}.       
        \end{aligned}
    \end{align}

Here, $\lambda^n_j$ is equal to $a$ (or $b$) when the condition is always (or never) met. Let $(X,Y)$ be a new data sample. According to \cite[Proposition 1]{leterme2025distribution}, if the samples $\D\cup \{(X,Y)\}$ are independently drawn from an unknown distribution $\mu$ on $\X\times\Y$, then it holds that
\begin{align*}
    \alpha-\frac{1}{N+1}\leq\PP \left ( (\hat{Y})_j \notin B_\alpha\left(\widehat{\Phi(X)}\right)_j \right) \leq \alpha. 
\end{align*}
To evaluate the risk of hallucination in the full image, we modify Equation \eqref{eq:1dhall} by imposing an upper bound $\theta$. This ensures that the evaluation across the entire image is not disproportionately affected by the risk of hallucination at an individual texture.
We are now prepared to introduce the proposed hallucination risk score.

\begin{definition}[CHEM]\label{def:chem}
    For a trained neural network $\Phi(\cdot)$,
 the risk of hallucination measured in the wavelet or shearlet domain is defined by $\operatorname{H}^\theta(\Phi)_j$ and $\operatorname{H}^\theta(\Phi)$
    \begin{align*}
    \quad\operatorname{H}^\theta(\Phi)_j
        &:=\EE H^\theta ( X, Y)_j, \,\,
        \operatorname{H}^\theta(\Phi) := \frac{1}{\hat t_2}\sum_{j=1}^{\hat t_2}\operatorname{H}^\theta(\Phi)_j, 
    \end{align*}
    where
    \begin{align*}
        H^{\theta}( X, Y)_j := \min\left \{\left( \left|\widehat{\Phi(X)}_j - (\hat Y)_j\right | - \hat R(X)_j\right)_{+},\theta\right \},
    \label{eq:overall-texture-level-hallucination}
    \end{align*}
    for some $\theta>0$. We refer to these quantities as the \textbf{C}onformal \textbf{H}allucination \textbf{E}stimation \textbf{M}etric (CHEM). 
\end{definition}

In Figure~\ref{fig:cross-dict-hallucination}, we illustrate the effect of combining CHEM with different dictionaries, specifically Haar, db4, and db8 wavelets~\cite{daubechies_ten_1992}, and shearlets. The following result shows that for application purposes it suffices to evaluate CHEM using finitely many samples. The algorithm \ref{alg:CHEM} summarizes the procedure for computing CHEM.
\begin{proposition}\label{prop:hoeffding}
    Let $\theta \in \RR_{+}$ and $\D\cup\{ (X_m,Y_m)\in \X\times\Y  \}_{m=1}^M$ be independent and identically distributed random variables. Then, with probability at least $1-\delta$, the following inequality holds
    \begin{small}
        \begin{align}
          \left|\operatorname{H}^\theta(\Phi)_j  - \frac{1}{M} \sum_{m=1}^M H^\theta(X_m,Y_m)_j \right| \leq \frac{  \sqrt{\theta^2\log(2/\delta)}}{\sqrt{2M}}.
    \end{align}
    \end{small}
    In addition, under the same condition, the inequality
    \begin{small}
        \begin{align}
          \left|\operatorname{H}^\theta(\Phi)  - \frac{1}{M} \sum_{m=1}^M \frac{1}{\hat t_2}\sum_{j=1}^{\hat t_2} H^\theta(X_m,Y_m)_j \right| \leq \frac{  \sqrt{\theta^2\log(2/\delta)}}{\sqrt{2M}}.
    \end{align}
    \end{small}
    holds with probability at least $1-\delta$.
\end{proposition}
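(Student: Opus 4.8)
The plan is to recognize this statement as a direct consequence of Hoeffding's concentration inequality, applied twice: once to the texture-wise quantity and once to its average over the directional indices $j$. No distributional structure is needed beyond the i.i.d. assumption and the boundedness built into the truncated score \eqref{eq:overall-texture-level-hallucination}.

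First I would treat the texture-wise bound. Fix a directional index $j$ and set $Z_m := H^\theta(X_m, Y_m)_j$ for $m \in [M]$. Since the pairs $(X_m, Y_m)$ are i.i.d., the variables $Z_1, \dots, Z_M$ are i.i.d. as well, and by \eqref{eq:overall-texture-level-hallucination} each $Z_m$ takes values in $[0, \theta]$: the outer $\min\{\cdot,\theta\}$ caps the quantity at $\theta$, while the positive part $(\cdot)_+$ keeps it nonnegative. Moreover $\EE Z_m = \EE H^\theta(X,Y)_j = \operatorname{H}^\theta(\Phi)_j$ by definition. Hoeffding's inequality for bounded i.i.d. variables then gives, for any $t>0$,
\begin{align}
\PP\left( \left| \frac{1}{M}\sum_{m=1}^M Z_m - \operatorname{H}^\theta(\Phi)_j \right| \geq t \right) \leq 2\exp\left( -\frac{2 M t^2}{\theta^2} \right),
\end{align}
where the range $\theta$ of each summand contributes the factor $\theta^2$ in the exponent. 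Setting the right-hand side equal to $\delta$ and solving for $t$ yields $t = \sqrt{\theta^2 \log(2/\delta)/(2M)}$, which is exactly the claimed bound; passing to the complementary event completes the first inequality.

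The second inequality follows by the same argument applied to the averaged variables $\bar Z_m := \frac{1}{\hat t_2}\sum_{j=1}^{\hat t_2} H^\theta(X_m, Y_m)_j$. The only point requiring care — and the reason the constant is unchanged rather than inflated — is the observation that $\bar Z_m$ still lies in $[0, \theta]$: it is an average over $j$ of quantities each confined to $[0,\theta]$, so its range is no larger than $\theta$. Since $\EE \bar Z_m = \frac{1}{\hat t_2}\sum_{j=1}^{\hat t_2} \operatorname{H}^\theta(\Phi)_j = \operatorname{H}^\theta(\Phi)$, a second application of Hoeffding with the same range $\theta$ produces an identical exponent and hence the same bound.

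The main obstacle, such as it is, is not analytic depth but verifying the two boundedness claims precisely: that the truncation in \eqref{eq:overall-texture-level-hallucination} confines each summand to $[0,\theta]$, and that averaging over the $\hat t_2$ directional indices preserves this range. Once these are established, the result is a textbook invocation of Hoeffding's inequality, and the only remaining bookkeeping is inverting the tail bound to express $t$ as a function of the confidence level $\delta$.
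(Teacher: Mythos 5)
Your proposal is correct and follows essentially the same route as the paper's proof: apply Hoeffding's inequality to the i.i.d. summands $H^\theta(X_m,Y_m)_j$, which the truncation in \eqref{eq:overall-texture-level-hallucination} confines to $[0,\theta]$, invert the tail bound, and repeat for the $j$-averaged quantity whose range is again $[0,\theta]$. Your explicit verification that the average over $j$ preserves the range $\theta$ is a welcome detail the paper leaves implicit in its closing remark.
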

We defer the proof to Section \ref{proof:hoeffding}.

\begin{Remark}
\emph{
Conformal prediction has been gaining considerable attention in the field
of uncertainty quantification \cite{teneggi2025conformal, kutiel_conformal_2023}.
To construct $B_{\alpha}(\cdot)$, we build on conformal quantile regression
(CQR), introduced in \cite{romano2019conformalized} for the one-dimensional
setting and extended to inverse problems in \cite{leterme2025distribution}.
An alternative framework for achieving the statistical guarantees of
$B_{\alpha}(\cdot)$ would be risk-controlling prediction sets (RCPS)
\cite{angelopoulos2022image}; however, as noted in
\cite{leterme2025distribution}, CQR typically yields sharper uncertainty
estimates. In \cite{leterme2025distribution} a higher uncertainty is related to high-density regions via the distance between predictions and intervals. We extend this direction by enabling
quantification in the wavelet/shearlet domain.
}
\end{Remark}

\begin{algorithm}[ht]
\caption{Conformal Hallucination Estimation Metric}
\label{alg:CHEM}
\begin{algorithmic}[1] 
\REQUIRE {Test set $\mathcal{D}_1 = \{(X_m,Y_m)\}_{m=1}^M$, calibration set $\mathcal{D}_2 =\{(X_n,Y_n)\}_{n=1}^N$, network $\Phi$, $g_\lambda$,  DWT/DST $W$, confidence level $\alpha$, radius $\hat{r}$, threshold $\theta$}; 


\FORALL {$j=1,\dots,\hat t_2$}
\FORALL {$n=1,\dots,N$}
    \STATE Solve $\lambda_j^n$ in \eqref{eq:lambda_n_j}
\ENDFOR
    \STATE $\lambda_j \leftarrow (1-\alpha)(1+1/N)$-th empirical quantile of $\{\lambda^n_j\}_{n=1}^N$
    \STATE $\hat R(\cdot)_j \leftarrow g_{\lambda_j}(\hat r(\cdot)_j)$
\ENDFOR
\FORALL {$m=1,\dots,M$}
        \STATE Compute $H^{\theta}( X_m, Y_m)_j$ using \eqref{eq:overall-texture-level-hallucination}
\ENDFOR

\STATE $\operatorname{ H}^\theta(\Phi) \leftarrow \frac{1}{M} \sum_{m=1}^M \frac{1}{\hat t_2}\sum_{j=1}^{\hat t_2} H^\theta(X_m,Y_m)_j $

\STATE \textbf{Output:} CHEM $\operatorname{ H}^\theta(\Phi)$
\end{algorithmic}
\end{algorithm}

\subsection{Theoretical properties of CHEM}
In what follows, we provide a rigorous theoretical analysis of CHEM,
establishing its mathematical foundations and characterizing how it
leverages the geometric structure of hallucinations. We first establish its detection sensitivity and then characterize its relationship with standard reconstruction errors.

\subsubsection{Detection sensitivity of artifacts}

While CHEM offers a flexible and distribution-free framework for quantifying hallucination-prone regions, its practical effectiveness fundamentally depends on the interaction between the selected transform $W$ and the conformal quantile regression. By projecting reconstruction residuals into a representation that promotes sparsity in the image features, CHEM functions as a geometric magnifier, enabling the efficient capture of structured artifacts. 
To formulate the analysis, we decompose the reconstructed image as $\Phi(X):=Y+h+\eta$ where $h$ represents a hallucination artifact and $\eta$ is a sub-Gaussian residual that could arise from input noise.
This sub-Gaussian assumption is grounded in its ability to model common distributions like Gaussian or bounded errors, and mathematically essential as it provides the concentration of measure required to bound the hallucination detection failure probability in the following result.
\begin{theorem}[Detection Sensitivity]\label{thm:chem_sensitivity}
	Let $Y$ be the ground truth image and $\Phi(X)$ be the model reconstruction. Assume that the prediction contains a hallucination artifact $h$ and a stochastic residual $\eta$, such that $\Phi(X) = Y + h + \eta$ with $ \eta$ being sub-Gaussian. Let $W$ be a Parseval frame  (e.g., shearlets or wavelets) in which $h$ is $s$-sparse. For a significance level $\alpha \in (0,1)$ and the corresponding calibrated radius function $\hat{R}$, the probability that CHEM fails to detect hallucination $h$ (i.e., the ground truth $Y$ is contained within the prediction set $B_\alpha(\Phi(X))$) is bounded by 
	\begin{align*}
		\mathbb{P}\left( \frac{1}{\hat t_2}\sum_{j=1}^{\hat t_2}\operatorname{dist}( (\hat Y)_j, B_\alpha(\widehat{\Phi(X)})_j) = 0 \mid h \right) \\
        \leq 2\exp \left( - \frac{c\left( s^{-1/2} \|h\|_2 - \hat{R}(X)_{j^*} \right)^2}{\|\eta\|_{\psi_2}^2} \right),
	\end{align*}
	where $s$ is the sparsity level of $h$ in the $W$-domain, $j^* = \arg \max_j |(\hat h)_j|$, $\|\eta\|_{\psi_2}$ denotes the sub-Gaussian norm of $\eta$, and $c$ denotes an absolute constant arising from the sub-Gaussian tail.
\end{theorem}


The proof is postponed to Section \ref{proof:chem_sensitivity}. Theorem~\ref{thm:chem_sensitivity} mathematically formalizes the geometric amplifier effect of the transform $W$. By concentrating the energy of the structured artifacts in a minimal set of coefficients $s \ll \hat{t}_2$, the resulting factor $s^{-1/2}$ implies that structured hallucinations are easier to detect than diffused errors.
This is consistent with our characterization of hallucinations as image 
features incompatible with noise fluctuation. A critical implication of this scaling is that, as the artifact becomes more structured, i.e., $s^{-1/2}\|h\|_2 \gg \hat{R}(X)_{j^*}$, the probability that CHEM successfully captures the artifact $h$ increases. In addition, as the magnitude of hallucination $\|h\|_2 \to \infty$, the probability of not detecting the hallucination decays exponentially. 
Furthermore, because this sensitivity bound is agnostic to the specific architecture of the $\Phi$ model, CHEM serves as a universal measurement of trustworthiness across various frameworks. 

\subsection{Relationship between CHEM and MSE}

Theorem~\ref{thm:chem_sensitivity} establishes that hallucinations are detectable, we now analyze how CHEM differentiates between reconstruction error and artifacts. Unlike the mean squared error (MSE), which treats all residuals equally, CHEM can leverage the multi-scale nature of wavelet and shearlet representations to partition the coefficient index set into hallucination-free and hallucination-prone components.

Let $\S \subset [\hat{t}_2]$ and $\Sc $ be the complement of $\S$. We decompose
$$\HH(\Phi)  = \frac{1}{\hat t_2}\sum_{j=1}^{\hat t_2} \EE  \dist\left( (\hat Y)_j, B_\alpha\left(\widehat{\Phi(X)}\right)_j  \right)$$ into two spectral components 
\begin{align*}
    \HH_{\S} &:= \frac{1}{\hat{t}_2} \sum_{j \in \S} \mathbb{E} \left| (\hat{Y})_j - \proj_{B_{\alpha, j}}((\hat{Y})_j) \right|, \\
    \HH_{\Sc} &:= \frac{1}{\hat{t}_2} \sum_{j \in \Sc} \mathbb{E} \left| (\hat{Y})_j - \proj_{B_{\alpha, j}}((\hat{Y})_j) \right|,
\end{align*}
where $\proj_{B_{\alpha, j}}$ denotes the projection onto the conformal prediction set $B_{\alpha}(\widehat{\Phi(X)})_j$ at the $j$-th index. Similarly, we use $\proj$ for the collection of projections of all indices.

Denote the error over $\S$ as
\[
\error_{\S}(\widehat{\Phi(X)}, \hat{Y}) := \frac{1}{\hat{t}_2} \sum_{j \in \S} \mathbb{E}\big|(\hat{Y})_j - (\widehat{\Phi(X)})_j\big|.
\] 
We define the hallucination ratio over $\S$ as
\begin{equation*}
    R_{\S} := \frac{\error_{\S}( \widehat{\Phi(X)}, \hat{Y})}{\error( \widehat{\Phi(X)}, \hat{Y})}.
\end{equation*}
Here, $\error$ is the error across all indices in $[\hat t_j]$.

To characterize the behavior of CHEM in the presence of hallucinations, 
we introduce an assumption on textures with high hallucination risk.

\begin{assumption}\label{ass:highfreq_hall}
Let $(X, Y)$ be a new data pair. We assume $\Phi(X)$ contains hallucinations while maintaining a relatively accurate reconstruction. Denote the hallucination-prone region as $\S$ and the hallucination-free region as $\Sc$. Mathematically, this means that, on $S$, the projection of the ground truth onto the prediction interval lies substantially closer to the prediction than the true value itself, whereas on $\Sc$ the projection of the ground truth preserves the original information, i.e.,
\begin{align*}
    \error_{\S} (\widehat{\Phi(X)} , \proj(\hat{Y})) &\leq \mu \error_{\S} (\widehat{\Phi(X)} , \hat{Y}) ,
\end{align*}
and
\begin{align*}
    \error_{\Sc} (\widehat{\Phi(X)} , \proj(\hat{Y})) &= \nu \error_{\Sc} (\widehat{\Phi(X)} , \hat{Y}),
\end{align*}
where $0 \leq \mu \ll 1$ and $\nu \approx 1$.
\end{assumption}

\begin{theorem}\label{thm:hall}
Under Assumption~\ref{ass:highfreq_hall}, the components of CHEM satisfy:
\begin{align}\label{thm: high frequency}
\begin{aligned}
    \HH_{\S} &\geq (1-\mu) R_{\S} \error(\hat{Y} , \widehat{\Phi(X)}) \text{ and}  \\
    \HH_{\Sc} &\leq (1+\nu)(1-R_{\S}) \error(\hat{Y} , \widehat{\Phi(X)}).
\end{aligned}
\end{align}
\end{theorem}

The proof is given in Section \ref{proof:hall}. The analysis of Theorem~\ref{thm:hall} establishes a framework for detecting model-generated artifacts by decomposing the coefficient space into high-risk (${\S}$) and safe (${\Sc}$) bands. The theorem demonstrates that CHEM is a faithful detector through $\HH_{\S} \geq (1-\mu) R_{\S} \error(\cdot)$, which guaranties that when a model is ``confidently wrong''—producing prediction intervals far from the ground truth—the metric scales linearly with the actual hallucination magnitude and cannot be zero.  Simultaneously, the hallucination-free upper bound, $\HH_{\Sc} \leq (1+\nu)(1-R_{\S}) \error(\cdot)$, ensures specificity by proving that CHEM ignores the reconstruction noise where conformal coverage is maintained, thus avoiding false alarms.

Theorem~\ref{thm:hall} further demonstrates that CHEM is explicitly linked to the overall reconstruction error $\mathcal{E}$. This, in turn, highlights the importance of determining how network parameters, particularly depth, neuron count, and input image size, control the formation of these artifacts. We investigate this in the following section.

\section{Understanding the Causes of Hallucinations}

To gain a deeper understanding of the origins of hallucinations, we conduct a more detailed discussion of the error analysis, which is tightly associated with hallucination-prone regions (Theorem~\ref{thm:hall}). This investigation is based on the analysis of the error bounds of deep learning methods for imaging tasks.
Since images can be regarded as discrete representations of real-world scenes \cite{barrett2013foundations}, in this section, we consider learning a general mapping $\M : \C(\Omega) \rightarrow \C(\Omega)$ with neural networks where $\C(\Omega)$ contains all continuous functions. Therefore, the discussion will include an examination of the discretization error to provide a better understanding of hallucination, which is more directly related to real-world scenes. 

Let $\xi := \{ \xi_1, \xi_2,\dots, \xi_t \} \subset \Omega$ be a set of sample locations. 
We denote by $S(f,\xi)$ a sample vector (or a discrete image) of a function (or a continuous scene) $f\in \C(\Omega)$ on $\xi$, i.e., $S(f,\xi) = (f(\xi_1), f(\xi_2), \dots, f(\xi_t))^\top$. We denote $\Pi_m$ as the collection of all polynomials in $d$ variables of coordinate-wise degree no more than $m$. For error analysis, we require the modulus of continuity of $f \in \C(\Omega)$, which is given by
\begin{align}
    \omega_{f}(r;\Omega) := \sup_{\substack{x_1,x_2 \in \Omega,\\ \|x_1 - x_2\|_{2} \leq r}} |f(x_1) - f(x_2) |.
\end{align}

For the imaging learning task $\M$, we focus on U-shaped network architectures. The original U-Net proposed by Ronneberger et al. (2015) \cite{ronneberger2015u} combines a contracting encoder, an expansive decoder, and skip connections, and has driven major advances in applications such as medical image analysis \cite{isensee_nnu-net_2021} and autonomous driving \cite{zhang_road_2018}. This architecture has, in turn, motivated numerous variants \cite{zhou_unet_2018,qin_u2-net_2020,liu_multi-level_2018,chen_transunet_2021,cao_swin-unet_2023,ma_u-mamba_2024,chen_encoder-decoder_2018}. Throughout this section, we denote by $\unet(L,K)$ a U-shaped network with at most $L$ layers and no more than $K$ trainable parameters. For ease of exposition, the precise definition is postponed to the appendix. Figure~\ref{fig:UshapedNet} depicts the generic structure of the U-shaped architecture.

 To establish non-asymptotic error bounds, we introduce the following two assumptions.

\begin{assumption}\label{ass:Lipschitz}
    We assume that the mapping $\M$ satisfies the Lipschitz property, i.e., for any $f_1,f_2\in \C(\Omega)$, $\|\M(f_1) - \M(f_2) \|_{L_\infty} \leq L_{\M} \|f_1 - f_2\|_{L_\infty}$
    for some constant $L_{\M}$.
\end{assumption}

\begin{assumption}\label{ass:Y}
    We assume that any function $g$ in the output space $\Y \subset \C(\Omega)$ satisfies $|g(x_1) - g(x_2)| \leq L_{\Y} \|x_1 - x_2\|_2$,
    for any $x_1,x_2 \in \Omega$
    and some constant $L_{\Y}$.
\end{assumption}

Given that standard neural network architectures, composed of Lipschitz activation functions and bounded linear layers, are themselves Lipschitz continuous, it is natural to frame the learning objective as identifying the optimal Lipschitz mapping within a hypothesis space. We define $\M$ as
\begin{align*}
    \M \approx \operatorname{arg\,min}_{m \in \text{Lip}} \mathbb{E}_{(f,g) \sim \mathbb{P}} \operatorname{Loss}\left( m(f) , g \right),
\end{align*}
where loss function can be chosen as $L_2$ or $L_\infty$ norm. While the Lipschitz property is violated in the context of ill-posed inverse problems, practical solutions often rely on regularization (e.g., Tikhonov or sparsity constraints) to restore stability. Under these regularized regimes, the resulting mapping becomes stable and unique. Consequently, we adopt the Lipschitz assumption to facilitate our theoretical analysis of the learned model, acknowledging that extensions to more general, non-Lipschitz operators remain a subject for future research.

The following theorem characterizes the upper bound for approximating $\M$ by U-shaped neural networks.


\begin{theorem}\label{thm:main}
    Let $d, m, L, K \in \NN_{+}$, $t=(m+1)^d$ and $\Omega = [-1,1]^d$. Let $\X \subset \{ f\in\C(\Omega): \|f\|_{L_\infty} \leq 1\}$. Suppose Assumption~\ref{ass:Lipschitz} and Assumption~\ref{ass:Y} hold. Then, there exists a linear mapping $V_m :\C(\Omega)\rightarrow \Pi_m$ and a set of points $\xi_{\en} \subset \Omega$ with $|\xi_{\en}|=t$ such that for any $\xi_{\de} \subset \Omega$ with $|\xi_{\de}|=t$, we can find a U-shaped network $\Phi \in \unet\left(O( L\log (t K)), O(t^2 K)\right)$, satisfying 
    \begin{small}
        \begin{align}
        &\left\|S(\M(f),\xi_{\de}) - \Phi(S(V_m (f), \xi_{\en}))  \right\|_{\infty} \\
        &\leq C_1 \omega_{f}\left(\frac{2}{m} \right) + \frac{C_2}{m} +  \frac{C_3 t^{3} }{  ( L K \log(K/L))^{1/t}} ,
    \end{align}
    \end{small}
    for any $f\in \X$. Here, $C_1$,$C_2$,$C_3>0 $ depend on $d, L_{\M}, L_{\Y}$.
\end{theorem}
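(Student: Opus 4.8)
The plan is to split the error into three contributions matching the three terms in the bound, via the intermediate quantity $S(\M(V_m f),\xi_{\de})$, and then to reduce the whole problem to approximating a single finite-dimensional Lipschitz map by the U-shaped network. I would fix once and for all a near-best polynomial approximation operator $V_m:\C(\Omega)\to\Pi_m$ satisfying a Jackson-type estimate $\|h-V_m h\|_{L_\infty}\le C\,\omega_h(2/m)$ for every $h\in\C(\Omega)$, together with a point set $\xi_{\en}\subset\Omega$ with $|\xi_{\en}|=t=(m+1)^d$ that is unisolvent for $\Pi_m$ and well-conditioned (e.g.\ a tensor grid of Chebyshev nodes in a fixed stable basis). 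Unisolvence guarantees that $S(\cdot,\xi_{\en}):\Pi_m\to\RR^t$ is a linear bijection, so a polynomial in $\Pi_m$ and its sample vector on $\xi_{\en}$ carry the same information. Both $V_m$ and $\xi_{\en}$ are chosen independently of $\xi_{\de}$ and $f$, which respects the order of quantifiers in the statement.

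For the two analytic terms I would argue as follows. Writing
\begin{small}
\begin{align*}
S(\M(f),\xi_{\de})-\Phi(S(V_m f,\xi_{\en}))
&= \big[S(\M(f),\xi_{\de})-S(\M(V_m f),\xi_{\de})\big]\\
&\quad + \big[S(\M(V_m f),\xi_{\de})-\Phi(S(V_m f,\xi_{\en}))\big],
\end{align*}
\end{small}
the first bracket is bounded in $\ell_\infty$ by $\|\M(f)-\M(V_m f)\|_{L_\infty}\le L_{\M}\|f-V_m f\|_{L_\infty}\le L_{\M}C\,\omega_f(2/m)$, using Assumption~\ref{ass:Lipschitz} and the Jackson estimate; this produces the $C_1\omega_f(2/m)$ term. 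For the second bracket I would insert the output-side polynomial approximation $V_m\M(V_m f)$: since $\M(V_m f)\in\Y$ is $L_{\Y}$-Lipschitz by Assumption~\ref{ass:Y}, Jackson's theorem gives $\|\M(V_m f)-V_m\M(V_m f)\|_{L_\infty}\le C\,\omega_{\M(V_m f)}(1/m)\le C L_{\Y}/m$, and sampling on $\xi_{\de}$ is a contraction from $L_\infty$ to $\ell_\infty$, yielding the $C_2/m$ term. (Here one needs $\M(V_m f)\in\Y$, i.e.\ that the range condition behind Assumption~\ref{ass:Y} holds on polynomial inputs.)

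It then remains to realize, by a network $\Phi\in\unet(O(L\log(tK)),O(t^2K))$, the finite-dimensional map
\[
G:\ \RR^t\to\RR^t,\qquad S(V_m f,\xi_{\en})\ \longmapsto\ S\big(V_m\M(V_m f),\xi_{\de}\big).
\]
This factors as (samples on $\xi_{\en}\to$ coefficients of $V_m f$) $\circ$ (a nonlinear reduced map in coefficient space built from $\M$ and $V_m$) $\circ$ (coefficients $\to$ samples on $\xi_{\de}$). The first and last factors are \emph{exact linear maps}: the first is the inverse of the well-conditioned sampling matrix on $\xi_{\en}$, the last is polynomial evaluation at $\xi_{\de}$, which is a bounded linear map for \emph{any} $\xi_{\de}\subset\Omega$ in a stable basis. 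A ReLU network realizes linear maps exactly, a $t\times t$ matrix costs $O(t^2)$ parameters, and these stages are naturally assigned to the encoder and decoder of the U-shaped template while the nonlinear core sits in the bottleneck; carrying $t$ channels through a bottleneck of effective size $O(K)$ and depth $O(L\log(tK))$ then accounts for the overall budget $O(t^2K)$. The composite $G$ is Lipschitz with constant controlled by $L_{\M}$, the Lebesgue/operator norms of $V_m$, and of the two sampling/evaluation maps, all polynomial in $t$; tracking these carefully should yield the prefactor $C_3 t^3$. After rescaling the bounded domain $S(V_m\X,\xi_{\en})$ to a fixed cube, I would approximate this bounded Lipschitz map with a standard deep-ReLU Lipschitz rate of the form $\mathrm{error}\le C\,\|G\|_{\mathrm{Lip}}\,(LK\log(K/L))^{-1/t}$ at depth $O(L\log(tK))$, reproducing the term $C_3 t^3/(LK\log(K/L))^{1/t}$.

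I expect the main obstacle to be this last step: embedding a quantitative deep-ReLU approximation of a $t$-dimensional Lipschitz map into the \emph{U-shaped} class $\unet(\cdot,\cdot)$ while (i) keeping the encoder/decoder linear stages exact, (ii) respecting the prescribed depth $O(L\log(tK))$ and parameter budget $O(t^2K)$, and (iii) propagating the Lipschitz constant of $G$ through the operator norms of $V_m$ and the two Vandermonde-type maps so that only a $t^3$ prefactor, and no worse $t$-dependence, survives. The curse-of-dimensionality exponent $1/t$ is intrinsic here, and since $t=(m+1)^d$ it is precisely the mechanism by which the bound degrades with resolution, which is the behaviour the paper attributes to hallucinations.
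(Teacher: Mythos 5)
Your proposal follows essentially the same route as the paper: a Bernstein/Jackson-type projection $V_m$ sandwiching $\M$ to produce the $\omega_f(2/m)$ and $1/m$ terms, exact linear encoder/decoder stages tied to a unisolvent point set, and a deep-ReLU approximation of the resulting finite-dimensional Lipschitz coefficient map in the bottleneck with the $(LK\log(K/L))^{-1/t}$ rate and a $t^{3}$ prefactor coming from the operator norms of the evaluation map and the Lipschitz constant of the reduced map. The one detail you flag but do not close --- that $\M(V_m f)$ need not lie in $\Y$ --- is resolved in the paper by bounding $\omega_{\M\circ V_m(f)}$ through a triangle inequality via $\M(f)\in\Y$, which costs only an additional $\omega_f(2/m)$ contribution absorbed into $C_1$ rather than requiring any range condition on polynomial inputs.
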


We defer the proof to Section \ref{proof:main}. In practical scenarios, our estimation must rely on discrete image sets. Consequently, in Theorem~\ref{thm:main}, the input image $X = S(V_m (f), \xi_{\en})$ is discretized from $f$ by using $V_m$ and $\xi_{\en}$ , while the true image $Y = S(\M(f),\xi_{\de})$ is considered as a sampled image from $\M(f)$. Each $X$ and $Y$ can be interpreted as a $d$-dimensional tensor, with each dimension having a size of $m+1$. Therefore, $t$ denotes the number of elements within $X$ or $Y$.\\
The upper bound can be divided into two components. The first component is given by $C_1 \omega_{f}\left(\frac{2}{m} \right) + \frac{C_2}{m}$. A large value of $\omega_f(\cdot)$ indicates that $f$ lacks smoothness. 
However, even for highly regular functions, this term cannot be faster than 
$O(m^{-1})$. 
Essentially, this component reflects that both the inherent complexity of real scenes and the discretization of $f$ may lead to hallucinations.\\
The second part, $C_3 t^{3} (L K \log(K/L))^{-1/t}$, implies that with a sufficient number of neurons and layers, this component can be made arbitrarily small. However, due to the finite number of image pairs on which neural networks can be trained, there is a risk of overfitting. Additionally, as the dimension $d$ approaches infinity, this term weakens rapidly, indicating that hallucination may occur more often in higher-dimensional scenarios if $L$ and $K$ remain constant.

By combining Theorem \ref{thm:hall} with Theorem~\ref{thm:main}, we can bound the stable component of CHEM as:
	\begin{equation*}
		\operatorname{H}_{\mathcal{S}^c} \lesssim (1-R_{\mathcal{S}}) \left( \omega_f\left(\frac{2}{m}\right) + \frac{1}{m} + \frac{ t^{3} }{ ( L K \log\frac{K}{L})^{\frac{1}{t}}} \right).
	\end{equation*}
The subsequent theorem establishes the existence of a lower bound with a similar expression, indicating that it is unreasonable to expect $\operatorname{H}_{\S}$ to be small for every possible input.


\begin{theorem}\label{thm:lowerbound}
    Let $d, m \in \NN_{+}$ and $t=(m+1)^d$. Let $\M:\Pi_m \rightarrow \Pi_m$ and $\X =  \Pi_m$. Suppose that Assumption~\ref{ass:Lipschitz} holds. Then there exists $\xi_{\en}, \xi_{\de} \subset [-1,1]^d$ with $|\xi_{\en}| = |\xi_{\de}|=t$ and a constant $C>0$ such that 
        \begin{align*}
            \begin{aligned}
                &\inf_{\Phi\in \unet(L, K) } \sup_{f \in \X} \left\|S(\M(f),\xi_{\de}) - \Phi(S(f, \xi_{\en}))  \right\|_{\infty} \\
            &\geq C (tK^2L\log (tK^2))^{-1/t}.
            \end{aligned}
        \end{align*}
\end{theorem}
The proof is given in Section \ref{proof:lowerbound}.

\section{Experiments}
\label{sec: ExperimentalSetupAstro}

In our experiments, we evaluate CHEM across different models for two reconstruction tasks: astronomical image deconvolution and natural image super-resolution. The overarching goal is twofold: (i) to assess the hallucination behavior of different reconstruction models, and (ii) to provide visual evidence of the regions where such hallucinations occur.


\subsection{Image deconvolution on astronomical images}
We begin with astronomical image deconvolution, a setting in which
models are trained from scratch on galaxy images from the CANDELS dataset.
\subsubsection{Experimental setup}
\paragraph{Dataset}
The Cosmic Assembly Near-IR Deep Extragalactic Legacy Survey (CANDELS) \cite{grogin_candels_2011,koekemoer_candels_2011} dataset contains observations of more than 250,000 galaxies. Data extraction follows the procedure described in \cite{akhaury_deep_2022}, resulting in an initial dataset of 20,085 galaxy cutouts. From this dataset, we use 10,000 cutouts for model training, of which 70\% are used for training and 30\% for validation at each epoch. 

For the computation of CHEM, the remaining samples are used to construct the calibration dataset. These samples are evenly divided into two subsets: $\mathcal{D}_1$, used to initialize $\hat r(\cdot)$ (see \cite[Section 3]{romano2019conformalized}), and $\mathcal{D}_2$, used to compute $\hat R(\cdot)$.  Performance is evaluated on a test set containing 2,232 galaxy images.

\paragraph{Image deconvolution}
The task of reconstructing high-quality ground-based all-sky images,  naturally leads to an image deconvolution problem, i.e. the task of recovering a ground truth image \( x \) from an observed image \( y \), modeled as
\begin{equation}
y = h * x + \eta ,
\label{eq:deconv}
\end{equation}
where \(h\) denotes the  point spread functions (PSF), \(\eta\) represents additive Gaussian noise, and \(*\) denotes the two-dimensional convolution operator. Figure~\ref{fig:DataPipeline2} illustrates the image deconvolution problem on the CANDELS dataset.
\begin{figure}[http]
    \centering
    \includegraphics[width=\linewidth]{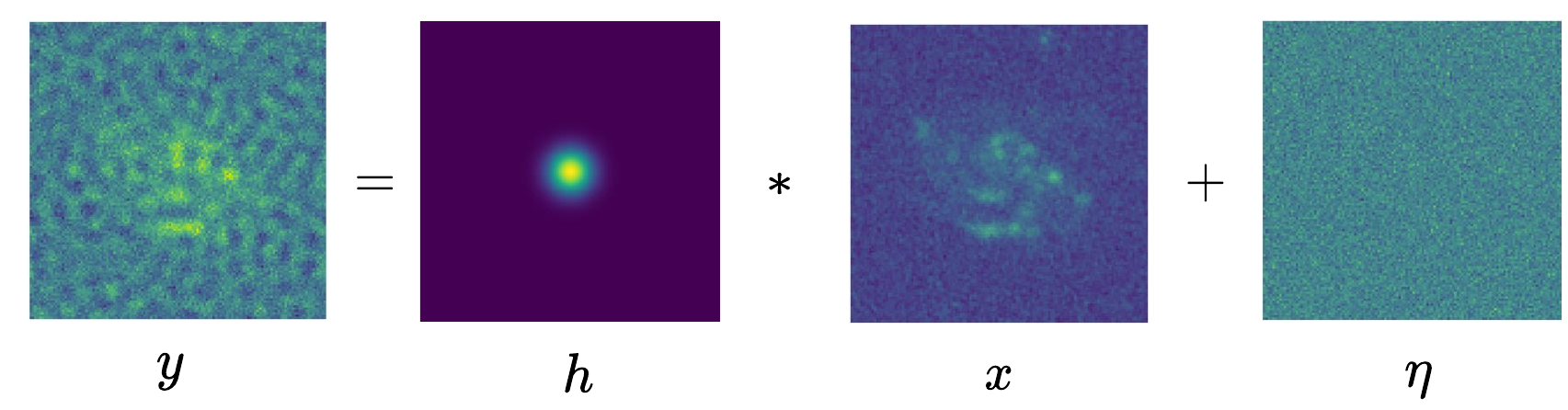}
    \caption{Image deconvolution task on the CANDELS dataset.}
    \label{fig:DataPipeline2}
\end{figure} \\

\paragraph{Models}
For the astronomical experiments we consider variants of \emph{Tikhonet}, a two-step deconvolution method ~\cite{sureau_deep_2020}. The first step applies Tikhonov deconvolution, followed by a neural network–based denoising step.
Recent works apply a variety of architectures for the denoising step, such as the U-Net–based approach~\cite{sureau_deep_2020}, the Learnlet-based method~\cite{akhaury_deep_2022}, and the Swin Transformer U-Net~\cite{akhaury_ground-based_2024}.

Tikhonov deconvolution uses a quadratic-regularized inverse filter.  
Let \( H \) denote the circulant matrix associated with the convolution operator \( h \).  
Then, the deconvolution problem $y = h *x + \eta$ is solved via
\[
  \hat{x} = (H^\top H + \lambda\, \Gamma^\top \Gamma)^{-1} H^\top y,
\]
where \( \Gamma \) is a linear Tikhonov filter, typically chosen as a Laplacian high-pass filter to penalize high-frequency components, and \( \lambda \in \mathbb{R}_+ \) is a regularization parameter selected by minimizing Stein's Unbiased Risk Estimate (SURE). We refer to ~\cite{akhaury_deep_2022} for further details.

We consider all three alternatives for the denoising stage.
The U-Net architecture, was introduced by Ronneberger et al.\ (2015), and comprises a contracting encoder for hierarchical feature extraction and an expansive decoder for resolution recovery, coupled with skip connections\cite{ronneberger2015u}.
Learnlets follows the general encoder–decoder principle of U-Net but replaces its design with an analysis–synthesis framework based on learned filter banks. The analysis stage can be interpreted as a wavelet transform with learned filters, while the synthesis stage corresponds to a wavelet reconstruction operator with learned filters. The architecture was originally proposed for denoising and was shown to have better generalization properties than U-Net \cite{ramzi2023wavelets}. 
Swin Transformers \cite{9710580} enabled the integration of Transformers, which had previously been highly successful in natural language processing \cite{huang_advancing_2024}, into vision tasks. A subsequent work incorporated Swin Transformer blocks into the U-Net architecture, as illustrated in Figure~\ref{fig: tikhonet}, and demonstrated that this design achieved performance competitive with existing benchmarks in image denoising \cite{fan_sunet_2022}.
\begin{figure}[h!]
    \centering
    \includegraphics[width=\linewidth, keepaspectratio]{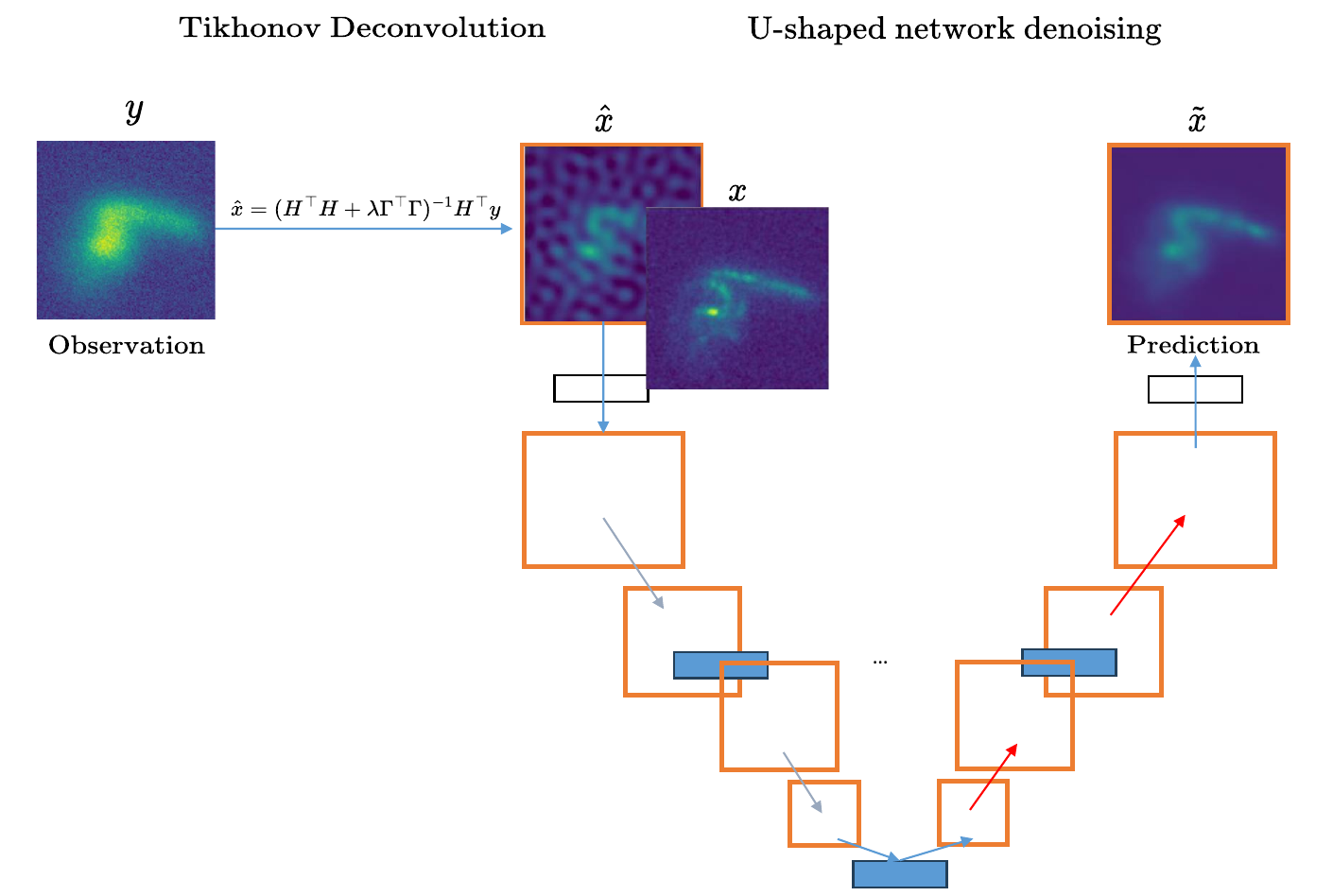}
    \caption{Example: Tikhonet with a U-shaped denoising module. For SUNet, blue filled rectangles indicate Swin-Transformer blocks, red arrows denote dual up-sampling, blue arrows denote patch-merging, and black outlined rectangles correspond to 3×3 convolutions.}
    \label{fig: tikhonet}
\end{figure}  

These three configurations correspond to large, medium, and lightweight architectures, as shown in Table \ref{tab:training-configs}, allowing us to study hallucination behavior across different model capacities.\\
\paragraph{Training setting}
In order to enable a fair comparison across architectures, we trained all networks using the same set of parameters. The only variation across experiments was the choice of loss function, where we considered either \(\ell_1\) or \(\ell_2\) loss depending on the configuration. Each model is trained for 500 epochs with a batch size of 4. The learning rate is adjusted using a two-stage scheduling strategy that combines a linear warm-up and cosine annealing. During the first three epochs, the learning rate gradually increases from zero to the initial value of \(2\times10^{-4}\) (warm-up phase). Subsequently, a cosine annealing scheduler progressively reduces the learning rate to a minimum value of \(1\times10^{-6}\) over the remaining training epochs. The network weights corresponding to the epoch with the highest validation Peak Signal-to-Noise Ratio (PSNR) were saved and used for all subsequent analysis.
\begin{table}[h]
    \centering
    \scriptsize
    \begin{tabular}{lccccc}
    \toprule
    \textbf{Method} & \textbf{Loss} & \textbf{No.\ of } & \textbf{Batch size} & \textbf{Epochs} & \textbf{Training } \\
     &  & \textbf{parameters} &  &  & \textbf{Time [h]} \\
    \midrule
    \multirow{2}{*}{Learnlets} & L1 & \multirow{2}{*}{21,673} & \multirow{2}{*}{4} & \multirow{2}{*}{500} & 65.9 \\
    & L2 &  &  &  & 61.0 \\
    \hline
    \multirow{2}{*}{SUNet} & L1 & \multirow{2}{*}{99,475,367} &    \multirow{2}{*}{4} & \multirow{2}{*}{500}& 134.0 \\
    & L2 &  &  &  & 135.1 \\
    \hline
    \multirow{2}{*}{U-Net} & L1 & \multirow{2}{*}{7,781,761} &  \multirow{2}{*}{4} & \multirow{2}{*}{500} & 75.0 \\
     & L2 &  &  &  & 76.4 \\
    \bottomrule
    \end{tabular}
    \caption{Training configurations.}
    \label{tab:training-configs}
\end{table}

\paragraph{Evaluation metrics and CHEM computation}
\label{sec:EvMetrics}
We compare reconstruction accuracy, measured by the mean squared error (MSE), and hallucination, quantified by CHEM. CHEM is computed following Algorithm \ref{alg:CHEM} with $\alpha = 0.01$ and $\theta = 1$.\\
For the hallucination maps shown in Figures~\ref{fig:cross-model-hallucination}, and \ref{fig:cross-dict-hallucination} the CHEM values associated with each coefficient \(j\) were further standardized according to
\[
\frac{\operatorname{H}^{\theta}(\Phi)_j - \operatorname{H}^{\theta}(\Phi)}{\sigma_{\operatorname{H}^{\theta}(\Phi)}},
\]
where \(\operatorname{H}^{\theta}(\Phi)\) and \(\sigma_{\operatorname{H}^{\theta}(\Phi)}\) denote the mean and standard deviation of hallucination values, respectively. 
Coefficients at fine scales whose normalized value exceeds a threshold of \(0.5\) are selected and used to reconstruct the corresponding image components. Empirically, restricting the reconstruction to the two finest scales  yields the sharpest localization of hallucinations.

\subsubsection{Model comparison}
 Figure~\ref{fig:cross-model-hallucination} illustrates the quantification of hallucinations using various U-shaped architectures. The U-Net trained with $\ell_2$ loss, displays distinct hallucinations, which are effectively identified by our CHEM method. Conversely, SUNet and Learnlets do not exhibit noticeable hallucinations in their predicted images, consistent with their corresponding MSE-CHEM/FWHM curves in Figure \ref{fig:FWHM-analysis}. 
 \begin{figure*}[ht]
    \centering
    \includegraphics[width=\linewidth]{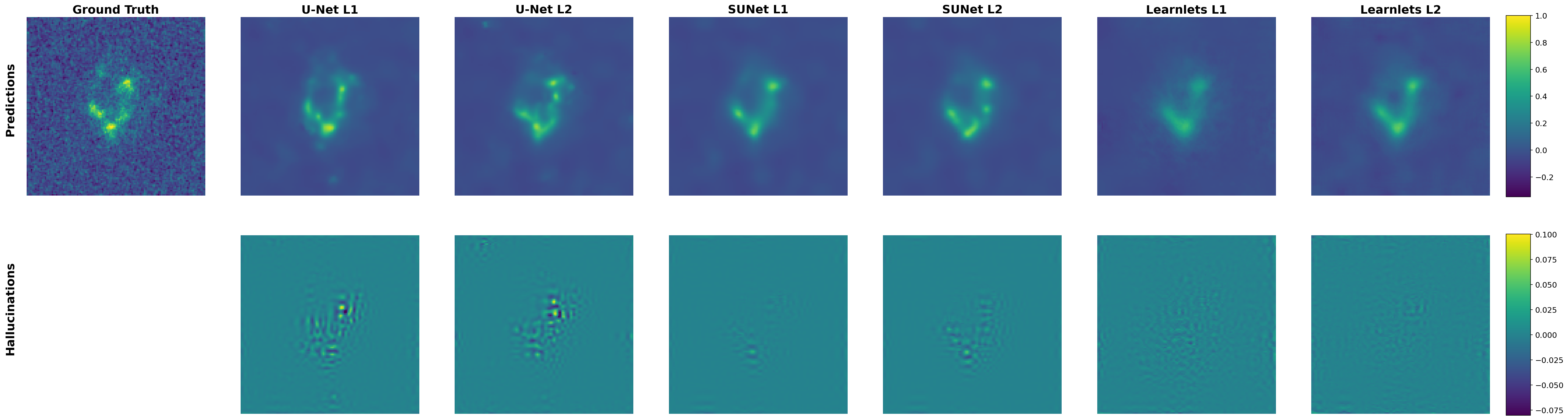}
    \caption{Quantifying hallucinations of U-shaped networks trained with different loss functions using db8. The predicted images are gathered in the first row, while the second row displays CHEM concerning $\operatorname{H}^{\theta}(\Phi)_j$ for high-resolution coefficients relative to $j$. The U-Net trained with $\ell_2$ loss exhibits distinct texture hallucinations, which are identified by CHEM. In contrast, SUNet and Learnlets do not display noticeable hallucinations, consistent with the results in Figures \ref{fig:Varying Dictionaries: Pyramids}, \ref{fig:FWHM-analysis}, \ref{fig:MSE vs Hallucination Index}. For more information, please see Section \ref{sec:EvMetrics}. }
    \label{fig:cross-model-hallucination}
\end{figure*}
\subsubsection{Hallucinations under input perturbation}
In the deconvolution problem, the point spread functions PSF is assumed to be known.
In real applications, the PSF needs to be reconstructed and the use of different instruments for measurements can result in variations of the PSF \cite{starck_astronomical_2006}. In our setting, we simulate these differences by convolving the ground truth images with a normalized Gaussian PSF of varying full width at half maximum (FWHM). 
The larger FWHM values correspond to stronger blurring, whereas the smaller FWHM values indicate sharper imaging performance.
\begin{figure}[h!]
    \centering
        \centering
        \includegraphics[width=\linewidth]{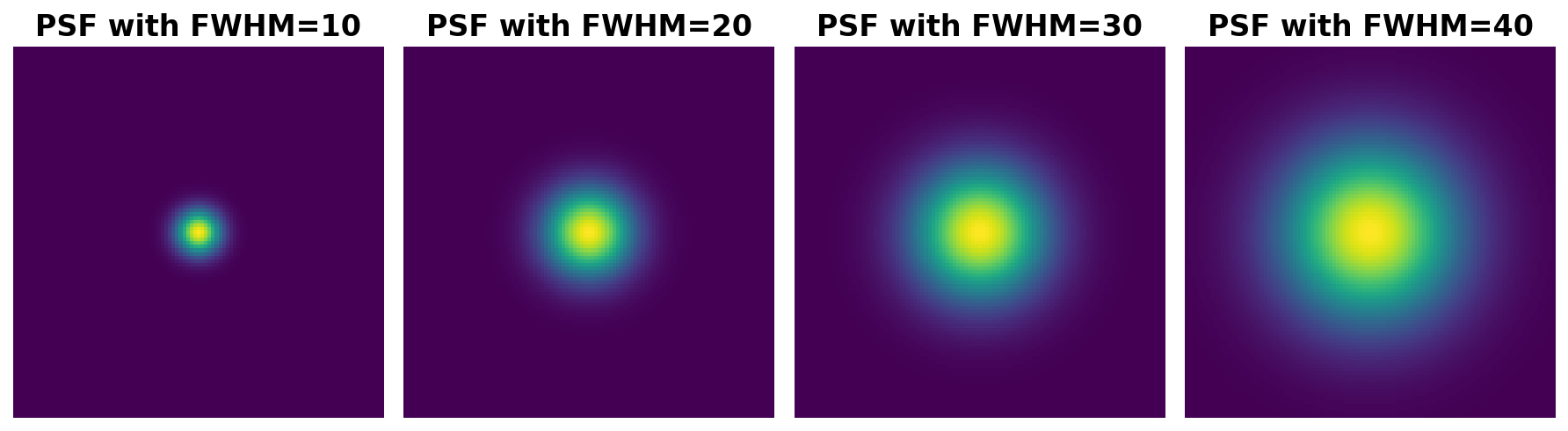}
        \label{fig:psf_small}
    
    \vspace{0.5em} 
    
        \centering
        \includegraphics[width=\linewidth]{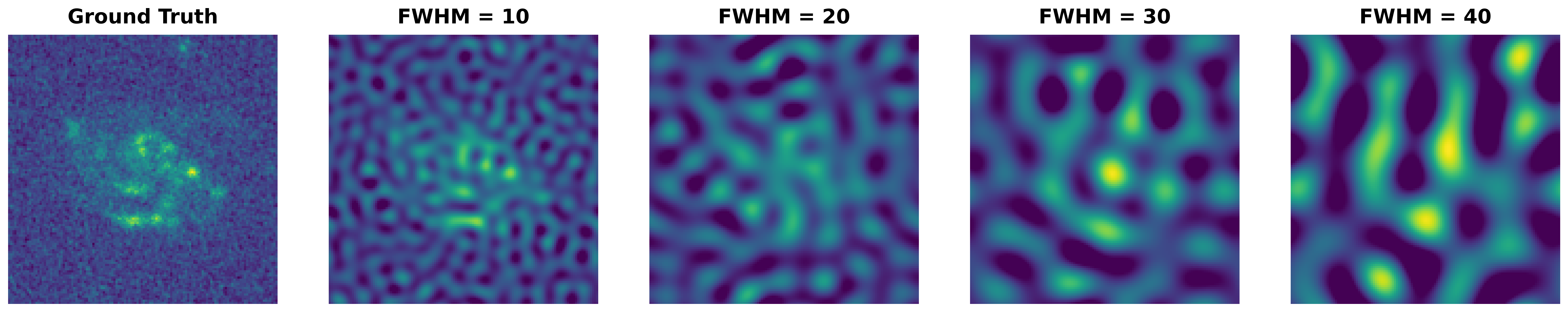}
        \label{fig:psf_large}
    \label{fig:psf_increasingfwhm}
    \caption{Point spread functions (PSFs) with varying FWHM values and the corresponding ground-truth image after convolution with each PSF.}
\end{figure}

In this experiment, we perturb the inputs by varying the FWHM of the PSF, and compute the corresponding CHEM for the wavelet coefficients of the image as described in Algorithm~\ref{alg:CHEM}. Figure~\ref{fig:FWHM-analysis} illustrates that both SUNet and U-Net achieve a lower performance when employing $\ell_1$ loss, and they also exhibit a decrease in the extent of hallucinations with minor perturbations. This suggests that in large neural networks, the application of loss $\ell_1$ could be advantageous to enhance performance while minimizing hallucination levels. On the other hand, Learnlets achieves MSE similar to those of U-Net under both losses, but its effectiveness in CHEM is significantly better than that of U-Net. Therefore, CHEM might provide an alternative viewpoint on the behavior of the model beyond just measuring the MSE.
\begin{figure}[h!]
        \centering
        \includegraphics[width=\linewidth, keepaspectratio]{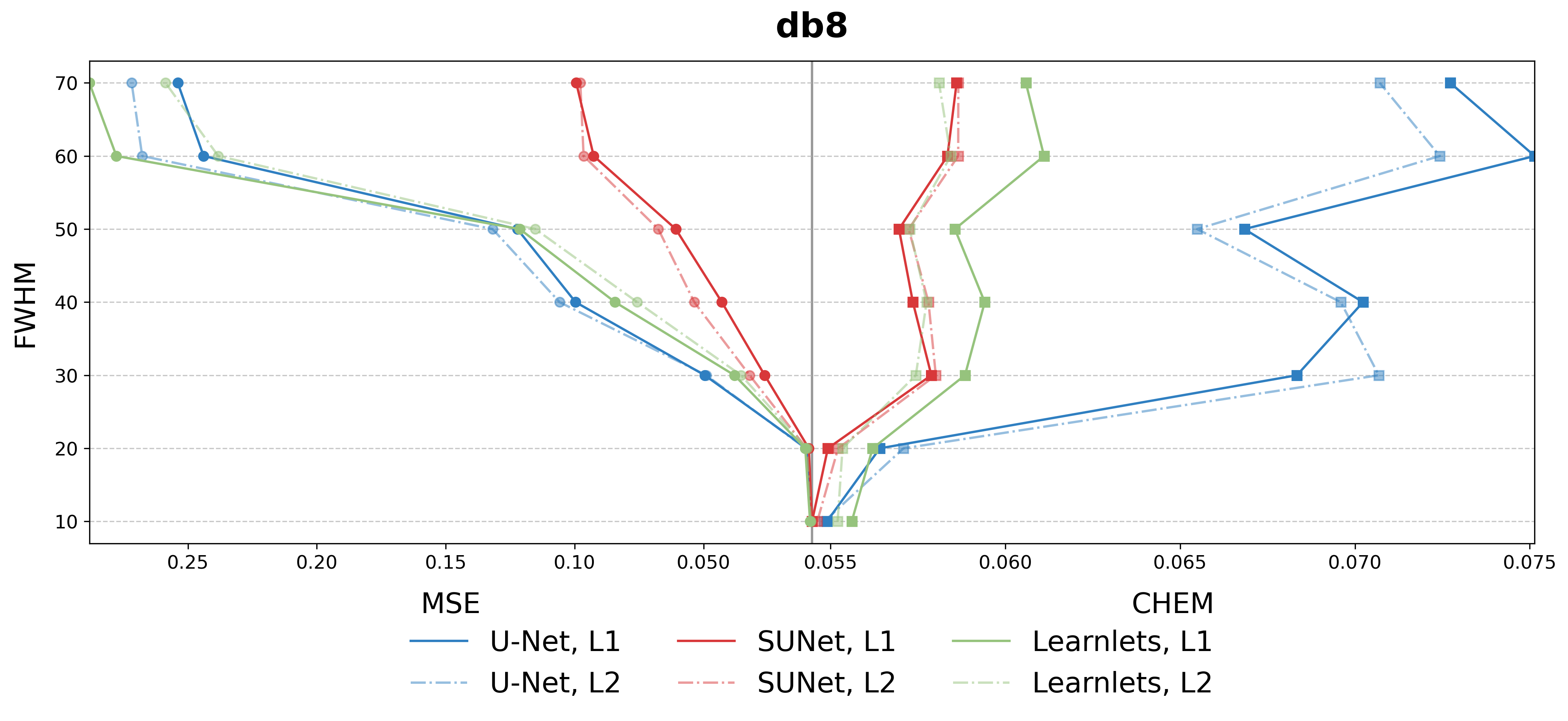}
    \caption{
        MSE/CHEM-FWHM curves. Analysis of the changes in MSE/CHEM with varying FWHM of disturbed inputs to test the robustness of U-shaped architectures.
    }
    \label{fig:FWHM-analysis}
\end{figure}

\subsubsection{Effect of dictionary choice on hallucination detection}

\label{Subsec: Hallucination under Varying Dictionaries}
To further analyze the impact of the chosen representation, we additionally considered Daubechies-4 (db4) and shearlets. Figure~\ref{fig:Varying Dictionaries: Pyramids} summarizes the results for these additional dictionaries. Although some variations can be observed when changing the transformation, the general trend remains consistent with that of Figure~\ref{fig:FWHM-analysis}: Learnlets- and SUNet-based deconvolution methods exhibit greater stability than U-Net under CHEM.
\begin{figure*}[h!]
    \centering
    \includegraphics[width=0.9\linewidth]{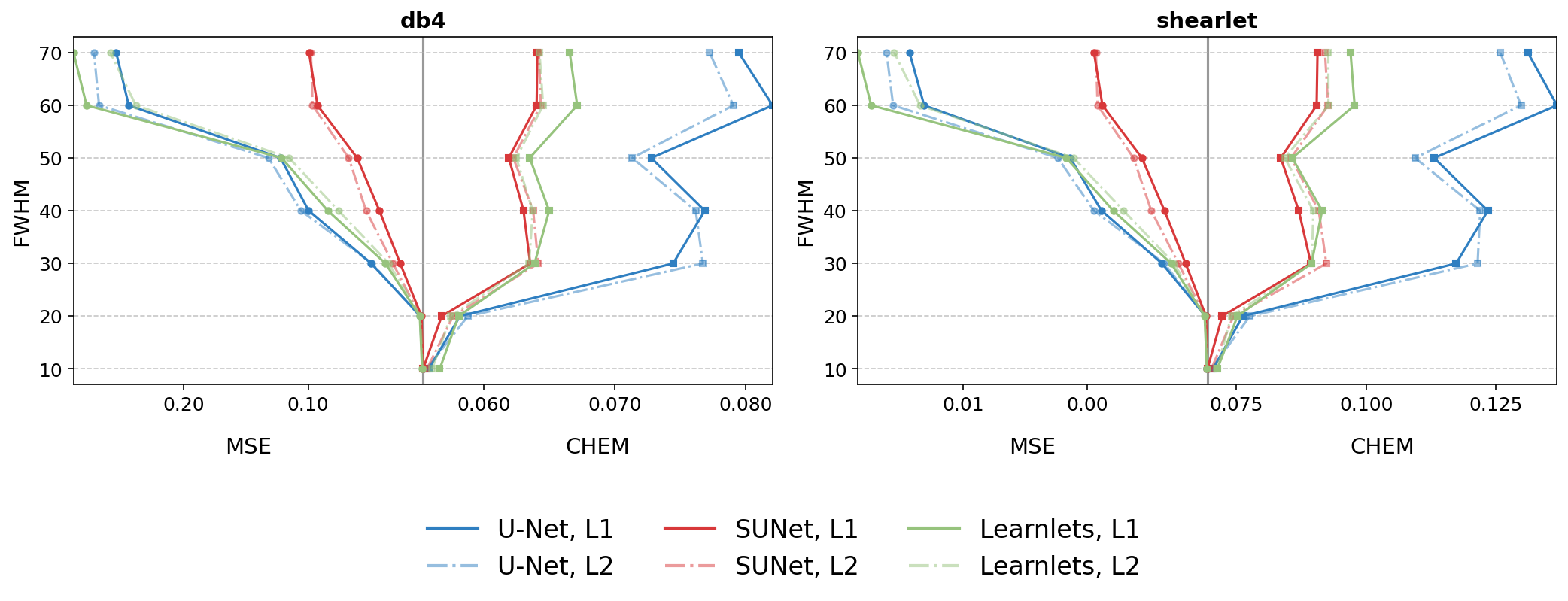}

    \caption{
        MSE/CHEM-FWHM curves under different dictionaries. 
        This figure illustrates the effect of the chosen representation.
    }
    \label{fig:Varying Dictionaries: Pyramids}
\end{figure*}
Visually, the effect of the chosen dictionary is illustrated in Figure~\ref{fig:cross-dict-hallucination}. The reconstructed maps obtained using db8 and shearlets provide a clearer representation of the hallucinations present in the prediction. This is because these representations are particularly well suited to capturing the structures characteristic of this application.
\begin{figure*}[h!]
    \centering
    \includegraphics[width=\linewidth]{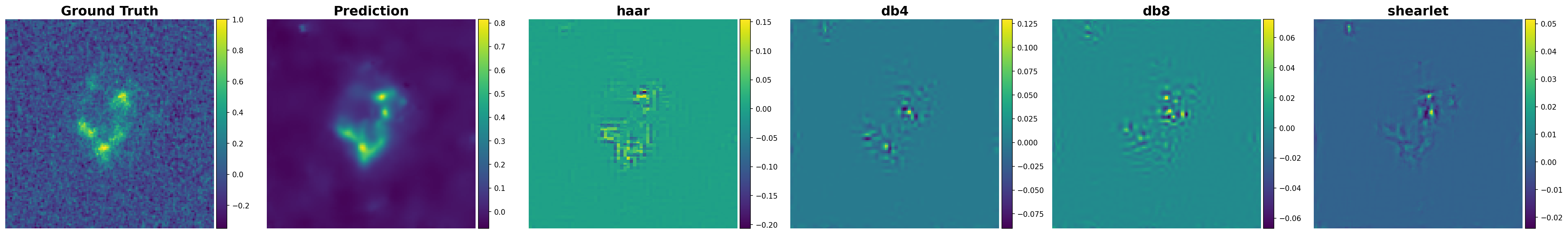}
    \caption{Quantifying hallucinations of a U-Net trained with $\ell_2$ loss on astronomical images using different dictionaries. Visually, db8 and shearlets provide a clearer representation of the hallucinations present in the prediction. The reconstructions correspond to the fine-scale coefficients of these dictionaries with high CHEM values. For details, please refer to Section \ref{sec: ExperimentalSetupAstro}. }
    \label{fig:cross-dict-hallucination}
\end{figure*}
\subsubsection{Hallucination-distortion tradeoff}
\label{subsec: Hallucination-Performance Tradeoff}
We now compare hallucination with training loss to examine CHEM during training. Figure~\ref{fig:MSE vs Hallucination Index} shows the evolution of hallucination with respect to the corresponding training loss over the first 300 epochs. The metrics are normalized per model using a min–max scaling. The tradeoff between MSE and CHEM is most evident for U-Net: beyond a certain number of epochs, further reductions in training loss come at the cost of increased hallucination. The tradeoff phenomenon of SUNet is also observed after about 200 epochs. In contrast, Learnlets exhibits the most stable behavior, maintaining low hallucination levels throughout training. It may, however, approach a tradeoff beyond 300 epochs.
\begin{figure}[h!]
    \centering
        \includegraphics[width=\linewidth, keepaspectratio]{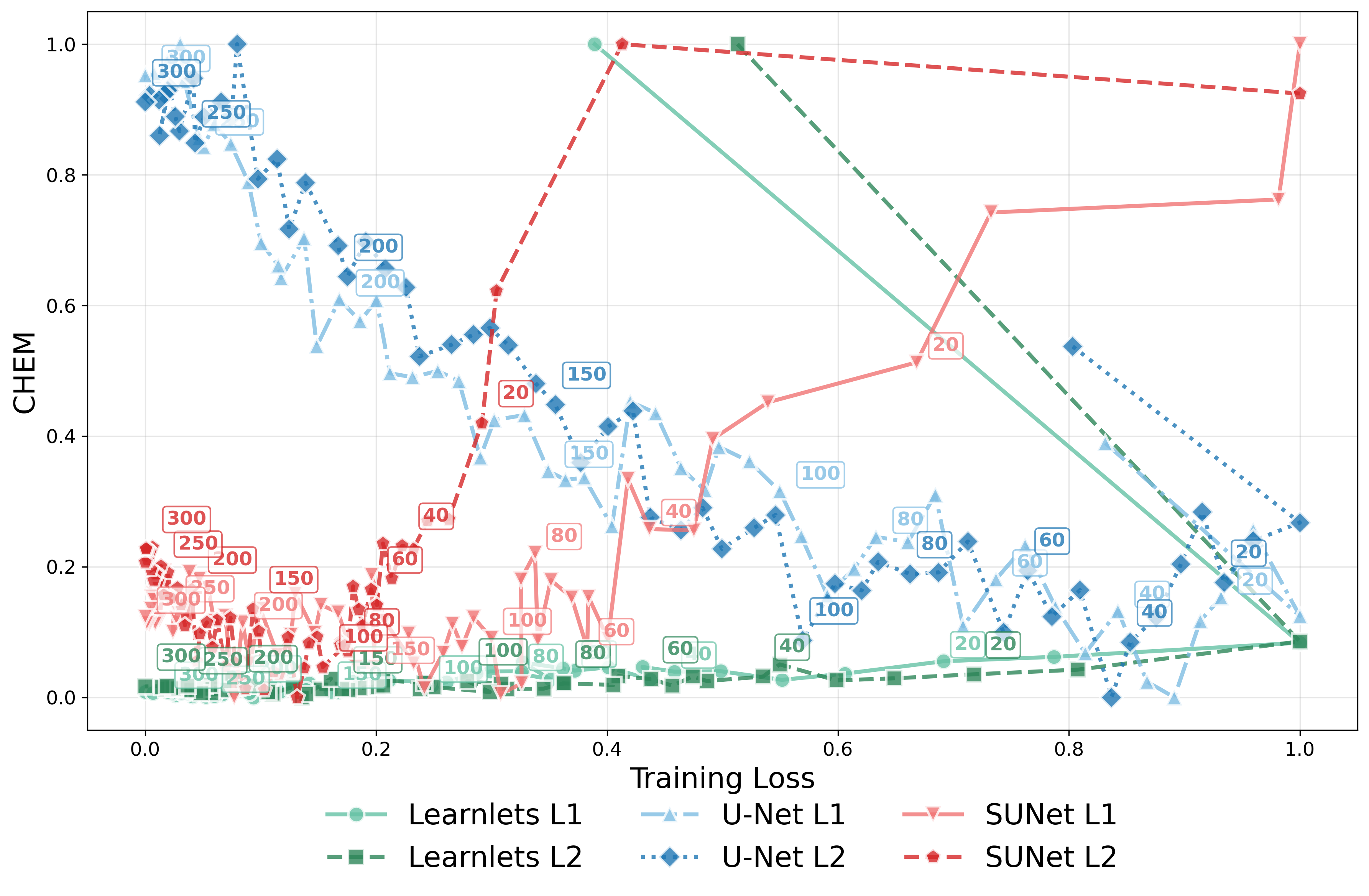}
    \caption{Evolution of the db8-based CHEM and training loss over the first 300 epochs, averaged across all test images. All metrics are normalized per model.}
    \label{fig:MSE vs Hallucination Index}
\end{figure}
These results align with the results reported in \cite{cohen_looks_2024}.

\subsection{Image superresolution on natural images}
We extend the evaluation to natural image super-resolution, using pre-trained models on the DIV2K dataset to test the generalizability of CHEM beyond the astronomical domain.
\subsubsection{Experimental setup}
\label{sec: N2 Experimental Setup}
\paragraph{Dataset}
For the super-resolution experiments we use the DIV2K dataset \cite{8014884}, a widely used benchmark for single-image super-resolution. The dataset consists of 1,000 high-resolution natural images (2K resolution), divided into 800 training images, 100 validation images, and 100 test images. Since the models we use in this setting are pre-trained, the dataset is only used for calibration and validation. Specifically, we extract 200 patches of size $256 \times 256$ from 50 images for calibration, and 50 patches of the same size from a disjoint set of 50 images for validation.

\paragraph{Image super-resolution}
The super-resolution problem consists of recovering a high-
resolution image \(x\) from a low-resolution observation \(y\). We model the forward process as
\begin{equation}
y = SBx ,
\end{equation}
where \(B\) denotes a blurring operator (convolution with a blur kernel) and \(S\) is a subsampling operator representing spatial downsampling. In our experiments we consider a scale factor of \(4\), corresponding to \(4\times\) spatial upsampling.

\paragraph{Models}
For the super-resolution experiments we evaluate several pretrained reconstruction models available in the \texttt{deepinv} library \cite{tachella2025deepinverse}. 
Specifically, we consider bicubic upsampling as a baseline, DRUNet-Pnp \cite{zhang_plug-and-play_2021}, Unfolded DRS (Douglas–Rachford Splitting) \cite{Douglas1956OnTN}, RAM (Reconstruct Anything Model) \cite{terris2025reconstruct}, and DPS (Diffusion Posterior Sampling) \cite{chung2022diffusion}.\\
Bicubic upsampling is a classical interpolation-based method used as a baseline in super-resolution. The high-resolution image is obtained by applying cubic convolution in both spatial dimensions, where each pixel is computed from a weighted \(4 \times 4\) neighborhood of the low-resolution image.
Plug-and-play (PnP) restoration methods formulate an inverse problem as the minimization of an objective functional consisting of a data-fidelity term $f(x,y)$ and a regularization term $\mathcal{R}(x)$ for input $y$ and ground truth $x$:
$\underset{x}{\text{argmin}} f(x,y) + \lambda \mathcal{R}(x).$ In the DRUNet-based PnP \cite{zhang2021plug} framework, model-based optimization is combined with learning-based priors. The reconstruction problem is addressed via a variable-splitting scheme, which decouples the data-fidelity term from the prior term. The regularization step is replaced by a deep convolutional denoiser. 
Specifically, DRUNet serves as the denoising operator within the iterative scheme. Architecturally, DRUNet combines U-Net–type multiscale feature extraction with residual connections.
An alternative strategy for integrating model-based optimization and learned priors is deep unfolding. Here, an iterative optimization algorithm, such as Douglas–Rachford splitting, is interpreted as a finite-depth network. The classical Douglas–Rachford scheme solves composite minimization problems by alternating proximal steps associated with the data-fidelity and regularization terms. Algorithm unrolling \cite{monga_algorithm_2020} interprets an iterative procedure that repeatedly applies an analytic operation \(h\) as a neural network, where each iteration is mapped to a single network layer and a finite number of such layers are stacked to form the architecture.
The Reconstruct Anything Model (RAM) \cite{terris2025reconstruct} is a foundation model for inverse problems built upon the DRUNet convolutional neural network architecture. Its key modification consists of a conditioning mechanism that incorporates the forward measurement operator \(A\) and the observations \(y\) via multigrid Krylov iterations. This design enables a single backbone model to be trained jointly across multiple imaging tasks. 
Diffusion Posterior Sampling (DPS) \cite{chung2022diffusion} generalizes diffusion-based generative models to the setting of noisy and potentially nonlinear inverse problems, by approximating the posterior sampling.
\begin{table}[ht]
\centering
\scriptsize
\caption{Inference time  of the evaluated reconstruction methods for single image.}
\begin{tabular}{lc}
\toprule
Method & Time (s) \\
\midrule
Bicubic        & 0.545 \\
DRUNet-PnP     & 8.494 \\
Unfolded-DRS   & 0.834 \\
RAM            & 6.073 \\
DPS            & 82.040 \\
\bottomrule
\end{tabular}
\label{tab:runtime}
\end{table}

\paragraph{Evaluation Metrics and CHEM computation}
We follow the same evaluation protocol established in
Section~\ref{sec:EvMetrics}: reconstruction accuracy is measured by
MSE, hallucination is quantified by CHEM (Algorithm~\ref{alg:CHEM},
$\alpha=0.01$, $\theta=1$).
Hallucination maps are produced using
the same per-coefficient standardization procedure. The threshold is set to $1.5$.
Again, restricting to the two
finest scales empirically yields the sharpest localization of
hallucinations.
 In Figure~\ref{fig: SR decomposition}, we apply a db8 wavelet decomposition to the image and show the corresponding MSE and hallucination maps across different scales to further illustrate this effect. 
To better visualize regions with high hallucination risk  in natural images, Figures~\ref{fig:CHEM Superresolution visualization} follows a similar procedure, now highlighting regions corresponding to coefficients exceeding a threshold of $1.5$, overlaid on the predicted image.

\subsubsection{Model comparison}

\begin{figure*}
    \centering
    \includegraphics[width=\linewidth]{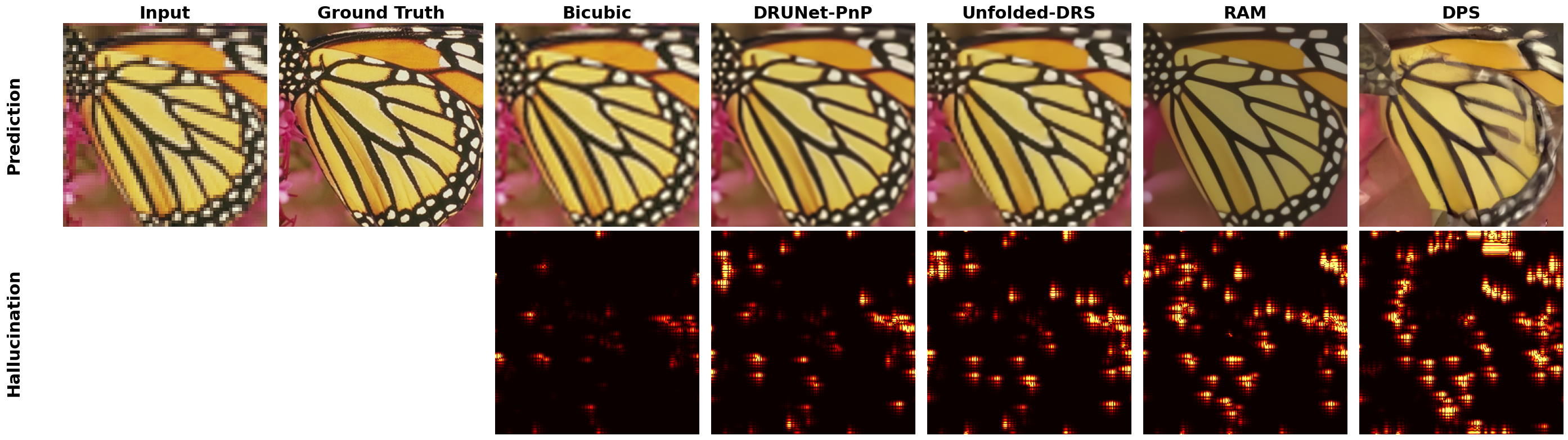}
    \caption{Quantifying hallucinations in $4\times$ super-resolution reconstructions produced by different models using db8 wavelets. The reconstructed images are shown in the first row, while the second row displays the corresponding CHEM values for fine-scale coefficients $\operatorname{H}^{\theta}(\Phi)_j$ exceeding a threshold. In particular, the DPS reconstruction exhibits substantially more high-CHEM regions than the classical bicubic baseline. For further details on the evaluation metrics and CHEM computation, see Section~\ref{sec: N2 Experimental Setup}.}
    \label{fig:N2 Predictions}
\end{figure*}
Figure~\ref{fig:N2 Predictions} illustrates regions with high CHEM values across different super-resolution models. The first row shows the reconstructed images, while the second row visualizes CHEM values corresponding to fine-scale db8 wavelet coefficients that exceed a threshold. The colorbar is fixed across models.

The DPS reconstruction exhibits more regions where fine-scale coefficients exceed the selected threshold. This observation suggests that the use of a threshold plays an important role when CHEM is employed for spatial localization of hallucinated structures, as opposed to dataset-level averages that capture the overall hallucinatory behaviour of a model.


\subsubsection{Effect of dictionary choice on hallucination detection}

\begin{figure}
    \centering
    \includegraphics[width=\linewidth]{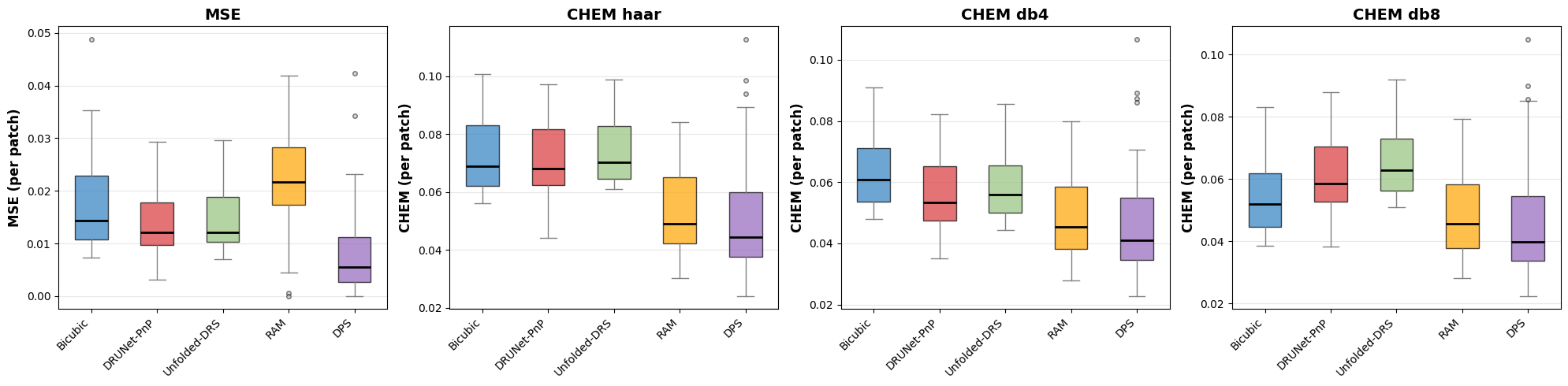}
    \caption{Comparison of reconstruction performance and hallucination levels across different super-resolution models under $4\times$ upsampling. The first panel shows the mean squared error (MSE), while the remaining panels report CHEM values computed using Haar, db4, and db8 wavelet dictionaries. For each image, the metric is first averaged over all pixels (for MSE) or transform coefficients (for CHEM) to obtain an image-level score. The boxplots summarize the distribution of these scores across the validation dataset.}
    \label{fig:boxplots}
\end{figure}

The first panel of Figure~\ref{fig:boxplots} reports the MSE distribution across models, while the remaining panels show the corresponding CHEM values computed using different dictionaries (Haar, db4, and db8). Although the average MSE per patch may suggest superior performance of DPS, an examination of the corresponding CHEM values, particularly the outliers, indicates weaker performance of this model with respect to hallucination. Conversely, when assessed solely by MSE, RAM appears to perform worse than competing methods. However, evaluation via CHEM places this model among the strongest performers, as it exhibits lower CHEM values. The absolute CHEM values vary slightly across dictionaries, however the overall trends remain consistent: models that exhibit stronger hallucination behavior under one dictionary tend to do so across the others as well. This suggests that while the choice of dictionary influences the sensitivity of CHEM to specific image structures, the qualitative comparison between models remains stable. In practice, the dictionary should therefore be chosen to match the structural characteristics of the images under consideration.

These results emphasize that MSE alone does not fully characterize hallucination behavior.

\subsubsection{Visual hallucination analysis}
Finally, we present an exemplary hallucination map. The map is constructed as mentioned in Section \ref{sec: N2 Experimental Setup}. In Figure \ref{fig:CHEM Superresolution visualization} (a), the first row displays the full image, while the second row shows a zoomed-in view of the top-right corner.
The MSE map captures all reconstruction errors without distinction. In contrast, CHEM selectively highlights regions with high hallucination risk. \\
In Rectangle $1$, the model generates a blob and a vertical, rounded structure that are not present in the ground truth. Moreover, the lower boundary is reconstructed with rounded contours rather than the straight edge observed in the ground truth. These regions are emphasized by CHEM. The sensitivity to such rounded structures is consistent with the use of the db8 wavelet dictionary for visualization.\\
In Rectangle $2$, the prediction alters the geometry of the black line, which becomes straight downward instead of first slightly upward and then downward as in the ground truth. Additionally, a white triangular texture is replaced by a smooth transition from a lighter to a darker shade. These structural modifications are likewise captured by CHEM.\\
In Rectangle $3$, the prediction introduces a triangular structure consisting of an inner black region and an outer white boundary, which are not present in the ground truth. These artificial edge patterns are highlighted by CHEM, which assigns elevated values to the corresponding region. Figure~\ref{fig:CHEM Superresolution visualization} (b) provides an additional example.
\begin{figure*}
    \centering
    \begin{subfigure}{\linewidth}
        \centering
        \includegraphics[width=\linewidth]{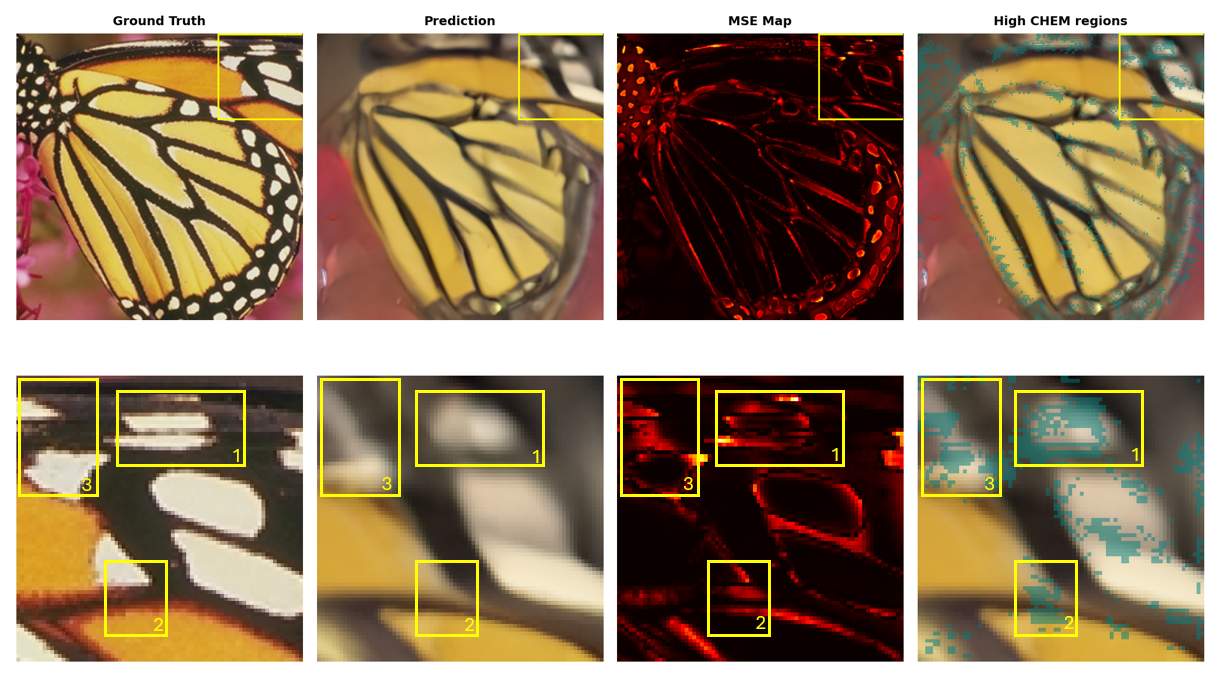}
        \caption{}
        \label{fig:CHEM SR viz A}
    \end{subfigure}
    
    \begin{subfigure}{0.98\linewidth}
        \centering
        \includegraphics[width=\linewidth]{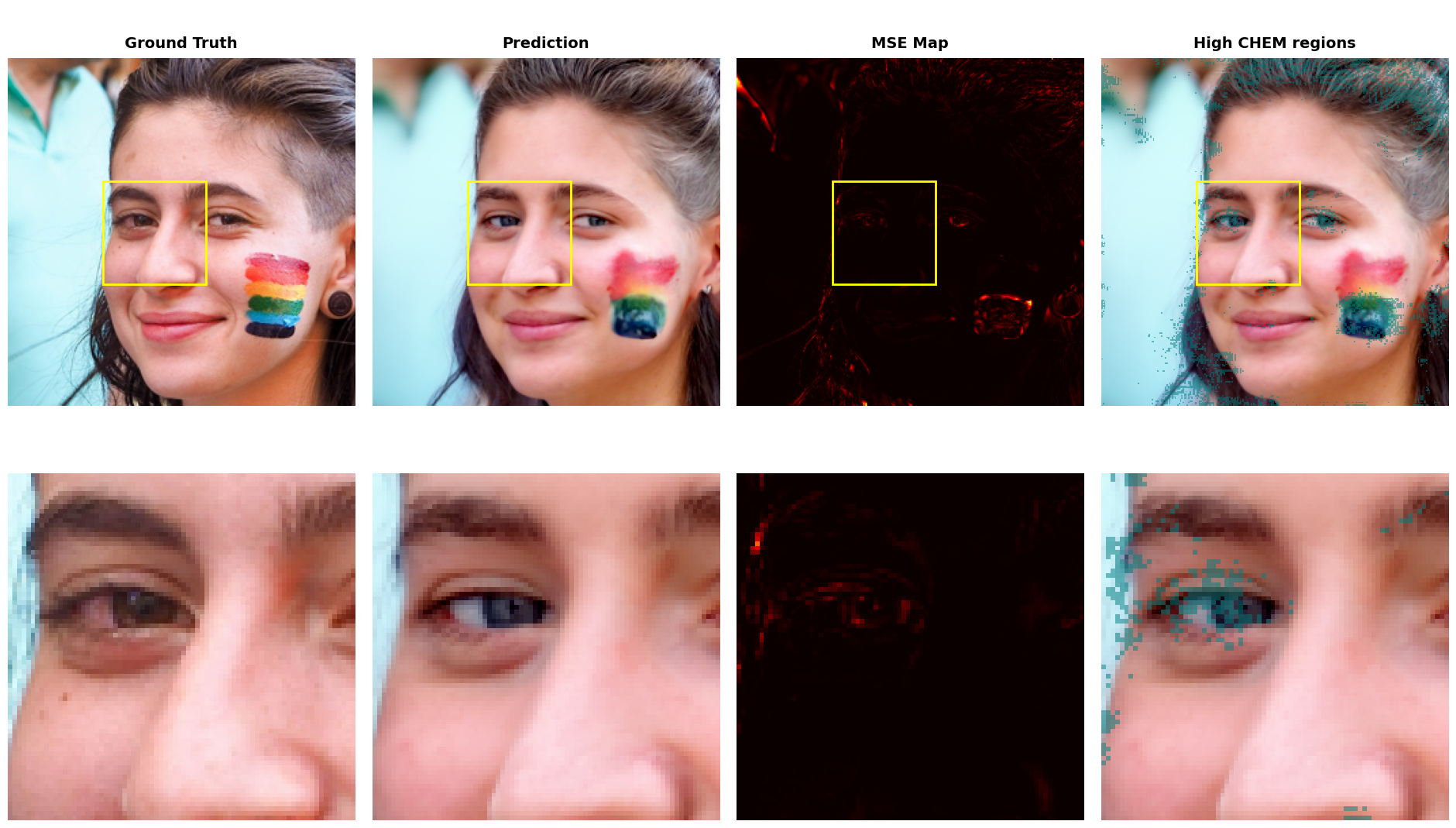}
        \caption{}
        \label{fig:CHEM SR viz B}
    \end{subfigure}
    
    \caption{Visualization of hallucination-prone regions in a $4\times$ 
    super-resolution reconstruction. From left to right: ground truth, 
    model prediction, MSE map, and regions where CHEM, computed using 
    db4 wavelets, exceeds a chosen threshold. The second row shows 
    zoomed-in views of the first row, with panel (b) illustrating that CHEM can identify localized deviations even when the corresponding MSE map has very low visual contrast. See Section~\ref{sec: N2 
    Experimental Setup} for details on the computation.}
    \label{fig:CHEM Superresolution visualization}
\end{figure*}

Figure~\ref{fig: SR decomposition} aims to further illustrate the relevance of CHEM at high-frequency levels of a decomposition. It shows the reconstructed image, the ground truth, and the corresponding total MSE and CHEM maps, followed by their db8 wavelet decompositions across scales. 
While both MSE and CHEM capture discrepancies between the reconstruction and the ground truth, CHEM becomes more informative at finer scales by more clearly localizing structural deviations introduced by the model, particularly in regions such as the painted cheek area, the ear, and the eye contours. Crucially, this illustrates that as the decomposition progresses toward finer levels, the measured hallucination becomes progressively more isolated and spatially concentrated, while MSE loses sensitivity. 

\begin{figure*}
    \centering
    \includegraphics[width=\linewidth]{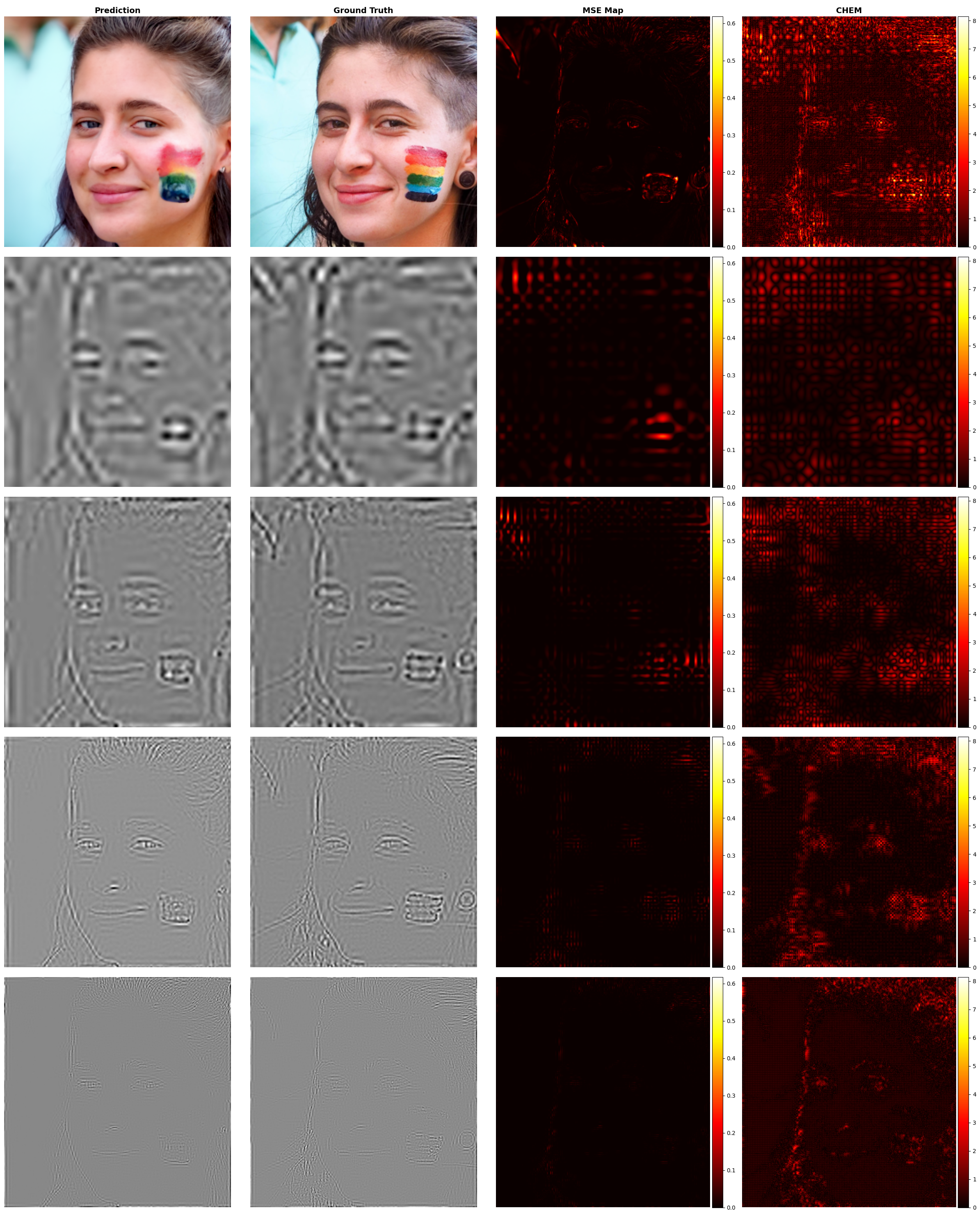}
    \caption{Hallucination map in a $4\times$ super-resolution reconstruction produced by DPS using a db8 wavelet decomposition. 
    From left to right: prediction, ground truth image, pixel-wise MSE map, and CHEM map. The rows below show the db8 wavelet decomposition of each quantity across successive levels, from coarse to fine scales. At finer scales, CHEM highlights localized structural deviations introduced by the model that are less clearly captured by the MSE, as reflected by the nearly empty fine-scale MSE maps.}
    \label{fig: SR decomposition}
\end{figure*}

\subsection{Comparison with existing approaches}
The metrics mentioned in Section \ref{sec: Related Work} differ in several aspects: the application they are tailored for \cite{akhaury_ground-based_2024,kolek_explaining_2023}, the distribution assumptions they rely on \cite{tivnan_hallucination_2024}
, or whether they provide a whole-model measure \cite{cohen_looks_2024}, a per-image score \cite{ren2026hallucinationscoremitigatinghallucinations}, or a spatial map of hallucination-prone regions \cite{bhadra_hallucinations_2021}. A fundamental challenge in the hallucination field is the absence of ground-truth hallucinations: our interpretation of hallucination depends on human perception and domain knowledge. A fundamental challenge in the hallucination field is the absence of ground-truth hallucinations: our interpretation depends on human perception and domain knowledge.
Accordingly, within the body of existing literature, we benchmark our method against: (i) the widely recognized null-space hallucination map proposed in \cite{bhadra_hallucinations_2021}, and (ii) the uncertainty heatma introduced by \cite{angelopoulos2022image}. We assess these metrics by comparing the hallucinations they identify, which allows us to situate our metric within existing work and to gauge how effective these metrics are at detecting localized artifacts.

\subsubsection{Comparison metrics}
\paragraph{Null-space hallucination map (Null-HM)}
Bhadra et al.\ \cite{bhadra_hallucinations_2021} propose to decompose a
reconstructed image into generalized measurement-space and null-space
components via the singular value decomposition (SVD) of the forward
operator.  The null-space hallucination map captures reconstruction errors
attributable to the imposed prior by comparing the null-space component of
the prediction against that of the ground truth.  We note that the
computation requires either an explicit SVD of the forward operator or
direct access to its (pseudo)inverse.  In our applications either one is
available, but this requirement may limit applicability to more general
reconstruction frameworks.

\paragraph{Risk-controlling prediction sets (RCPS)}
Angelopoulos et al.~\cite{angelopoulos2022image} construct pixelwise
confidence intervals with a distribution-free statistical guarantee: given a user-specified risk level~$\alpha$ and confidence level~$1-\delta$, the
intervals contain at least a fraction~$1-\alpha$ of the ground-truth pixel
values with probability at least~$1-\delta$.  The method proceeds in two
stages: a heuristic interval is first produced by a network trained to
predict both predictions and interval widths, and is then calibrated
into risk-controlling prediction sets by scaling with a factor
$\hat{\lambda}$ determined via the Hoeffding--Bentkus upper confidence bound (UCB)~\cite[Algorithm~2]{angelopoulos2022image}.
The original formulation trains the heuristic interval jointly with the
reconstruction model, which is impractical when comparing across multiple
pretrained models.  We therefore adopt the residual-magnitude heuristic
of~\cite[Section~2.1.1]{angelopoulos2022image} in a post-hoc fashion: a
separate lightweight U-Net is trained to
predict the pixelwise absolute residual $|f(x) - y|$ from the
reconstruction model's output, using the first half of the calibration data
described in Section~\ref{sec: ExperimentalSetupAstro}.  The second half is
used to calibrate the resulting heuristic intervals via the
Hoeffding--Bentkus UCB, following~\cite[Algorithm~2]{angelopoulos2022image}.

\paragraph{Remarks}
We highlight that, CHEM is computationally more efficient than both alternatives, requiring neither the SVD decomposition of \cite{bhadra_hallucinations_2021} nor the residual prediction network of \cite{angelopoulos2022image}, making it particularly well-suited for high-dimensional tasks. We also note that, unlike RCPS which operates at the pixel level, CHEM operates in the wavelet or shearlet domain, and may therefore concentrate its detection on spatially coherent, localized artifacts rather than distributing scores uniformly across the image. 
In what follows, we focus on the quality of hallucination detection. 

\subsubsection{Spatial comparison measures}

We quantify the spatial relationship between hallucination maps using two
measures, described in the following.


\paragraph{Thresholded intersection-over-union (IoU)}
For a percentile threshold $q \in (0,100)$, let \newline
$B_q(\phi) = \{i :|\phi_i| \geq \mathrm{percentile}_q(|\phi|)\}$ denote the set of pixels
exceeding the $q$-th percentile of absolute values.  The IoU between two
maps at threshold $q$ is
\begin{equation*}
  \mathrm{IoU}_q(\phi,\psi)
  \;=\;
  \frac{|B_q(\phi) \cap B_q(\psi)|}{|B_q(\phi) \cup B_q(\psi)|}\,.
\end{equation*}
The chance-level
IoU, under the assumption of drawing uniformly at random from $\{1,\dots,N\}$ with no spatial
dependence, is given by 
\begin{equation*}
  \mathrm{IoU}_q^{\mathrm{ch}}
  \;=\; \frac{100-q}{100+q}.
\end{equation*}
 To facilitate
interpretation, we report the ratio
$\mathrm{IoU}_q\!/\,\mathrm{IoU}_q^{\mathrm{ch}}$ alongside each
measured IoU value; a ratio of~$1$ indicates agreement no better than
independent random flagging.

\paragraph{Entropy-based effective extent}
Following~\cite{kolek_explaining_2023}, we quantify the spatial concentration
of each map as follows.  Given a
spatial map $\phi$ with pixel values $\{\phi_i\}_{i=1}^N$, we define the
energy distribution $p_i = |\phi_i|^2 \big/ \sum_{j=1}^N |\phi_j|^2$ and
compute the exponential of the entropy $\exp\!\Bigl(-\sum_{i=1}^{N} p_i \ln p_i\Bigr)$, called the \textit{extent} \cite{campbell_exponential_1966}.
The extent can be interpreted as the effective number of pixels over
which the map distributes its energy: a map concentrating all energy on a
single pixel has $\mathrm{Extent} = 1$, while a spatially uniform map
attains $\mathrm{Extent} = N$.  The \emph{relative extent},
$\mathrm{Extent}(\phi)/N$, provides a measure of
the fraction of the image effectively covered by the map.
\begin{figure*}[h!]
    \centering
    \includegraphics[width=\linewidth]{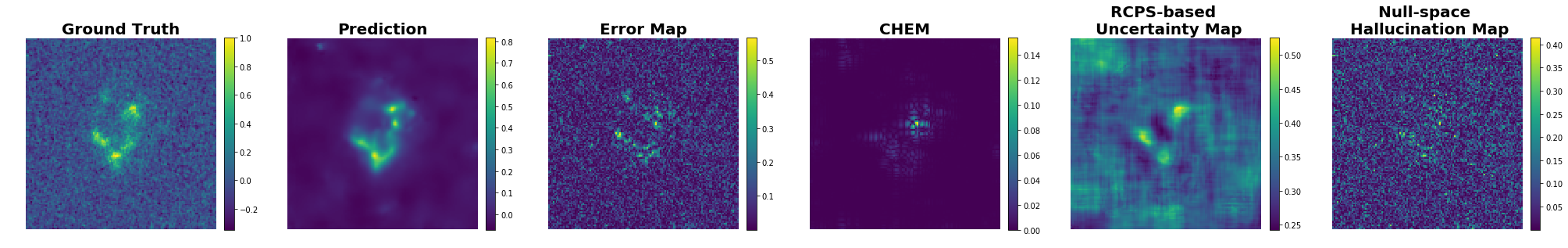}
    \caption{Representative comparison of spatial maps for the astronomical
    deconvolution setting. CHEM maps exhibit visibly higher spatial
    concentration than both RCPS and Null-HM.}
    \label{fig:A1 Comparison}
\end{figure*}
\begin{table*}[h!]
\centering

\begin{minipage}{0.68\textwidth}
\centering
\caption{Thresholded intersection-over-union between top-flagged pixels for the
astronomical deconvolution setting. Each cell reports
$\mathrm{IoU}_q\;(r\!\times)$, where $r :=
\mathrm{IoU}_q/\mathrm{IoU}_q^{\mathrm{ch}}$.}
\label{tab:iou}
\small
\resizebox{\linewidth}{!}{%
\begin{tabular}{llccc}
\toprule
& & \textbf{90th} & \textbf{95th} & \textbf{99th} \\
& & {\scriptsize($\mathrm{IoU}^{\mathrm{ch}} = 0.053$)}
  & {\scriptsize($\mathrm{IoU}^{\mathrm{ch}} = 0.026$)}
  & {\scriptsize($\mathrm{IoU}^{\mathrm{ch}} = 0.005$)} \\
\midrule
\multicolumn{2}{c}{Null-HM vs. RCPS}
  & 0.07\;(1.4$\times$) & 0.05\;(2.1$\times$) & 0.04\;(7.6$\times$) \\
\midrule
\multirow{3}{*}{RCPS vs.}
& \ CHEM (Haar)
  & 0.11\;(2.1$\times$) & 0.12\;(4.6$\times$) & 0.11\;(21.3$\times$) \\
& \ CHEM (db4)
  & 0.17\;(3.2$\times$) & 0.18\;(7.1$\times$) & 0.13\;(26.9$\times$) \\
& \ CHEM (db8)
  & 0.15\;(2.9$\times$) & 0.13\;(5.2$\times$) & 0.06\;(12.2$\times$) \\
\midrule
\multirow{3}{*}{Null-HM vs.} 
& \ CHEM (Haar)
  & 0.10\;(1.9$\times$) & 0.07\;(2.8$\times$) & 0.05\;(10.8$\times$) \\
& \ CHEM (db4)
  & 0.09\;(1.7$\times$) & 0.07\;(2.7$\times$) & 0.05\;(10.9$\times$) \\
& \ CHEM (db8)
  & 0.09\;(1.7$\times$) & 0.06\;(2.4$\times$) & 0.03\;(5.6$\times$) \\
\bottomrule
\end{tabular}%
}
\end{minipage}
\hfill
\begin{minipage}{0.28\textwidth}
\centering
\caption{Entropy-based relative extent on a single image --- U-Net L2, image deconvolution.}
\label{tab:extent}
\small
\begin{tabular}{lc}
\toprule
\textbf{Map} &  \textbf{Relative extent} \\
\midrule
RCPS                & $0.977$ \\
Null-HM             & $0.465$ \\
$|\text{Error}|$    & $0.431$ \\
\midrule
CHEM (Haar)         & $0.024$ \\
CHEM (db4)          & $0.021$ \\
CHEM (db8)          & $0.015$ \\
CHEM (db12)         & $0.027$ \\
\bottomrule
\end{tabular}
\end{minipage}

\end{table*}

\begin{figure}[h!]
\centering
  \centering
  \includegraphics[width=\linewidth]{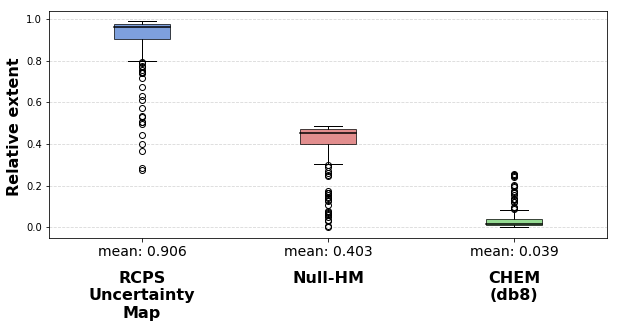}
  \caption{Distribution of entropy-based relative extent across the held-out test set for the U-Net astronomical image deconvolution setting.}
  \label{fig:A1_Comparison_Boxplots}
\end{figure}
\subsubsection{Results}
We focus on one model per application in order to compare the behavior of
the different spatial metrics rather than benchmark model performance.  
For astronomical image deconvolution, we consider a U-Net trained with $\ell_2$ loss.
Figure~\ref{fig:A1 Comparison} displays an example of the
three spatial maps alongside the reconstruction and ground truth for the
astronomical deconvolution setting. CHEM is strongly localized, and there is some agreement between the three maps. We explore these two observations further in Tables~\ref{tab:iou} and~\ref{tab:extent}.

Table~\ref{tab:iou} presents $\mathrm{IoU}_q$ and $\mathrm{IoU}_q^{\mathrm{ch}}$ for the example shown in Figure~\ref{fig:A1 Comparison}, computed between the different maps at the 90th, 95th, and 99th percentile thresholds. Two patterns emerge.  First, all Null-HM vs.\ CHEM
$\mathrm{IoU}_q$ values exceed $ \mathrm{IoU}_q^{\mathrm{ch}}$ at every percentile thresholds, with the $\mathrm{IoU}_q\!/\,\mathrm{IoU}_q^{\mathrm{ch}}$
ratio increasing from approximately $2$ at the 90th percentile to
$5$--$11$ at the 99th percentile.  This indicates that the two
hallucination metrics agree more strongly on the regions most prone to hallucinations
than on moderate ones, precisely the regime of greatest practical
concern. Second, RCPS exhibits higher overlap with CHEM than with Null-HM across all thresholds, suggesting that regions of very high uncertainty tend to coincide with hallucination-prone regions identified by CHEM.
\begin{figure*}[h!]
    \centering
    \includegraphics[width=\linewidth]{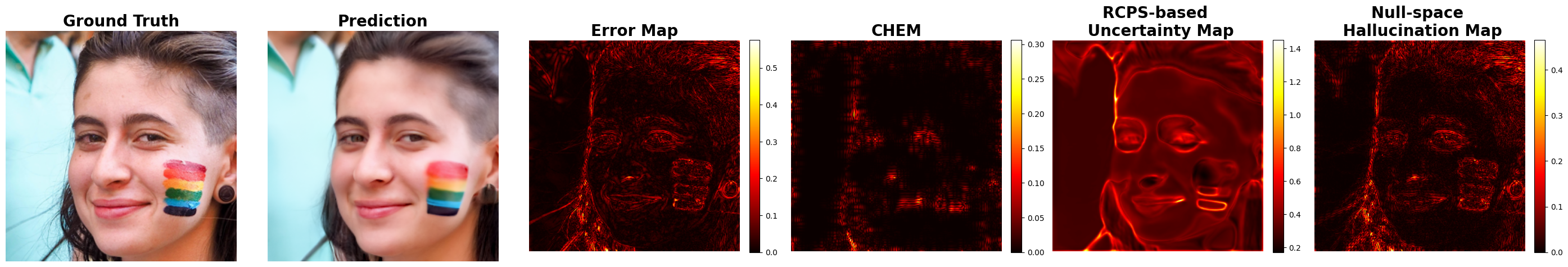}
    \caption{Representative comparison of spatial maps for the DRUNet-PnP
    $4\times$ super-resolution setting.}
    \label{fig:N2_comparison}
\end{figure*}

\begin{table*}[h!]
\centering

\begin{minipage}{0.68\textwidth}
\centering
\caption{Thresholded intersection-over-union between top-flagged pixels ---
DRUNet-PnP $4\times$ super-resolution. Each cell reports
$\mathrm{IoU}_q\;(r\!\times)$, where $r :=
\mathrm{IoU}_q/\mathrm{IoU}_q^{\mathrm{ch}}$.}
\label{tab:sr_iou}
\small
\resizebox{\linewidth}{!}{%
\begin{tabular}{llccc}
\toprule
& & \textbf{90th} & \textbf{95th} & \textbf{99th} \\
& & {\scriptsize($\mathrm{IoU}^{\mathrm{ch}} = 0.053$)}
  & {\scriptsize($\mathrm{IoU}^{\mathrm{ch}} = 0.026$)}
  & {\scriptsize($\mathrm{IoU}^{\mathrm{ch}} = 0.005$)} \\
\midrule
\multicolumn{2}{c}{Null-HM vs. RCPS}
  & 0.19\;(3.6$\times$) & 0.12\;(4.7$\times$) & 0.04\;(8.4$\times$) \\
\midrule
\multirow{3}{*}{RCPS vs.}
& \ CHEM (Haar)
  & 0.23\;(4.3$\times$) & 0.18\;(7.2$\times$) & 0.10\;(19.3$\times$) \\
& \ CHEM (db4)
  & 0.20\;(3.8$\times$) & 0.14\;(5.6$\times$) & 0.04\;(8.9$\times$) \\
& \ CHEM (db8)
  & 0.18\;(3.4$\times$) & 0.13\;(5.1$\times$) & 0.08\;(15.5$\times$) \\
\midrule
\multirow{3}{*}{Null-HM vs.} 
& \ CHEM (Haar)
  & 0.22\;(4.2$\times$) & 0.15\;(6.0$\times$) & 0.06\;(12.8$\times$) \\
& \ CHEM (db4)
  & 0.20\;(3.8$\times$) & 0.14\;(5.3$\times$) & 0.06\;(11.6$\times$) \\
& \ CHEM (db8)
  & 0.19\;(3.6$\times$) & 0.13\;(5.2$\times$) & 0.08\;(15.4$\times$) \\
\bottomrule
\end{tabular}%
}
\end{minipage}
\hfill
\begin{minipage}{0.28\textwidth}
\centering
\caption{Entropy-based relative extent on a single image --- DRUNet-PnP $4\times$ super-resolution.}
\label{tab:sr_extent}
\small
\begin{tabular}{lc}
\toprule
\textbf{Map} & \textbf{Relative extent} \\
\midrule
RCPS              & $0.854$ \\
$|\text{Error}|$  & $0.195$ \\
Null-HM           & $0.164$ \\
\midrule
CHEM (Haar)       & $0.057$ \\
CHEM (db4)        & $0.072$ \\
CHEM (db8)        & $0.079$ \\
\bottomrule
\end{tabular}
\end{minipage}

\end{table*}

\begin{figure}[h!]
  \centering
  \includegraphics[width=\linewidth]{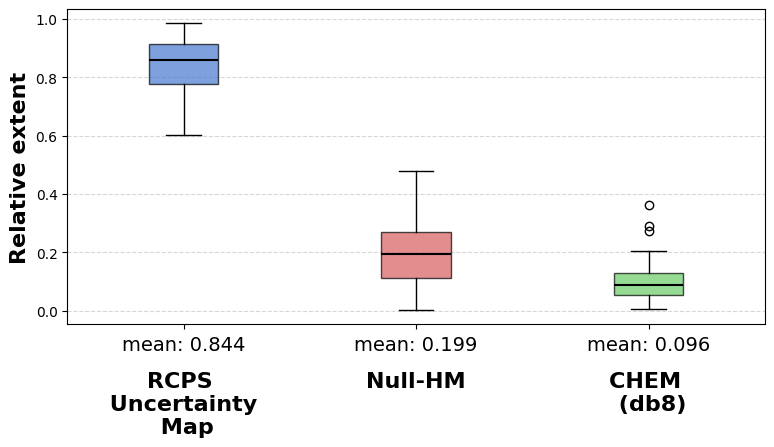}
  \caption{Distribution of entropy-based relative extent across the held-out test set for the DRUNet-PnP $4\times$ super-resolution setting.}
  \label{fig:N2_boxplots}
\end{figure}

Table~\ref{tab:extent} reports the entropy-based effective extent for all
maps for the example displayed in Figure \ref{fig:A1 Comparison}.  The RCPS uncertainty map exhibits a relative extent of~$0.98$, 
confirming that model uncertainty is distributed nearly uniformly across the
image without concentrating on specific regions.  The null-space
hallucination map and the absolute error map,  defined as the pixel-wise absolute difference between the reconstruction and the ground truth, show comparable extents ($0.46$
and~$0.43$, respectively), indicating moderate spatial concentration.
In contrast, CHEM maps achieve relative extents between~$0.015$
and~$0.027$, concentrating their energy on approximately $1.5$--$2.7\%$ of
the image. Figure \ref{fig:A1_Comparison_Boxplots} presents the distribution of relative extent across a held-out test set of $200$ images and confirms these observations.

We repeat the spatial comparison on a natural-image super-resolution task
using a  \newline DRUNet - PnP model with $4\times$ upsampling.

Table~\ref{tab:sr_iou} reports $\mathrm{IoU}_q$ and $\mathrm{IoU}_q^{\mathrm{ch}}$ at three percentile thresholds for the example shown in Figure~\ref{fig:N2_comparison}. A similar overall pattern in the astronomical context appears: all $\mathrm{IoU}_q$ scores are above $\mathrm{IoU}_q^{\mathrm{ch}}$, and the overlap between RCPS and CHEM remains higher than that between RCPS and Null-HM at most thresholds. This again indicates that areas of elevated uncertainty frequently align with regions that are susceptible to hallucinations.

Table~\ref{tab:sr_extent} reports the entropy-based effective extent for the example shown in Figure \ref{fig:N2_comparison}.  CHEM maps achieve relative extents of $0.057$--$0.079$, approximately $2$--$3\times$ larger than in the astronomical setting ($0.015$--$0.027$), but still $2$--$3\times$ more concentrated than Null-HM ($0.16$) and $11$--$15\times$ more concentrated than RCPS ($0.85$).
The ordering RCPS $\gg$ Null-HM $>$ CHEM is preserved across both settings. Figure \ref{fig:N2_boxplots}  validates these findings across the held-out test set of $200$ images, demonstrating consistent relative extent distributions.

The above results highlight the different design objectives of the three tools.
CHEM is constructed to localize individual hallucinated structures at
specific scales and orientations, yielding sparse, highly concentrated maps.
RCPS quantifies global posterior variability and is not intended to isolate
individual artifacts.  The null-space hallucination map occupies an intermediate
position: it targets prior-induced errors specifically, but its
pixel-domain formulation distributes energy more broadly than the CHEM maps.  Across both imaging settings, the spatial
concentration ordering and the pairwise overlap patterns between maps are
preserved.  Taken together, these findings suggest
that CHEM and uncertainty quantification capture complementary aspects of
reconstruction quality, and that hallucination-specific detection can reveal
structured artifacts that model uncertainty alone does not spatially
isolate.


\section{Conclusions and Discussion}

In this paper, we introduced a new theoretical framework to quantify and understand hallucinations in image processing tasks. Using conformalized quantile regression, we developed a model-agnostic, distribution-free hallucination measure (CHEM) based on wavelet and shearlet representations. We theoretically demonstrated that CHEM exhibits exponential sensitivity to artifacts and is closely related to the MSE error. In particular, within hallucination-prone regions, CHEM can be lower bounded by MSE up to certain constant factors. This observation motivated our focus on analyzing MSE error and offers additional insight into why neural networks may generate artifacts.
By studying U-shaped 
architectures, we pinpointed various factors that lead to these unrealistic 
artifacts. We carried out our experiments on two distinct imaging tasks and applied a range of deep learning models for the analysis, highlighting the utility and importance of CHEM in detecting and quantifying artifacts.

Several directions remain open for future work. The proposed approach 
relies on predefined dictionaries; for images from diverse domains, 
automatically adapting the dictionary to the relevant image class could 
enhance the efficiency of hallucination detection. Furthermore, our experiments 
reveal a tradeoff between hallucination and performance, whose underlying 
causes merit deeper investigation. Finally, extending this analysis to 
domains where reconstruction consistency is critical, such as medical 
imaging, is a natural and important next step.

\section*{Acknowledgement}
\addcontentsline{toc}{section}{Acknowledgement}

J. Li gratefully acknowledges support from the project CONFORM, funded by the 
German Federal Ministry of Education and Research (BMBF), as well as from the 
German Research Foundation under the Grant DFG-SPP-2298. Furthermore, J. Li 
acknowledges additional support from the project ``Next Generation AI Computing 
(gAIn)'', funded by the Bavarian Ministry of Science and the Arts and the Saxon 
Ministry for Science, Culture, and Tourism.
I. Rosellon-Inclan gratefully acknowledges support from the project CONFORM, 
funded by the German Federal Ministry of Education and Research (BMBF).
The work of G. Kutyniok was supported in part by the Konrad Zuse School of 
Excellence in Reliable AI (DAAD), the Munich Center for Machine Learning (MCML), 
as well as the German Research Foundation under Grants DFG-SPP-2298, KU 1446/31-1 
and KU 1446/32-1. Furthermore, G. Kutyniok acknowledges additional support from 
the project ``Next Generation AI Computing (gAIn)'', funded by the Bavarian 
Ministry of Science and the Arts and the Saxon Ministry for Science, Culture, 
and Tourism, as well as by the Hightech Agenda Bavaria. The work of J. Starck was supported by the TITAN ERA Chair project (contract no. 101086741) within the Horizon Europe Framework Programme of the European Commission, and additionally by the Agence Nationale de la Recherche (ANR-22-CE31-0014-01 TOSCA).

 {
     \small
     \bibliographystyle{ieeenat_fullname}
     \bibliography{references}
 }


\section{Theoretical Proofs}\label{app:proofs}

\subsection{Notations}

We denote $\|\cdot\|_p$, $p\in[1,\infty]$ as the vector norm. Let $\Omega \subset \RR^d$ be a compact set. The $L_p(\Omega)$ space contains measurable functions that have a finite $L_p$ norm
$    \|f\|_{L_p(\Omega)} = \left( \int_{\Omega} |f(x)|^p dx \right)^{1/p} <\infty$. For $p=\infty$, we define $\|f\|_{L_\infty(\Omega)}:=\operatorname{ess\,sup}|f|$.

Since $L_2(\Omega)$ is a Hilbert space, we denote its inner product as
$\langle f_1, f_2 \rangle := \int_{\Omega} f_1(x) f_2(x) d x$.
Throughout our proofs, we usually use $\Omega = [-1,1]^d$ and, for simplicity, write $\|\cdot\|_{L_p}:= \|\cdot\|_{L_p(\Omega)}$.
We consider $\X, \Y \subset L_2(\Omega)$ are compact sets of $L_2(\Omega)$.
The modulus of continuity of a general continuous mapping $\M:L_\infty([-1,1]^d) \rightarrow L_\infty([-1,1]^d)$ is defined as
\begin{align*}
    \omega_{\M}(r) :=  \sup_{\substack{f_1,f_2 \in L_2([-1,1]^d),\\ \|f_1 - f_2\|_{L_\infty} \leq r
     }} \|\M(f_1) - \M(f_2) \|_{L_\infty}.
\end{align*}

\subsection{Proof of Proposition~\ref{prop:hoeffding}}
\label{proof:hoeffding}
We use Hoeffding's inequality to establish Proposition~\ref{prop:hoeffding}.
\begin{proof}[Proof of Proposition~\ref{prop:hoeffding}]
    Let $$S_M = \sum_{m=1}^M H^{\theta}( X_m, Y_m)_j.$$
    Since $0\leq H^{\theta}( X, Y)_j\leq \theta$, Hoeffding's inequality implies that
    \begin{align*}
        \PP \left( |S_M - \EE S_M|\geq t \right) \leq 2 \exp\left\{-\frac{2 t^2}{M\theta^2}\right\}.
    \end{align*}
    Substituting $s = t/M$ into the above inequality, we further have
    \begin{align*}
        \PP \left( \left|\frac{1}{M}S_M - \EE \left[ \frac{1}{M} S_M\right]\right|\geq s \right) \leq 2 \exp\left\{-\frac{2M s^2}{\theta^2}\right\}.
    \end{align*}
    Let $s = \sqrt{\frac{\theta^2\log(2/\delta)}{2M}}$. Notice that $\EE \left[ \frac{1}{M} S_M\right] = \EE H^\theta(X,Y)_j = \operatorname{H}^\theta(\Phi)_j$. Hence we finally obtain
    \begin{align*}
        \PP \left( \left|\frac{1}{M}S_M - \EE \left[ \frac{1}{M} S_M\right]\right|\geq \sqrt{\frac{\theta^2\log(2/\delta)}{2M}} \right) \leq \delta.
    \end{align*}
    The proof is completed by applying similar steps for $\operatorname{H}^\theta(\Phi)$.
    
\end{proof}

\subsection{Proof of Theorem~\ref{thm:chem_sensitivity}}
\label{proof:chem_sensitivity}
Before proving Theorem~\ref{thm:chem_sensitivity}, we begin by introducing the precise definitions and key properties of the sub-Gaussian distributions that will be employed.
Sub-Gaussian random variables are distinguished by having tails that decay at least as rapidly as those of a Gaussian distribution. Their most precise characterization is given in terms of the $\psi_2$-norm.
\begin{definition}[Sub-Gaussian Norm]
The sub-Gaussian norm of a random variable $X$, denoted by $\|X\|_{\psi_2}$, is defined as:
\begin{equation*}
    \|X\|_{\psi_2} = \inf \left\{ K > 0 : \mathbb{E}\left[\exp\left(X^2/K^2\right)\right] \leq 2 \right\}.
\end{equation*}
\end{definition}
If $\|X\|_{\psi_2}$ is finite, $X$ satisfies the tail inequality $\mathbb{P}(|X| \geq t) \leq 2 \exp(-c t^2 / \|X\|_{\psi_2}^2)$ for some constant $c > 0$.
The extension to multivariate distributions is achieved by considering projections onto the unit sphere.
\begin{definition}[Sub-Gaussian Vector]
A random vector $\eta \in \mathbb{R}^d$ is sub-Gaussian if for every $v \in \mathbb{R}^d$, the scalar random variable $\langle \eta, v \rangle$ is sub-Gaussian. The sub-Gaussian norm of the vector $\eta$ is defined as:
\begin{equation}
    \|\eta\|_{\psi_2} = \sup_{\|v\|_2=1} \|\langle \eta, v \rangle\|_{\psi_2}.
\end{equation}
\end{definition}
A crucial property used in our analysis is that sub-Gaussianity is preserved under linear transformations. Specifically, for a Parseval frame $\Psi$, the coefficients $(W \eta)_j = \langle \eta, w_j \rangle$ are sub-Gaussian with norm $\|(W \eta)_j\|_{\psi_2} \leq \|\eta\|_{\psi_2}$.
\begin{proof}[Proof of Theorem~\ref{thm:chem_sensitivity}]
    Let $\hat{Z} := W Z$ for any $Z\in \RR^{t_2}$ and $W:= [w_1, \dots, w_N]$ with $w_j \in \RR^{t_2}$.
    We first show that $(\hat{\eta})_j = \langle w_j, \eta\rangle$ is sub-Gaussian. Recall that for a Parseval frame, the frame identity $\|Z\|_2^2 = \sum_{j \in [N]} |\langle Z, w_j \rangle|^2$ holds for any $Z \in \mathbb{R}^{t_2}$. By choosing $Z = w_k$, we observe that
    \begin{equation*}
        \|w_k\|_2^2 = \|w_k\|_2^4 + \sum_{j \neq k} |\langle w_k, \psi_j \rangle|^2 \implies \|w_k\|_2^2 \geq \|w_k\|_2^4,
    \end{equation*}
    which implies $\|w_j\|_2 \leq 1$ for all $j$. 
    
    Using the homogeneity of the sub-Gaussian norm, we have
    \begin{align*}
        \| \langle \eta, w_j \rangle \|_{\psi_2} &=\left \| \|w_j\|_2 \left\langle \eta, \frac{w_j}{\|w_j\|_2} \right\rangle \right \|_{\psi_2}
        \\
        &= \|w_j\|_2 \cdot \left\| \left\langle \eta, \frac{w_j}{\|w_j\|_2} \right\rangle \right\|_{\psi_2}.
    \end{align*}
    Since $v_j = \frac{w_j}{\|w_j\|_2}$ is a unit vector, it follows from the definition of the vector sub-Gaussian norm that
    \begin{equation*}
        \left\| \langle \eta, v_j \rangle \right\|_{\psi_2} \leq \sup_{v \in S^{d-1}} \|\langle \eta, v \rangle\|_{\psi_2} = \|\eta\|_{\psi_2}.
    \end{equation*}
    Combining these results and utilizing the fact that $\|\psi_j\|_2 \leq 1$, we obtain
    \begin{equation*}
        \|(\hat \eta)_j\|_{\psi_2} \leq \|w_j\|_2 \|\eta\|_{\psi_2} \leq \|\eta\|_{\psi_2}.
    \end{equation*}
    Hence, $\|(\hat \eta)_j\|_{\psi_2} \leq \|\eta\|_{\psi_2}$ holds for any $j$.

    Now we are ready to examine the characteristics of the CHEM.
    Notice that
	the CHEM 
    $$\operatorname{H}:= \frac{1}{\hat t_2}\sum_{j=1}^{\hat t_2}\operatorname{dist}(\hat Y_j, B_\alpha(\widehat{\Phi(X)})_j)$$ fails to detect a hallucination if $\operatorname{H} = 0$, which occurs if and only if the ground truth $Y$ is contained within the conformal set $B_\alpha(\widehat{\Phi(X)})$ centered at the prediction $\Phi(X)$. In the $\Psi$-domain, this implies:
	\begin{equation*}
		|(\hat Y)_j - (\widehat{\Phi(X)})_j| \leq \hat{R}(X)_j, \quad \forall j.
	\end{equation*}
	Substituting the decomposition $\Phi(X) = Y + h + \eta$, the failure condition for a specific index $j$ becomes:
	\begin{equation*}
		|(\hat Y)_j - (\hat Y + \hat h + \hat \eta)_j| = |(\hat h)_j + (\hat \eta)_j| \leq \hat{R}(X)_j.
	\end{equation*}
	By the reverse triangle inequality, the deviation $|(\hat h)_j + (\hat \eta)_j|$ is at least $|(\hat h)_j| - |(\hat \eta)_j|$. Therefore, for failure to occur at index $j$, the residual must compensate for the hallucination:
	\begin{equation*}
		|(\hat \eta)_j| \geq |(\hat h)_j| - \hat{R}(X)_j.
	\end{equation*}
	This must hold for all $j$, including the index $j^* = \arg \max_j |(\hat h)_j|$ where the hallucination is most prominent.

    Recall that $W$ is a Parseval frame and $h$ is $s$-sparse. It is easy to derive the inequality
	\begin{equation*}
		|(\hat h)_{j^*}| \geq s^{-1/2} \|h\|_2.
	\end{equation*}
    since $\|x\|_s \leq \sqrt{s} \|x\|_\infty$ for any $x\in\RR^s$ and $\|W h\|_2 = \|h\|_2$.
	
	The probability of the failure event is bounded by the probability that the residual at $j^*$ is large enough to ``mask'' the hallucination
	\begin{align*}
		P(\operatorname{H} = 0 \mid h) 
		&\leq P\left( |(\hat \eta)_{j^*}| \geq s^{-1/2} \|h\|_2 - \hat{R}(X)_{j^*} \right) .
	\end{align*}
	Substituting the sub-Gaussian bound yields
	\begin{equation*}
		P(\operatorname{H} = 0 \mid h) \leq 2 \exp \left( - \frac{c\left( s^{-1/2} \|h\|_2 - \hat{R}(X)_{j^*} \right)^2}{\|\eta\|_{\psi_2}^2} \right).
	\end{equation*}
\end{proof}

\subsection{Proof of Theorem~\ref{thm:hall}}
\label{proof:hall}
\begin{proof}[Proof of Theorem~\ref{thm:hall}]
By the triangle inequality, we obtain
\begin{align*}
    \HH_{\S} &= \error_{\S}\left( \hat{Y} , \proj (\hat{Y}) \right) \\
    &\geq \error_{\S}\left( \hat{Y} ,\widehat{\Phi(X)} \right) -\error_{\S}\left( \proj (\hat{Y}) , \widehat{\Phi(X)} \right) \\
    &\geq (1-\mu) \error_{\S}(\hat{Y} , \widehat{\Phi(X)}) \\
    &= (1-\mu) R_{\S} \error(\hat{Y} , \widehat{\Phi(X)}).
\end{align*}
Similarly, we can derive the upper bound of $\HH_{\Sc}$ using Assumption~\ref{ass:highfreq_hall}
\begin{align*}
    \HH_{\Sc} &= \error_{\Sc}\left( \hat{Y} , \proj (\hat{Y}) \right) \\
    &\leq \error_{\Sc}\left( \hat{Y} , \widehat{\Phi(X)} \right) + \error_{\Sc}\left( \proj (\hat{Y}) , \widehat{\Phi(X)} \right) \\
    &=(1+\nu) \error_{\Sc}\left( \hat{Y} , \widehat{\Phi(X)} \right).
\end{align*}
Thus,
\begin{align*}
    \HH_{\Sc} 
    &\leq (1+\nu) \error_{\Sc}(\hat{Y} , \widehat{\Phi(X)}) \\
    &= (1+\nu)(1-R_{\S}) \error(\hat{Y} , \widehat{\Phi(X)}).
\end{align*}
\end{proof}


\subsection{U-shaped architectures}

U-Net represents an architecture of convolutional neural networks primarily designed to solve image segmentation tasks. 
One way to improve the performance of U-shaped architectures is to add a bottleneck between the encoding path and the decoding path \cite{zhou_unet_2018, chen_transunet_2021, cao_swin-unet_2023, li2022convolutional}. The bottleneck is used to learn nonlinear transformations of the encoder's deepest features before passing them to the decoder. 
U-Net corresponds to the special case where the bottleneck is the identity mapping.

\subsection{Formal definition of U-shaped networks}

For a 1D convolution with a kernel $w \in \mathbb{R}^s$ and a stride $t$, the convolution operation for $x \in \mathbb{R}^d$ is defined as
$(w *_t x)_i = \sum_{k=1}^{s} w_k\, x_{(i-1)t+k}
$, $ i = [ \mathcal{D}(d,t)]$,
where $\mathcal{D}(d,t) := \lceil d/t \rceil$ and we set $x_i = 0$ for $i > d$. 
Obviously, there exists a matrix $T^{w,t} \in \mathbb{R}^{\mathcal{D}(d,t)\times d}$ determined by the kernel $w$ and the stride $t$ such that $T^{w,t}x = w *_t x$.
Subsequently, we describe a convolutional block consisting of layers $L$ as $\Phi = \T_{L} \circ \sigma \circ \T_{L-1} \circ \cdots \circ \sigma \circ \T_1$, 
where $\T_\ell(z) := T^{w_\ell,t_\ell} z + b_\ell$, $\ell\in [ L]$ is associated with a kernel $w_\ell$, a stride $t_\ell$, and a bias $b_\ell$. We use $K$ to denote the total number of parameters in all kernels and bias. In our theoretical analysis, we focus mainly on 1D convolutional blocks. It is important to note that in certain situations, convolutional blocks for processing 2D or higher dimensional tensors, which are more commonly used in image processing tasks, can be simplified into 1D cases \cite{li2024approximation}.

Before defining U-shaped networks, we first introduce multichannel convolutional blocks.
\begin{definition}[Multichannel Convolutional Blocks]\label{1dcnn}
Given an input $x\in \RR^d$, a $J$ layer multi-channel CNN block $\phi: \RR^d \to  \RR^{d_J \times n_J}$ with filters $\{ w^{(j)}_{\ell,i}\in \RR^{s_j}\}_{j=1}^J$, filter size $\{s_j\geq 2\}_{j=1}^J$, stride $\{t_j\geq 1\}_{j=1}^J$ and channel $\{n_j\geq 1\}_{j=1}^J$,
 is defined iteratively by
 \begin{small}
     \begin{align*}
     \phi^{(1)}( x)_{\ell} &= \sigma \left ( \ w^{(1)}_{\ell,1}*_{t_1}  x+ b_\ell^{(1)} \bm{1}_{d_1}  \right ), \ell \in [n_1], \\
    \phi^{(j)}( x)_{\ell} &=\sigma\left(\sum_{i=1}^{n_{j-1}}\ w^{(j)}_{\ell,i}*_{t_j} \phi^{(j-1)}( x)_i+b^{(j)}_\ell \bm{1}_{d_j} \right), \ell \in [n_j], \\
    \phi( x)_{\ell} &=  \sum_{i=1}^{n_{J-1}}\ w^{(J)}_{\ell,i}*_{t_J} \phi^{(J-1)}( x)_i+b^{(J)}_\ell \bm{1}_{d_J}, \ell \in [n_J],
 \end{align*}
 \end{small}
where $b^{(j)}_\ell \in \mathbb{R}$ are biases and  $d_j=\D(d_{j-1},t_j)$. 
\end{definition}


\begin{definition}[U-shaped networks]
    A U-shaped network is defined as the composition of an encoding path $\Phi_{\en}$, a bottleneck $\Phi_{\app}$, and a decoding path $\Phi_{\de}$ where $\Phi_{\en}$, $\Phi_{\app}$, and $\Phi_{\de}$ are multichannel convolutional blocks. The summation of layers of all the multichannel convolutional blocks is called the depth/layers of the U-shaped networks, and similarly the summation of number of parameters of all the multichannel convolutional blocks is called the total number of parameters of the U-shaped networks. We denote $\unet(L,K)$ as the collection of all U-shaped networks with layers no more than $L$ and the total number of parameters at most $K$.
\end{definition}
\begin{figure}
    \centering
    \includegraphics[width=0.7\linewidth]{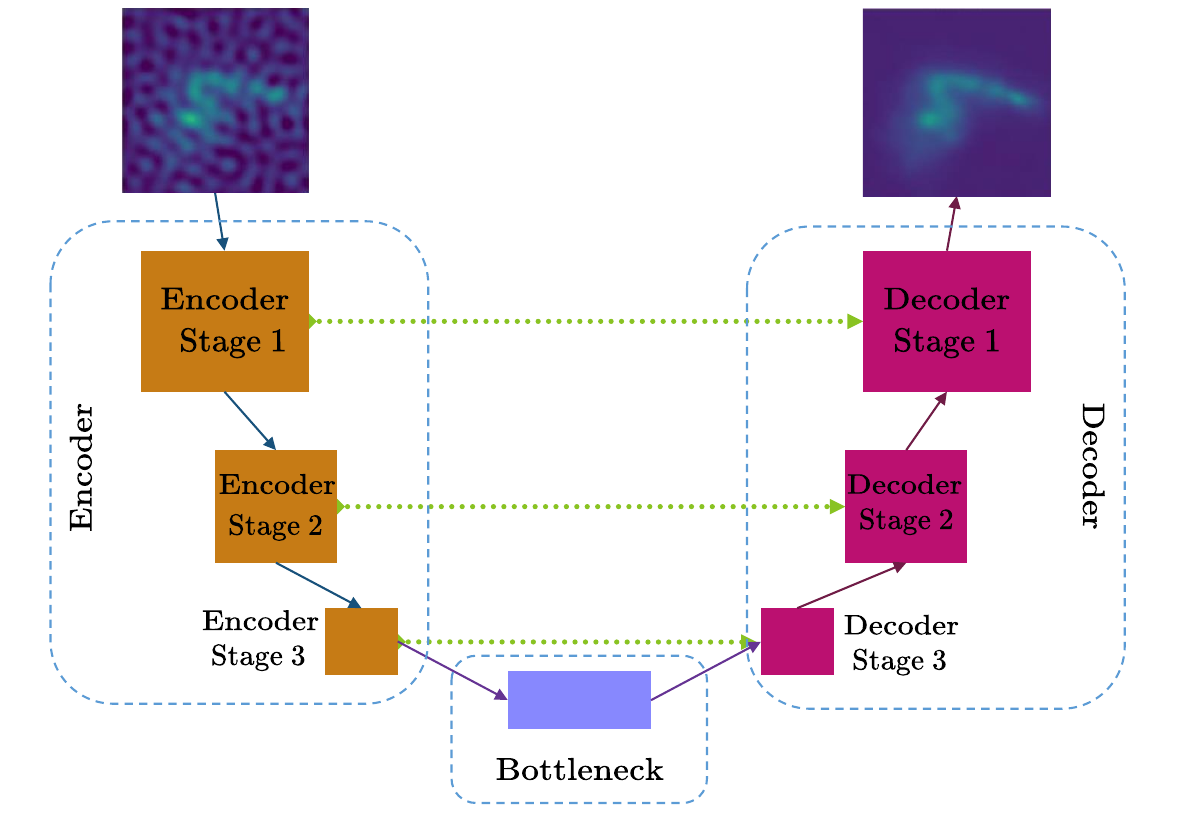}
    \caption{U-shaped network architectures. The foundational components, namely the encoder stages, decoder stages, and the bottleneck,  can be built from convolutional layers, attention layers, or transformer architectures.}
    \label{fig:UshapedNet}
\end{figure}
For theoretical ease, the outputs of convolutional blocks which are 1D in each channel are flattened into a vector and serve as the inputs to the subsequent blocks. To prove the main results, we also restrict convolutions for defining $
\unet(L,K)$ to the case of $s_j =t_j = 2$ for all convolution operations. Other convolutional settings are discussed in \cite{li2024approximation}.
\begin{Remark}
    \textbf{Skip connections in U-shaped networks: }In our definition, we did not consider skip connections, which are also a crucial component of U-Net. This omission is due to the fact that \cite{guhring2020error} demonstrates that any fully connected neural network with skip connections can be equivalently represented by a counterpart lacking skip connections. Therefore, our findings are applicable to U-shaped networks that include skip connections.
\end{Remark}
\begin{Remark}
    \textbf{U-shaped networks constructed with self-attention layers and \\  transformers:} Self-attention mechanisms and transformers have emerged as pivotal elements in deep learning-based image processing models. As illustrated by \cite{cordonnier2020relationship}, self-attention mechanisms are capable of approximating any convolution operation. The transformer, which is fundamentally composed of fully connected layers and self-attention, further emphasizes this adaptability, since \cite{li2024approximation} indicates that convolutional neural networks can represent any fully connected layers. These insights imply that our main results may also extend to U-shaped networks constructed using transformers or self-attention layers.
\end{Remark}

\subsection{Proof of Theorem~\ref{thm:main}}
\label{proof:main}



We start with the lemma below, which demonstrates that any continuous function can be closely approximated by a polynomial.

\begin{lemma}\label{lem:linear_operator}
    Let $f \in \C([-1,1]^d)$.
    There exists a linear operator $V_m : \C([-1,1]^d) \rightarrow \Pi_{m}$ such that
    \begin{align*}
        \| f - V_m f \|_{L_\infty([-1,1]^d)} \leq \frac{5d}{4} \omega_{f}\left(\frac{2}{m};[-1,1]^d \right).
    \end{align*}
    In addition, we have $\|V_m\|_{L_\infty([-1,1]^d)}=1$.
\end{lemma}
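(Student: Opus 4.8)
The plan is to reduce the $d$-dimensional statement to a one-dimensional construction via a tensor product together with a telescoping identity. For each coordinate $i\in[d]$ I would build a single univariate positive operator $v_m$ and let $V_m^{(i)}$ denote its action on the $i$-th variable, with the remaining coordinates treated as parameters; the composition $V_m := V_m^{(1)}\circ\cdots\circ V_m^{(d)}$ then maps $\C([-1,1]^d)$ into $\Pi_m$, since applying a univariate map of degree at most $m$ in each variable yields a polynomial of coordinate-wise degree at most $m$. The error will be controlled by the telescoping identity $I-V_m=\sum_{i=1}^d V_m^{(1)}\cdots V_m^{(i-1)}(I-V_m^{(i)})$, so that $\|(I-V_m)f\|_{L_\infty}\le\sum_{i=1}^d\|(I-V_m^{(i)})f\|_{L_\infty}$ as soon as each $v_m$ has operator norm $1$.

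The heart of the argument is the one-dimensional operator. Here I would use the Chebyshev substitution $x=\cos\theta$: for $f\in\C([-1,1])$ set $g(\theta):=f(\cos\theta)$, which is an even $2\pi$-periodic function, and observe that $|\cos\theta_1-\cos\theta_2|\le|\theta_1-\theta_2|$ forces $\omega_g(\delta)\le\omega_f(\delta)$. I would then convolve $g$ on the circle with a Jackson-type kernel $J_m$, namely a nonnegative even trigonometric polynomial of degree at most $m$ normalized so that $\frac{1}{2\pi}\int_{-\pi}^{\pi}J_m=1$. The operator $Lg(\theta)=\frac{1}{2\pi}\int_{-\pi}^{\pi}g(\theta-\phi)J_m(\phi)\,d\phi$ is positive and reproduces constants, hence has operator norm $1$; because $g$ and $J_m$ are even, $Lg$ is an even trigonometric polynomial and therefore equals $p(\cos\theta)$ for a unique $p\in\Pi_m$, which I define to be $v_m f$. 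Positivity and unitality pass to $v_m$ on $\C([-1,1])$, so $\|v_m\|_{L_\infty}=1$.

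For the error of $v_m$ I would invoke the standard estimate for positive unital operators: from $\omega_g(|\phi|)\le(1+|\phi|/\delta)\,\omega_g(\delta)$ one obtains $|g(\theta)-Lg(\theta)|\le\omega_g(\delta)\big(1+\delta^{-1}m_1(J_m)\big)$, where $m_1(J_m):=\frac{1}{2\pi}\int_{-\pi}^{\pi}|\phi|J_m(\phi)\,d\phi$ is the first absolute moment of the kernel. Choosing $\delta=2/m$ and a kernel with $m_1(J_m)\le\frac{1}{2m}$ produces the factor $1+\frac14=\frac54$, i.e.\ $\|f-v_m f\|_{L_\infty([-1,1])}\le\frac54\,\omega_f(2/m)$. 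Back in $d$ dimensions, each term $\|(I-V_m^{(i)})f\|_{L_\infty}$ is the univariate error applied to the slice of $f$ in direction $i$, and since displacing one coordinate by at most $2/m$ moves a point by Euclidean distance at most $2/m$, the directional modulus is dominated by $\omega_f(2/m;[-1,1]^d)$. Summing the $d$ telescoping terms gives $\frac{5d}{4}\omega_f(2/m)$; moreover $V_m$, being a composition of positive unital operators, is itself positive and unital, whence $\|V_m\|_{L_\infty}=1$.

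I expect the delicate step to be the one-dimensional kernel estimate: constructing a nonnegative trigonometric polynomial of degree at most $m$ whose first absolute moment is bounded by $\frac{1}{2m}$ is exactly what pins down the constant $\frac54$. Jackson-type kernels such as a suitable normalization of $(\sin(m\theta/2)/\sin(\theta/2))^4$ have second moment of order $m^{-2}$ and hence first moment of order $m^{-1}$, but securing the precise bound $\frac{1}{2m}$ rather than merely $O(1/m)$ requires a careful choice of kernel and an explicit moment computation. Everything else — the telescoping, the norm bookkeeping, and the passage from the directional modulus to the full modulus — is routine once this one-dimensional estimate is in hand.
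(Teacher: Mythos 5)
Your construction is genuinely different from the paper's (which applies the multivariate Bernstein operator on $[0,1]^d$, imports the bound $\tfrac{5d}{4}\omega_g(1/m)$ wholesale from Schultz's Theorem~3.1, and then rescales via $z=2x-1$ to turn $\omega_g(1/m;[0,1]^d)$ into $\omega_f(2/m;[-1,1]^d)$; the norm-one claim is the partition-of-unity property of the Bernstein basis). Your tensorization/telescoping skeleton and the reduction of the directional modulus to the full Euclidean modulus are both fine. The problem is exactly the step you flagged as delicate: it cannot be carried out. You need a nonnegative trigonometric polynomial $J_m$ of degree at most $m$ with $\tfrac{1}{2\pi}\int_{-\pi}^{\pi}J_m=1$ and first absolute moment $m_1(J_m)\le\tfrac{1}{2m}$, and no such kernel exists for $m\ge 2$. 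Indeed, by Fej\'er--Riesz write $J_m=|P|^2$ with $P(z)=\sum_{k=0}^{m}c_kz^k$ and $\sum_k|c_k|^2=1$; Cauchy--Schwarz gives $\|J_m\|_{\infty}\le m+1$ pointwise, so the probability measure $d\mu=J_m\,\tfrac{d\theta}{2\pi}$ satisfies $\mu([-a,a])\le\tfrac{a(m+1)}{\pi}$. Taking $a=\tfrac{\pi}{2(m+1)}$ puts at least half the mass outside $[-a,a]$, whence
\begin{align*}
m_1(J_m)\;=\;\int_{-\pi}^{\pi}|\phi|\,d\mu(\phi)\;\ge\;a\,\mu(\{|\phi|>a\})\;\ge\;\frac{\pi}{4(m+1)}\;>\;\frac{1}{2m}\quad\text{for all }m\ge 2,
\end{align*}
since $\pi m>2(m+1)$ once $m>2/(\pi-2)\approx 1.75$. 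So the route through the crude estimate $|g-Lg|\le\bigl(1+\delta^{-1}m_1\bigr)\omega_g(\delta)$ with $\delta=2/m$ is structurally blocked: the best constant it can ever produce is at least $1+\tfrac{\pi}{8}\cdot\tfrac{m}{m+1}\to 1+\tfrac{\pi}{8}\approx 1.39$, which exceeds $\tfrac54$.

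To rescue your argument you would have to either (i) replace the full first moment by a sharper functional of the kernel — e.g.\ use $\omega_g(|\phi|)\le\lceil|\phi|/\delta\rceil\,\omega_g(\delta)$, which only charges the \emph{tail} $\int_{|\phi|>\delta}|\phi|J_m$, and then verify by explicit computation for a specific kernel (Jackson or Fej\'er--Korovkin) that this truncated moment is below $\tfrac{1}{2m}$ — or (ii) abandon positivity of the kernel, which destroys both the operator-norm-one property you rely on for the telescoping and the claim $\|V_m\|_{L_\infty}=1$, or (iii) accept a larger constant than $\tfrac{5d}{4}$. None of these is "routine," and (i) in particular is a nontrivial quantitative estimate that your writeup does not attempt. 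As it stands, the proof establishes the qualitative statement with some constant $C_d\,\omega_f(2/m)$ but not the specific constant $\tfrac{5d}{4}$ asserted in the lemma; the paper sidesteps this entirely by citing the Bernstein-operator constant from the literature rather than deriving it.
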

\begin{proof}
    Let $z = 2x - 1$ where $x \in [0,1]^d$. We denote $g(x) = f(2x-1) $. Then the function $g $ is a continuous function in $ \C([0,1]^d)$ and there exists a linear operator $V_m : \C([0,1]^d) \rightarrow \Pi_{m}$,
    defined as
    \begin{small}
        \begin{align*}
        V_m(g)(x) &:= \sum_{k_1=0}^{m} \cdots \sum_{k_d=0}^{m} \binom{m}{k_1}\cdots \binom{m}{k_d} g\left( \frac{k_1}{m}, \dots, \frac{k_d}{m} \right) \\ 
        &\quad\times \prod_{i=1}^d \left( x_i^{k_i}(1-x_i)^{m-k_i} \right),
    \end{align*}
    \end{small}
    such that \cite[Theorem 3.1]{schultz1969multivariate}
    \begin{align*}
        \left|g(x) - V_m (g)(x)\right| \leq \frac{5d}{4}\omega_g\left(\frac{1}{m};[0,1]^d \right), \quad \forall x \in [0,1]^d.
    \end{align*}
    Since $g(x) = f(2x-1)$ and $z = 2x-1$, we can rewrite the above bound as
    \begin{align}\label{eq:constructionVm}
        \left| f(z) - \tilde{V}_m(f)(z) \right| \leq \frac{5d}{4}\omega_g\left(\frac{1}{m};[0,1]^d \right), \quad \forall z \in [-1,1]^d,
    \end{align}
    where 
    \begin{small}
        \begin{align*}
        \tilde{V}_m(f)(z) &= \sum_{k_1=0}^{m} \cdots \sum_{k_d=0}^{m}\binom{m}{k_1}\cdots \binom{m}{k_d} \\
        &\quad\times f\left( \frac{2k_1-m}{m}, \dots, \frac{2k_d-m}{m} \right) \\
        &\quad\times \prod_{i=1}^d \left( \left(\frac{z_i+1}{2}\right)^{k_i}\left(\frac{1-z_i}{2}\right)^{m-k_i} \right).
    \end{align*}
    \end{small}
    Using the definition of modulus of continuity, we further have
    \begin{small}
        \begin{align}\label{eq:g_f_modulus}
    \begin{aligned}
        &\omega_g\left(\frac{1}{m};[0,1]^d \right) \\
        &= \left \{ |g(x_1) - g(x_2) |: \|x_1 - x_2\|_{2} \leq \frac{1}{m} , x_1,x_2 \in [0,1]^d  \right\} \\
        &= \left \{ |f(z_1) - f(z_2) |: \|z_1 -z_2\|_{2} \leq \frac{2}{m} , z_1,z_2 \in [-1,1]^d  \right\} \\
        &= \omega_f \left( \frac{2}{m};[-1,1]^d  \right).
    \end{aligned}
    \end{align}
    \end{small}
    
    By substituting \eqref{eq:g_f_modulus} into \eqref{eq:constructionVm}, we get the claimed upper bound.
    
    For estimating $\|\tilde{V}_m\|_{L_\infty}$, it is straightforward to see that $\|\tilde{V}_m\|_{L_\infty}=1$ by multi-binomial theorem
    \begin{align*}
        &\|\tilde{V}_m (f)\|_{L_\infty([-1,1]^d)} \\
        &\leq \|f\|_{L_\infty([-1,1]^d)} \sum_{k_1=0}^{m} \cdots \sum_{k_d=0}^{m}\binom{m}{k_1}\cdots \binom{m}{k_d}
         \\
        &\quad \times \prod_{i=1}^d \left( \left(\frac{z_i+1}{2}\right)^{k_i}\left(\frac{1-z_i}{2}\right)^{m-k_i} \right)\\
         &= \|f\|_{L_\infty([-1,1]^d)}.
    \end{align*}
\end{proof}

\begin{theorem}\label{thm:discretization}
    Let $\X, \Y \subset \C([-1,1]^d)$ and $\M :\X \rightarrow \Y$. Suppose that Assumption~\ref{ass:Lipschitz} and Assumption~\ref{ass:Y} hold. Then
    we have for any $f \in \X$,
    \begin{align*}
        &\|\M(f) - V_m \circ \M \circ V_m (f) \|_{L_\infty } \\
        &\leq 6L_{\M}d^2 \omega_{f}\left(\frac{2}{m};[-1,1]^d \right) + \frac{5L_{\Y}d}{2m}.
    \end{align*}
\end{theorem}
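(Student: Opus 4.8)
The plan is to control the total discretization error by the triangle inequality, separating the \emph{outer} approximation $V_m(\M(f)) \approx \M(f)$ from the \emph{inner} approximation $V_m f \approx f$ propagated through $\M$. Concretely, I would write
\[
\|\M(f) - V_m \circ \M \circ V_m(f)\|_{L_\infty} \leq \|\M(f) - V_m(\M(f))\|_{L_\infty} + \|V_m(\M(f)) - V_m(\M(V_m f))\|_{L_\infty},
\]
and bound the two summands using a different assumption for each. Note first that Assumption~\ref{ass:Lipschitz} is stated for all of $\C(\Omega)$, so $\M(V_m f)$ is well defined even though $V_m f$ is merely a polynomial; this is what makes the composition meaningful.

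For the first (outer) term, set $g := \M(f) \in \Y$ and apply Lemma~\ref{lem:linear_operator} to $g$, giving $\|g - V_m g\|_{L_\infty} \leq \frac{5d}{4}\omega_g(2/m)$. The key step is to bound the spatial modulus of continuity $\omega_g$ of the output function: since $g \in \Y$, Assumption~\ref{ass:Y} yields $|g(x_1)-g(x_2)| \leq L_\Y\|x_1-x_2\|_2$, hence $\omega_g(2/m) \leq 2 L_\Y/m$. This produces the bound $\frac{5 L_\Y d}{2m}$, matching the second term of the claim exactly.

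For the second (inner) term, I would use linearity of $V_m$ to rewrite it as $\|V_m(\M(f) - \M(V_m f))\|_{L_\infty}$, then invoke the operator-norm bound $\|V_m\|_{L_\infty} = 1$ from Lemma~\ref{lem:linear_operator} to drop $V_m$, then Assumption~\ref{ass:Lipschitz} to get $\leq L_\M\|f - V_m f\|_{L_\infty}$, and finally Lemma~\ref{lem:linear_operator} applied to $f$ itself to obtain $\leq \frac{5 L_\M d}{4}\omega_f(2/m)$. Summing the two contributions gives $\frac{5L_\M d}{4}\omega_f(2/m) + \frac{5 L_\Y d}{2m}$; since $\frac{5d}{4} \leq 6 d^2$ for $d \geq 1$, this is in fact sharper than, and in particular implies, the stated bound.

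I expect the main obstacle to be conceptual rather than computational: correctly matching each assumption to the term it controls. The subtlety is that the \emph{functional} Lipschitz bound (Assumption~\ref{ass:Lipschitz}) governs how the inner discretization error $\|f-V_m f\|_{L_\infty}$ propagates through $\M$, whereas the \emph{spatial} Lipschitz bound (Assumption~\ref{ass:Y}) is exactly what is needed to estimate the modulus of continuity of the intermediate output $\M(f)$, which otherwise carries no a priori smoothness. The operator-norm-one property of $V_m$ is the glue that prevents the inner error from being amplified when pushed back through the final application of $V_m$.
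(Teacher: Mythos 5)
Your proof is correct, and it takes a genuinely different route from the paper's. The paper inserts the intermediate term $\M(V_m(f))$, splitting the error into $\|\M(f) - \M(V_m f)\|_{L_\infty}$ plus $\|\M(V_m f) - V_m(\M(V_m f))\|_{L_\infty}$; the second piece then forces an estimate of the modulus of continuity of the composed function $\M \circ V_m(f)$, which is not a priori in $\Y$, so the paper controls it by a further three-term triangle inequality comparing it to $\M(f)$ (which is spatially Lipschitz by Assumption~\ref{ass:Y}). That detour is what produces the extra factor of $d$ and the final constant $6L_{\M}d^2$. You instead insert $V_m(\M(f))$: your outer term applies Lemma~\ref{lem:linear_operator} directly to $\M(f) \in \Y$, whose modulus of continuity is immediately at most $2L_{\Y}/m$ by Assumption~\ref{ass:Y}, giving exactly $\frac{5L_{\Y}d}{2m}$; your inner term is handled by linearity of $V_m$ together with the operator-norm bound $\|V_m\|_{L_\infty}=1$ --- a fact the paper establishes in the lemma but does not actually use in its proof of this theorem. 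The resulting bound $\frac{5L_{\M}d}{4}\,\omega_f\!\left(\frac{2}{m}\right) + \frac{5L_{\Y}d}{2m}$ is sharper than the stated one (since $\frac{5d}{4}\leq 6d^2$ for $d\geq 1$) and hence implies it. Both arguments share the same implicit requirement that $\M$ be defined and Lipschitz on all of $\C(\Omega)$ so that $\M(V_m f)$ makes sense even when $V_m f \notin \X$, which you correctly flag; your decomposition is arguably the cleaner of the two precisely because it never needs smoothness information about $\M(V_m f)$ itself.
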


\begin{proof}[Proof of Theorem~\ref{thm:discretization}]
    To begin with, we present the subsequent error decomposition
    \begin{align}\label{eq:error_decomposition}
    \begin{aligned}
        &\|\M(f) - V_m \circ \M \circ V_m (f) \|_{L_\infty } \\
        &\leq \underbrace{\|\M(f) -  \M \circ V_m (f) \|_{L_\infty } }_{(i)} \\
        &\quad + \underbrace{\| \M \circ V_m (f) - V_m \circ \M \circ V_m (f) \|_{L_\infty }}_{(ii)}.
    \end{aligned}
    \end{align}
    For the term $(i)$, we can derive 
    \begin{align}\label{eq:i}
        (i) \leq L_{\M}\|f
        -   V_m (f) \|_{L_\infty }  \leq  \frac{5L_{\M}d}{4} \omega_{f}\left(\frac{2}{m};[-1,1]^d \right) ,
    \end{align}
    where we use Lemma~\ref{lem:linear_operator} and Lipschitz property of $\M$ in Assumption~\ref{ass:Lipschitz}.
    
    Since $\M \circ V_m(f) \in \C([-1,1]^d)$,
    applying Lemma~\ref{lem:linear_operator} again to the second term $(ii)$, we derive
    \begin{align}\label{eq:ii}
    \begin{aligned}
        &\|  \M \circ V_m(f) - V_m \circ \M \circ V_m (f) \|_{L_\infty([-1,1]^d) } \\
        &\leq \frac{5d}{4} \omega_{ \M \circ V_m(f)}\left(\frac{2}{m};[-1,1]^d \right). 
    \end{aligned}
    \end{align}

    Now we need to estimate $\omega_{ \M \circ V_m(f)}$. Lemma~\ref{lem:linear_operator},
    Assumption~\ref{ass:Lipschitz}, and Assumption~\ref{ass:Y} jointly imply that
        \begin{align*}
        &\left|\M \circ V_m(f)(x_1) - \M \circ V_m(f)(x_2) \right| \\
        &\leq \left|\M \circ V_m(f)(x_1) - \M (f)(x_1) \right| \\
        &\quad + \left|\M (f)(x_1) - \M(f)(x_2) \right| \\
         &\quad+ \left|\M (f)(x_2) - \M \circ V_m(f)(x_2) \right| \\
        &\leq 2 \| \M \circ V_m(f) - \M (f) \|_{L_\infty} + L_{\Y}\|x_1 - x_2\|_{2} \\
        &\leq 2 L_{\M} \|  V_m(f) - f \|_{L_\infty} + L_{\Y}\|x_1 - x_2\|_{2} \\
        &\leq \frac{10L_{\M}d}{4} \omega_{f}\left(\frac{2}{m};[-1,1]^d \right) + L_{\Y}\|x_1 - x_2\|_{2}.
    \end{align*}

    Substituting the above inequality into the definition of $\omega_{ \M \circ V_m(f)}$, we obtain
    \begin{align*}
        &\omega_{ \M \circ V_m(f)}\left(\frac{2}{m};[-1,1]^d \right) \\
        &= \Big \{ |\M \circ V_m(f)(x_1) - \M \circ V_m(f)(x_2) |: \|x_1 - x_2\|_{2} \leq \frac{2}{m} ,\\
        &\quad\quad x_1,x_2 \in [-1,1]^d  \Big\}  \\
        &\leq \frac{10L_{\M}d}{4} \omega_{f}\left(\frac{2}{m};[-1,1]^d \right) + \frac{2L_{\Y}}{m}.
    \end{align*}
    Hence, combining \eqref{eq:ii} with the above estimation of $\omega_{ \M \circ V_m(f)}$, the second term $(ii)$ is bounded by
    \begin{align}\label{eq:iii}
        (ii) \leq  \frac{25L_{\M}d^2}{8} \omega_{f}\left(\frac{2}{m};[-1,1]^d \right) + \frac{5L_{\Y} d}{2m}.
    \end{align}
    We finally get the upper bound by substituting \eqref{eq:i} and \eqref{eq:iii} into \eqref{eq:error_decomposition}
    \begin{align*}
        &\|\M(f) - V_m \circ \M \circ V_m (f) \|_{L_\infty }\\
        &\leq 6L_{\M}d^2 \omega_{f}\left(\frac{2}{m};[-1,1]^d \right) + \frac{5L_{\Y}d}{2m}.
    \end{align*}
    We finish the proof.
\end{proof}

Theorem~\ref{thm:discretization} demonstrates that as $m$ approaches infinity, the expression $\M \rightarrow V_m \circ \M \circ V_m$ converges uniformly on $\X$. The concept of $V_m$ implies that in practical applications, when handling data, it is necessary to convert a continuous signal into a finite-dimensional vector. Since $V_m$ produces polynomials up to degree $m$, which constitutes a finite-dimensional space, this space is consequently learnable. Thus, we consider the use of U-shaped networks to approximate $V_m \circ \M \circ V_m$. If this approximation approaches zero, it means that U-architectures possess the universal approximation property.

We now introduce some fundamental concepts that will be used later. We define Legendre polynomials as 
\begin{align*}
    P_n(x) : = \frac{(-1)^n\sqrt{n+1/2}}{2^nn!} \left( \frac{d}{dx} \right)^n \left\{ (1-x^2)^n \right \}.
\end{align*}
For $x = (x_1, \dots, x_d)^\top \in \RR^d$, we define
\begin{align*}
    P_{\bm k}(x) = \prod_{j=1}^d P_{k_j}(x_j),
\end{align*}
where $\bm k := (k_1, \dots, k_d) \in \RR^d$. Notice that we have
\begin{align*}
    \langle P_{\bm k_1}, P_{\bm k_2} \rangle = \delta_{\bm k_1,\bm k_2},
\end{align*}
where $\delta_{\bm k_1,\bm k_2}=1$ if $\bm k_1 = \bm k_2$ and zero otherwise.
Without loss of generality, we replace the multi-index with the the scalar index, i.e., $\{ P_{\bm k}: \bm k \in \{0,1,2,\dots , m\}^d \}  =  \{ P_{k} \}_{k=1}^{(m+1)^d}$. Obviously $\{ P_{k} \}_{k=1}^{(m+1)^d}$ forms a basis of the polynomial space $\Pi_{m}$.

We introduce an isometric isomorphism mapping $\phi: \Pi_m \rightarrow \RR^{(m+1)^d}$:
\begin{align*}
    \phi(Q) := (\langle Q, P_1 \rangle,\dots, \langle Q, P_t \rangle)^\top,
\end{align*}
where $t := (m+1)^d$ and $Q \in \Pi_m$.
Then we can see that for any $\hat{Q}\in \RR^t$
\begin{align*}
    \phi^{-1}(\hat{Q}) := \sum_{i=1}^t \hat{Q}_i P_i,
\end{align*}
satisfies
\begin{align*}
\begin{aligned}
    \phi^{-1}\phi(Q) &= Q \\
    \|Q\|_{L_2}^2 &= \|\hat{Q}\|_2^2.
\end{aligned}
\end{align*}
Hence we can rewrite $V_m \circ \M \circ V_m$ as
\begin{align*}
    V_m \circ \M \circ V_m  = \phi^{-1} \circ \left( \phi \circ V_m \circ \M \circ \phi^{-1} \right) \circ \phi \circ V_m (f),
\end{align*}
for further analysis. 
Here we have that $\phi \circ V_m \circ \M \circ \phi^{-1} :\RR^t \rightarrow \RR^t$ is a mapping between finite dimensional spaces and $\phi$ encodes the information of the function $V_m f$ into a vector and $\phi^{-1}$ decodes the information from a vector into a continuous function.

The following lemma is a useful property for characterizing norms of polynomials.

\begin{lemma}[\cite{mhaskar1997neural}]\label{lem:mhaskar}
    Let $p,q \in [1,\infty]$, then for any $m\in \NN$ and $Q \in \Pi_m$, we have
    \begin{align*}
        \|Q\|_{L_p([-1,1]^d)} \leq c \cdot m^{2d \max\{ \frac{1}{q}-\frac{1}{p},0 \}} \|Q\|_{L_q([-1,1]^d)},
    \end{align*}
    where $c$ is a constant and independent of $m$.
\end{lemma}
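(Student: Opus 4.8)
The plan is to split into the two regimes $p\le q$ and $p>q$, and to reduce the $d$-dimensional estimate in the hard regime to a one-dimensional Nikolskii-type inequality by an inductive tensorization. The regime $p\le q$ is immediate: the factor $\max\{1/q-1/p,0\}$ vanishes, and since $\Omega=[-1,1]^d$ has finite measure $|\Omega|=2^d$, H\"older's inequality gives $\|Q\|_{L_p(\Omega)}\le |\Omega|^{1/p-1/q}\|Q\|_{L_q(\Omega)}\le 2^d\|Q\|_{L_q(\Omega)}$, with a constant independent of $m$ and of $Q\in\Pi_m$. Hence I only need to work when $p>q$ (so in particular $q<\infty$), where the target exponent is $2d(1/q-1/p)$.

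The core one-dimensional estimate I would establish is $\|P\|_{L_\infty[-1,1]}\le c\,m^{2/q}\|P\|_{L_q[-1,1]}$ for a single-variable $P\in\Pi_m$. This I obtain from the classical Markov inequality $\|P'\|_{L_\infty}\le m^2\|P\|_{L_\infty}$: picking $x_0$ with $|P(x_0)|=\|P\|_\infty$, the mean value theorem yields $|P(x)|\ge\tfrac12\|P\|_\infty$ on $I:=[x_0-\tfrac{1}{2m^2},x_0+\tfrac{1}{2m^2}]\cap[-1,1]$, an interval of length at least $\tfrac{1}{2m^2}$; integrating $|P|^q$ over $I$ gives the bound with $c\le 4$. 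The exponent $m^2$ (rather than the $m$ of the periodic case) is precisely the imprint of the endpoint behaviour of algebraic polynomials carried by Markov's inequality. A one-line interpolation then upgrades this to finite $p$: $\|P\|_p^p=\int|P|^{p-q}|P|^q\le\|P\|_\infty^{p-q}\|P\|_q^q$, so $\|P\|_p\le c^{1-q/p}m^{2(1/q-1/p)}\|P\|_q$ for all $q<p\le\infty$.

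To pass to dimension $d$ I would induct on $d$, denoting by $A_d$ the best constant in the claimed inequality over $\Pi_m$ on $[-1,1]^d$. Fixing $\mathbf{x}'=(x_2,\dots,x_d)$, the slice $Q(\cdot,\mathbf{x}')$ is a one-variable polynomial of degree $\le m$, so integrating the one-dimensional bound over $\mathbf{x}'$ gives $\|Q\|_{L_p(\Omega)}\le c\,m^{2(1/q-1/p)}\,\bigl\|\,\|Q\|_{L_{q,x_1}}\bigr\|_{L_{p,\mathbf{x}'}}$. Because $p\ge q$, Minkowski's integral inequality lets me exchange the inner and outer norms, $\bigl\|\,\|Q\|_{L_{q,x_1}}\bigr\|_{L_{p,\mathbf{x}'}}\le\bigl\|\,\|Q\|_{L_{p,\mathbf{x}'}}\bigr\|_{L_{q,x_1}}$; now for each fixed $x_1$ the inner norm is an $L_p$-norm of a degree-$\le m$ polynomial in the $d-1$ remaining variables, to which the induction hypothesis applies, and Fubini collapses the outer $L_q$-in-$x_1$ together with the $L_q$-in-$\mathbf{x}'$ into $\|Q\|_{L_q(\Omega)}$. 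This produces $A_d\le c\,m^{2(1/q-1/p)}A_{d-1}$, hence $A_d\le c^d m^{2d(1/q-1/p)}$, matching the claimed exponent $2d\max\{1/q-1/p,0\}$.

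The main obstacle is exactly this tensorization step. A naive coordinate-by-coordinate iteration fails, because after applying the one-dimensional inequality the surviving quantity is $\|Q\|_{L_{q,x_1}}$, which is not a polynomial in the next variable; the Minkowski swap is what repairs this, and it is available precisely in the regime $p\ge q$. Verifying it, including the limiting case $p=\infty$ where $\bigl\|\,\|Q\|_{L_{q,x_1}}\bigr\|_{L_\infty}\le\bigl\|\,\|Q\|_{L_\infty}\bigr\|_{L_{q,x_1}}$ follows directly from monotonicity of the integral, is the crux. Everything else is bookkeeping of constants, all of which remain independent of $m$.
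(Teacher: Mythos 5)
Your argument is correct, but note that the paper does not actually prove this lemma: it is imported verbatim from the cited reference (Mhaskar), so there is no in-paper proof to compare against. What you have written is the standard self-contained derivation of the multivariate Nikolskii inequality for algebraic polynomials, and every step checks out: the trivial H\"older direction for $p\le q$; the one-dimensional $L_q\to L_\infty$ bound with exponent $m^{2/q}$ obtained from Markov's inequality $\|P'\|_{L_\infty}\le m^2\|P\|_{L_\infty}$ together with the observation that $|P|\ge\tfrac12\|P\|_{L_\infty}$ on an interval of length $\ge\tfrac{1}{2m^2}$ inside $[-1,1]$; the interpolation $\|P\|_p\le\|P\|_\infty^{1-q/p}\|P\|_q^{q/p}$ to reach finite $p$; and the tensorization, where you correctly identify that the slice $\|Q\|_{L_{q,x_1}}$ is no longer a polynomial and that Minkowski's integral inequality (valid exactly when $p\ge q$, including $p=\infty$ by monotonicity) is what allows the induction on $d$ to close, yielding $A_d\le c^d m^{2d(1/q-1/p)}$. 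This matches the stated exponent, and the fact that you get $m^2$ per variable (rather than the $m$ of the trigonometric case) correctly reflects the endpoint behaviour that the cited result encodes. The only caveat worth recording is degenerate: as stated the lemma allows $m=0$ (the paper's $\NN$ includes $0$), for which the right-hand side vanishes when $p>q$ while the left-hand side does not for a nonzero constant; your proof, like the standard statement, implicitly requires $m\ge 1$ (or $m$ replaced by $m+1$), which is the only regime in which the paper ever invokes the lemma.
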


The following lemma shows that $\phi \circ V_m \circ \M \circ \phi^{-1}$ is Lipschitz.
\begin{lemma}\label{lem:approximatorLip}
    Suppose Assumption~\ref{ass:Lipschitz} holds.
    The mapping $ \phi \circ V_m \circ \M \circ \phi^{-1}: \RR^t \rightarrow \RR^t $ is Lipschitz, that is,
    \begin{align*}
        \begin{aligned}
            &\|\phi \circ V_m \circ \M \circ \phi^{-1}(w_1) - \phi \circ V_m \circ \M \circ \phi^{-1}(w_2) \|_\infty \\
            &\leq c L_{\M} m^{d}  \| w_1 - w_2 \|_{
            2} , 
        \end{aligned}
    \end{align*}
    for any $ w_1,w_2 \in \RR^t$.
\end{lemma}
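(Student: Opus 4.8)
The plan is to bound the target quantity through a short chain of norm conversions that alternates between the vector world $\RR^t$ and the polynomial world $\Pi_m$, using four ingredients: the $L_2$-isometry of $\phi$ (equivalently of $\phi^{-1}$), the $L_\infty$-contractivity $\|V_m\|_{L_\infty}=1$ from Lemma~\ref{lem:linear_operator}, the Lipschitz hypothesis on $\M$ from Assumption~\ref{ass:Lipschitz}, and the Nikolskii-type inequality of Lemma~\ref{lem:mhaskar}. Writing $Q_i := V_m(\M(\phi^{-1}(w_i))) \in \Pi_m$ for $i=1,2$, I would first use linearity of $\phi$ and its isometry property to pass from the target $\ell_\infty$ norm on $\RR^t$ to an $L_2$ norm of a polynomial,
\begin{align*}
\|\phi(Q_1) - \phi(Q_2)\|_\infty \le \|\phi(Q_1 - Q_2)\|_2 = \|Q_1 - Q_2\|_{L_2}.
\end{align*}

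Next I would convert this $L_2$ norm into an $L_\infty$ norm. Since $\Omega=[-1,1]^d$ has finite measure $2^d$, one has $\|Q_1 - Q_2\|_{L_2} \le 2^{d/2}\|Q_1 - Q_2\|_{L_\infty}$ (no $m$-dependence). Then linearity of $V_m$ together with $\|V_m\|_{L_\infty}=1$, followed by the Lipschitz property of $\M$, gives
\begin{align*}
\|Q_1 - Q_2\|_{L_\infty}
&\le \|\M(\phi^{-1}(w_1)) - \M(\phi^{-1}(w_2))\|_{L_\infty} \\
&\le L_{\M}\,\|\phi^{-1}(w_1 - w_2)\|_{L_\infty}.
\end{align*}
The crucial step then returns from the $L_\infty$ norm of the polynomial $\phi^{-1}(w_1 - w_2)\in\Pi_m$ to its $L_2$ norm, where Lemma~\ref{lem:mhaskar} with $p=\infty$, $q=2$ produces the factor $m^{2d\cdot(1/2)}=m^{d}$,
\begin{align*}
\|\phi^{-1}(w_1 - w_2)\|_{L_\infty}
&\le c\, m^{d}\,\|\phi^{-1}(w_1 - w_2)\|_{L_2} \\
&= c\, m^{d}\,\|w_1 - w_2\|_2,
\end{align*}
the last equality again by the isometry of $\phi^{-1}$. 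Chaining the four displays and absorbing the dimension-dependent constants $2^{d/2}$ and $c$ into a single constant (still denoted $c$ and independent of $m$) yields the claimed bound.

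The main obstacle, and the sole source of the $m^{d}$ growth, is this last conversion: controlling the $L_\infty$ norm of a degree-$m$ polynomial by its $L_2$ norm. The two isometry steps and the contractivity of $V_m$ are free of $m$-dependence, and the Lipschitz step contributes only the factor $L_{\M}$, so all the $m$-growth is concentrated in the Nikolskii inequality. The one point that warrants care is the direction of each conversion: passing from vector $\ell_\infty$ to $L_2$ and from $L_2$ to $L_\infty$ on the image of $V_m$ are both cheap (constant factors), whereas the genuinely lossy inverse inequality must be invoked \emph{exactly once}, which is what keeps the final exponent at $m^{d}$ rather than an unnecessarily larger power.
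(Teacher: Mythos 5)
Your proposal is correct and follows essentially the same chain as the paper's proof: vector $\ell_\infty\le\ell_2$, the $\phi$-isometry, the cheap $L_2$-to-$L_\infty$ conversion, contractivity of $V_m$ plus the Lipschitz property of $\M$, and a single application of the Nikolskii-type inequality of Lemma~\ref{lem:mhaskar} to produce the $m^d$ factor. The only cosmetic difference is that you bound $\|\cdot\|_{L_2}\le 2^{d/2}\|\cdot\|_{L_\infty}$ directly via the finite measure of $\Omega$ where the paper invokes Lemma~\ref{lem:mhaskar} with exponent zero; these are equivalent.
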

\begin{proof}
    Let $w_1,w_2 \in \RR^t$. Then we have
    \begin{align*}
        \begin{aligned}
            &\|\phi \circ V_m \circ \M \circ \phi^{-1}(w_1) - \phi \circ V_m \circ \M \circ \phi^{-1}(w_2) \|_\infty \\
            &\leq \|\phi \circ V_m \circ \M \circ \phi^{-1}(w_1) - \phi \circ V_m \circ \M \circ \phi^{-1}(w_2) \|_2 \\
            &= \| V_m \circ \M \circ \phi^{-1}(w_1) -  V_m \circ \M \circ \phi^{-1}(w_2) \|_{L_2} \\
            &\leq c\| V_m \circ \M \circ \phi^{-1}(w_1) -  V_m \circ \M \circ \phi^{-1}(w_2) \|_{L_\infty} \\
            &\leq c L_{\M} \| \phi^{-1} (w_1) - \phi^{-1}(w_2) \|_{L_
            \infty} \\
            &\leq c L_{\M} m^d \| \phi^{-1} (w_1) - \phi^{-1}(w_2) \|_{L_
            2} \\
            &= c L_{\M} m^{d}  \| w_1 - w_2 \|_{
            2} ,
        \end{aligned}
    \end{align*}
    where in the first step we use $\|z\|_\infty \leq \|z\|_2$ for any $z\in \RR^t$, in the second step we use the definition of $\phi$, in the third step we use Lemma~\ref{lem:mhaskar}, in the fourth step we use Assumption~\ref{ass:Lipschitz} and Lemma~\ref{lem:linear_operator}, in the fifth step we use Lemma~\ref{lem:mhaskar}, and in the last step we use the definition of $\phi^{-1}$.
\end{proof}

For any Lipschitz mapping, it is possible to identify a CNN where the approximation error remains sufficiently small. This conclusion is presented in the following lemma.

\begin{lemma}\label{lem:approximator}
    Let $L,N \in \NN_{+}$ with $N > t^{t/(t-1)}$. Suppose Assumption~\ref{ass:Lipschitz} holds. Then there exists a CNN $\Phi$ with depth $O(L\log(t N))$ and total number of parameters $O(Lt^2N^2)$ such that
    \begin{align*}
        &\| \phi \circ V_m \circ \M \circ \phi^{-1}(w) - \Phi(w) \|_{\infty} \\
        &\leq C L_{\M} m^{d} (N^2L^2 \log(N))^{-1/t},
    \end{align*}
    for any $w\in [-1,1]^t$ where $C$ is a positive constant.
\end{lemma}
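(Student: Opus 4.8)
The plan is to leverage Lemma~\ref{lem:approximatorLip}, which already tells us that the map $F := \phi \circ V_m \circ \M \circ \phi^{-1}: \RR^t \to \RR^t$ is Lipschitz on $[-1,1]^t$ with constant at most $cL_{\M} m^d$ (from $\ell_2$ in the input to $\ell_\infty$ in the output). Since the target lives in $\RR^t$, I would first reduce the vector-valued problem to $t$ scalar approximation problems: each coordinate function $F_k:[-1,1]^t \to \RR$ inherits Lipschitz constant at most $cL_{\M} m^d$, because $|F_k(w_1)-F_k(w_2)| \le \|F(w_1)-F(w_2)\|_\infty \le cL_{\M} m^d\,\|w_1-w_2\|_2$. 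Thus it suffices to approximate each $F_k$ to accuracy $\varepsilon = C L_{\M} m^d (N^2L^2\log N)^{-1/t}$ in $L_\infty([-1,1]^t)$ and stack the resulting $t$ subnetworks in parallel; since $\|\cdot\|_\infty$ is the coordinatewise maximum, combining the coordinate errors incurs no dimensional loss.

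The core ingredient is a quantitative ReLU approximation theorem for Lipschitz (more generally continuous) functions on the $t$-dimensional cube, of the type that controls $L_\infty$-error through the modulus of continuity. For a scalar Lipschitz target this furnishes a fully connected ReLU network of width $O(N)$ and depth $O(L)$ whose error is of order $\mathrm{Lip}(F_k)\cdot (N^2L^2\log N)^{-1/t}$; the hypothesis $N > t^{t/(t-1)}$ is precisely the regime in which such a bound holds, ensuring the network is wide enough relative to the ambient dimension $t$. Substituting $\mathrm{Lip}(F_k)\le cL_{\M} m^d$ produces the $m^d$ prefactor and the curse-of-dimensionality exponent $-1/t$ inherited from the input dimension $t$.

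I would then convert the fully connected subnetworks into convolutional blocks using the representation result of \cite{li2024approximation}, which shows that CNNs can realize arbitrary fully connected layers exactly. Tracking the overhead of this conversion is where the remaining factors arise: realizing a dense linear map through stride-$2$, filter-size-$2$ convolutions (the setting to which $\unet(L,K)$ is restricted) costs an $O(\log(tN))$ blow-up in depth, needed to redistribute information across the $t$ coordinates, and an $O(t^2)$ factor in the parameter count, yielding total depth $O(L\log(tN))$ and $O(Lt^2N^2)$ parameters. Because this conversion is exact, the previously obtained $(N^2L^2\log N)^{-1/t}$ rate is preserved verbatim, giving the claimed inequality.

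The main obstacle I anticipate is the bookkeeping in this last step: keeping the FC-to-CNN conversion tight enough that the budgets come out exactly as $O(L\log(tN))$ depth and $O(Lt^2N^2)$ parameters, while verifying that the conversion introduces no additional approximation error so the rate is unchanged. A secondary point requiring care is confirming that the chosen FC approximation theorem is applicable under the precise threshold $N > t^{t/(t-1)}$ and that the single constant $cL_{\M} m^d$ from Lemma~\ref{lem:approximatorLip} simultaneously bounds all $t$ coordinate Lipschitz constants, so that a uniform accuracy $\varepsilon$ can be imposed across channels.
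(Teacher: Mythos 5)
Your proposal matches the paper's proof in all essentials: it invokes Lemma~\ref{lem:approximatorLip} for the Lipschitz constant $cL_{\M}m^{d}$, applies the scalar ReLU approximation rate $O(\lambda(N^{2}L^{2}\log N)^{-1/t})$ coordinatewise and stacks the $t$ subnetworks in parallel (the paper packages this step via a vector-valued approximation proposition, obtaining width $O(tN)$ and depth $O(L)$), and then performs the exact FC-to-CNN conversion from \cite{li2024approximation} with the same $O(\log(tN))$ depth and $O(t^{2})$ parameter overhead. The argument is correct and follows essentially the same route as the paper.
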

\begin{proof}
    Let $L,N \in \NN_{+}$ with $N > t^{t/(t-1)}$. Given a Lipschitz function $f$ with Lipschitz constant $\lambda$, \cite[Corollary 1.3]{shen2022optimal} implies that there exists a ReLU neural network $\psi$ with depth $O(L)$ and width $O(N)$ such that
    \begin{align*}
        \|f - \psi \|_{L_\infty([-1,1]^d)} \leq C \lambda (N^2L^2 \log(N))^{-1/t},
    \end{align*}
    where $C$ depends only on $d$.
    
    Consider $f_i$ as Lipschitz functions, where $i\in[t]$.
    Using \cite[Proposition 4.1]{yang2025optimal}, we can then approximate the following mapping
    \begin{align*}
    F(x) :=
        \begin{pmatrix}
            f_1(x)\\f_2(x)\\ \cdots \\ f_t(x)
        \end{pmatrix}
    \end{align*}
    by using a ReLU neural network $\Psi :\RR^t \rightarrow \RR^t$ which has depth $O(L)$ and width $O(tN)$ such that
    \begin{align*}
        \sup_{x\in[-1,1]^t}\| F(x) - \Psi(x) \|_{\infty} \leq C \lambda (N^2L^2 \log(N))^{-1/t},
    \end{align*}
    for a constant $C>0$ that depends only on $d$.
    Since any affine transformation from $\RR^N$ to $\RR^N$ can be represented by no more than $O(\log N) $ convolutional layers with a total number of parameters at most $N^2$ \cite{li2024approximation}, we can construct a convolutional neural network $\Phi$ with depth $O(L\log (tN))$ and total number of parameters $O(Lt^2N^2)$ such that $\Phi = \Psi$ and 
    \begin{align*}
        \sup_{x\in[-1,1]^t}\| F(x) - \Phi(x) \|_{\infty} \leq C \lambda (N^2L^2 \log(N))^{-1/t}.
    \end{align*}
    Since Lemma~\ref{lem:approximatorLip} implies that $\phi \circ V_m \circ \M \circ \phi^{-1}$ is a special case of $F$, we can conclude the proof by setting $\lambda := cL_{\M}m^d$.
\end{proof}


To approximate $\phi$ and $\phi^{-1}$, an error analysis can be conducted on a collection of sample points within $[-1,1]^d$ by employing the two lemmas outlined below.

\begin{lemma}\label{lem:encoder}
    There exists $\xi_{\en} = \{\xi_i\}_{i=1}^t \subset [-1,1]^d$ such that
    we can find a CNN $\Phi_{\en}$ with depth $O(\log t)$ and total number of parameters $O(t^2)$, satisfying
    \begin{align*}
        \Phi_{\en}(S(Q,\xi_{\en})) = \phi(Q), \quad \forall Q\in \Pi_m.
    \end{align*}
\end{lemma}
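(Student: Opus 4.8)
The plan is to exploit that, for a well-chosen sampling set, the passage from the samples of a polynomial to its Legendre coefficients is \emph{exactly} a fixed linear map, and then to realize that linear map by a convolutional block using the affine-to-CNN representation already invoked in the proof of Lemma~\ref{lem:approximator}.

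First I would choose $\xi_{\en}$ to be a tensor grid of distinct nodes: pick $m+1$ pairwise distinct points $a_0,\dots,a_m \in [-1,1]$ and let $\xi_{\en}$ be the $d$-fold Cartesian product of $\{a_0,\dots,a_m\}$, so that $|\xi_{\en}| = (m+1)^d = t$. The decisive property is that such a grid is \emph{unisolvent} for $\Pi_m$, meaning the only polynomial in $\Pi_m$ vanishing at all grid points is the zero polynomial. This holds because the evaluation matrix associated with a tensor-product basis factorizes as a Kronecker product of univariate Vandermonde matrices, each invertible since the nodes are distinct.

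Next, using $Q = \sum_{i=1}^t \phi(Q)_i P_i$ (the identity $\phi^{-1}\phi(Q)=Q$), evaluation at the grid gives $S(Q,\xi_{\en})_j = \sum_{i=1}^t P_i(\xi_j)\,\phi(Q)_i$. Setting $M_{ji} := P_i(\xi_j)$, this reads $S(Q,\xi_{\en}) = M\,\phi(Q)$ for every $Q\in\Pi_m$. Unisolvence from the previous step is exactly the assertion that $M$ is invertible, so $\phi(Q) = M^{-1} S(Q,\xi_{\en})$ holds identically on $\Pi_m$. Hence the target map $S(Q,\xi_{\en}) \mapsto \phi(Q)$ is the fixed linear map given by the $t\times t$ matrix $M^{-1}$, with no approximation error whatsoever.

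Finally, I would apply the representation result of \cite{li2024approximation} already used above: any affine transformation from $\RR^t$ to $\RR^t$ can be implemented exactly by a convolutional block of depth $O(\log t)$ with at most $O(t^2)$ parameters. Applying this to $M^{-1}$ produces the desired CNN $\Phi_{\en}$ satisfying $\Phi_{\en}(S(Q,\xi_{\en})) = \phi(Q)$ for all $Q\in\Pi_m$, with the claimed depth and parameter budget. The main obstacle is establishing the invertibility of $M$, i.e.\ unisolvence of the tensor grid; once that Vandermonde/Kronecker argument is in place, the remaining steps are an immediate linear-algebra reduction followed by a black-box application of the cited affine-to-CNN lemma, so the conclusion is an exact identity rather than an approximation bound.
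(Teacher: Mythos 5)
Your proposal is correct, and its skeleton matches the paper's: reduce the map $S(Q,\xi_{\en})\mapsto\phi(Q)$ to a fixed $t\times t$ linear map and then invoke the affine-to-CNN representation of \cite{li2024approximation} to realize it exactly with depth $O(\log t)$ and $O(t^2)$ parameters. Where you differ is in how that linear map is produced. The paper invokes a multivariate Gaussian-quadrature (cubature) result \cite[Theorem 14.2.1]{davis1975interpolation} to get nodes $\xi_j$ and positive weights $w_j$ with $\int Q P_i = \sum_j w_j Q(\xi_j)P_i(\xi_j)$ for all $Q\in\Pi_m$, so that $\phi(Q) = U\,S(Q,\xi)$ with the explicit matrix $U_{ij} = w_j P_i(\xi_j)$ --- no inversion required. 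You instead take a tensor grid of distinct nodes, prove unisolvence via the Kronecker factorization into univariate Vandermonde matrices, and set the linear map to be $M^{-1}$ where $M_{ji}=P_i(\xi_j)$. Both arguments are sound (in particular, your invertibility claim for $M$ is correct: the Legendre tensor basis is an invertible change of basis from the monomial tensor basis, whose evaluation matrix on the grid is a Kronecker product of invertible univariate Vandermondes). Your route is more elementary and self-contained, needing only linear algebra; the paper's route avoids matrix inversion and yields explicit matrix entries, and its quadrature nodes are reused symmetrically in the decoder lemma. For the purposes of this existence statement the two are interchangeable.
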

\begin{proof}
    Since $P_i \in \Pi_m$, $i=1,\dots,t$ are multidimensional Legendre polynomials which are constructed by the tensor products of univariate orthonormal Legendre polynomials with degree no more than $m$, by using \cite[Theorem 14.2.1]{davis1975interpolation}, there exists $\xi = \{\xi_1, \dots, \xi_t\}\subset [-1,1]^t$ and positive weights $w_i>0$, $i=1,\dots,t$ such that
    \begin{align*}
        \int_{[-1,1]^d} Q(x) P_i(x) dx = \sum_{j=1}^t w_j Q(\xi_j) P_i(\xi_j), \quad\forall Q\in \Pi_m.
    \end{align*}
    Let us define
    \begin{align*}
        U = 
        \begin{pmatrix}
            w_1 P_1(\xi_1) & w_2 P_1(\xi_2) & \dots & w_t P_1(\xi_t) \\
            \vdots & \ddots & \ddots  & \vdots \\
            w_1 P_i(\xi_1) & w_2 P_i(\xi_2) & \dots & w_t P_i(\xi_t) \\
            \vdots & \ddots & \ddots  & \vdots \\
            w_1 P_t(\xi_1) & w_2 P_t(\xi_2) & \dots & w_t P_t(\xi_t) 
        \end{pmatrix}.
    \end{align*}
    Then it is obvious that
    \begin{align*}
        U S(Q,\xi) = \phi(Q).
    \end{align*}
    Using \cite{li2024approximation}, there exists a CNN $\Phi_{\en}$ with depth $O(\log t)$ and total number of parameters $O(t^2)$ such that
    \begin{align*}
        \Phi_{\en}(S(Q,\xi)) = U S(Q,\xi).
    \end{align*}
\end{proof}

\begin{lemma}\label{lem:decoder}
    Given any $\xi_{\de} = \{\xi_i\}_{i=1}^t \subset [-1,1]^d$, there exists a CNN $\Phi_{\de}$ with depth $O(\log t)$ and total number of parameters $O(t^2)$, satisfying
    \begin{align*}
        \Phi_{\de}(\phi(Q))  = S(Q,\xi_{\de}), \forall Q\in \Pi_m.
    \end{align*}
    In addition, $\Phi_{\de}$ is Lipschitz, i.e.,
    \begin{align*}
        \| \Phi_{\de}(w ) - \Phi_{\de}(z ) \|_\infty \leq C t^2 \|w-z\|_\infty,
    \end{align*}
    for any $w,z \in \RR^t$.
\end{lemma}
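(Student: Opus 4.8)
The plan is to observe that the decoder is simply an explicit linear map, after which both claims reduce to the affine-to-CNN representation result already invoked for the encoder, together with an elementary sup-norm bound on Legendre polynomials.

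\textbf{Step 1 (identify the map).} Because $\{P_i\}_{i=1}^t$ is an orthonormal basis of $\Pi_m$, the vector $\phi(Q)$ is precisely the coordinate vector of $Q$ in this basis, so $Q = \sum_{i=1}^t \phi(Q)_i P_i$. Evaluating at each sample point $\xi_j \in \xi_{\de}$ gives $Q(\xi_j) = \sum_{i=1}^t \phi(Q)_i P_i(\xi_j)$. Hence, defining the generalized Vandermonde matrix $V \in \RR^{t\times t}$ by $V_{ji} := P_i(\xi_j)$, I obtain the exact identity $S(Q,\xi_{\de}) = V\,\phi(Q)$ for every $Q \in \Pi_m$. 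Thus the desired decoder is nothing more than multiplication by the fixed matrix $V$.

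\textbf{Step 2 (realize $V$ as a CNN).} Since $V$ is a linear map from $\RR^t$ to $\RR^t$, I would invoke exactly the representation used in Lemma~\ref{lem:encoder}: by \cite{li2024approximation}, any affine transformation on $\RR^t$ can be realized by convolutional blocks of depth $O(\log t)$ and total number of parameters $O(t^2)$. Taking $\Phi_{\de}$ to be this realization of $V$ yields $\Phi_{\de}(\phi(Q)) = V\phi(Q) = S(Q,\xi_{\de})$, which is the first claim, within the stated depth and parameter budget.

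\textbf{Step 3 (Lipschitz bound).} As $\Phi_{\de}(w)=Vw$ is linear, for any $w,z\in\RR^t$ I have $\|\Phi_{\de}(w)-\Phi_{\de}(z)\|_\infty = \|V(w-z)\|_\infty \le \big(\max_j \sum_{i} |V_{ji}|\big)\,\|w-z\|_\infty$, so it remains to bound the largest absolute row sum of $V$ by $C t^2$. Each $P_i$ is a tensor product $P_{\bm k} = \prod_{l=1}^d P_{k_l}$ of univariate normalized Legendre polynomials; using the classical bound $|P_n(x)| \le \sqrt{n+1/2}$ on $[-1,1]$ together with $k_l \le m$, I get $|P_i(\xi_j)| \le (m+1/2)^{d/2} \le t^{1/2}$, whence $\sum_{i=1}^t |P_i(\xi_j)| \le t\cdot t^{1/2} = t^{3/2} \le t^2$. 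This delivers the claimed Lipschitz constant $C t^2$.

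The structural steps are immediate: once orthonormality of $\{P_i\}$ collapses the decoder to the linear evaluation map $V$, the network construction is a direct citation mirroring the encoder, and linearity makes the Lipschitz estimate a matrix-norm computation. The only quantitative care I expect to need is the uniform sup-norm control of the tensorized Legendre polynomials; here the normalization factor $\sqrt{n+1/2}$ must be tracked carefully so that the product over the $d$ coordinates stays within $t^{1/2}$ and the final constant lands at $C t^2$ rather than something larger.
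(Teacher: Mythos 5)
Your proposal is correct and follows essentially the same route as the paper: identify the decoder as multiplication by the evaluation matrix $P_i(\xi_j)$, realize that affine map as a CNN of depth $O(\log t)$ with $O(t^2)$ parameters via \cite{li2024approximation}, and bound the $\infty$-operator norm. The only (harmless) deviation is in the Lipschitz step, where you use the classical pointwise bound $|P_n(x)|\le\sqrt{n+1/2}$ to get row sums at most $t^{3/2}$, while the paper invokes the Nikolskii-type inequality of Lemma~\ref{lem:mhaskar} to bound $\sup_j\|P_j\|_{L_\infty}\le Cm^d$; both land within the stated constant $Ct^2$.
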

\begin{proof}
    Define 
    \begin{align*}
        P = 
        \begin{pmatrix}
            P_1(\xi_1) &  P_2(\xi_1) & \dots &  P_t(\xi_1) \\
            \vdots & \ddots & \ddots  & \vdots \\
             P_1(\xi_i) &  P_2(\xi_i) & \dots &  P_t(\xi_i) \\
            \vdots & \ddots & \ddots  & \vdots \\
            P_1(\xi_t) &  P_2(\xi_t) & \dots &  P_t(\xi_t) 
        \end{pmatrix}.
    \end{align*}
    Then we can see that
    \begin{align*}
        S(Q,\xi) = \sum_{i=1}^t \langle Q, P_i \rangle S(P_i, \xi) = P\phi(Q).
    \end{align*}
    Using \cite{li2024approximation}, there exists a CNN $\Phi_{\de}$ with depth $O(\log t)$ and total number of parameters $O(t^2)$ such that
    \begin{align*}
        \Phi_{\de}(\phi(Q))  = S(Q,\xi) = P\phi(Q).
    \end{align*}

    Furthermore, since Lemma~\ref{lem:mhaskar} indicates that for any $j \in\{ 1,\dots,t \}$, $\|P_j\|_{L_\infty([-1,1]^d) }\leq C m^{d}$, we can obtain that $\Phi_{\de}$ is Lipschitz
    \begin{align*}
        \| \Phi_{\de}(w ) - \Phi_{\de}(z ) \|_\infty 
        &\leq t \sup_{j}\|P_j\|_{L_\infty} \|w-z\|_\infty\\
        &\leq C t^2 \|w-z\|_\infty,
    \end{align*}
    where we use the definition of $\infty$-matrix norm.
    
\end{proof}

We are now prepared to demonstrate the primary result.

\begin{theorem}
    Let $d, m, L, K \in \NN_{+}$, $t=(m+1)^d$ and $\Omega = [-1,1]^d$. Let $\X \subset \{ f\in\C(\Omega): \|f\|_{L_\infty} \leq 1\}$. Suppose Assumption~\ref{ass:Lipschitz} and Assumption~\ref{ass:Y} hold. Then, there exists a linear mapping $V_m :\C(\Omega)\rightarrow \Pi_m$ and a set of points $\xi_{\en} \subset \Omega$ with $|\xi_{\en}|=t$ such that for any $\xi_{\de} \subset \Omega$ with $|\xi_{\de}|=t$, we can find a U-shaped network $\Phi \in \unet\left(O( L\log (t K)), O(t^2 K)\right)$, satisfying 
    \begin{small}
        \begin{align}
        &\left\|S(\M(f),\xi_{\de}) - \Phi(S(V_m (f), \xi_{\en}))  \right\|_{\infty} \\
        &\leq C_1 \omega_{f}\left(\frac{2}{m} \right) + \frac{C_2}{m} +  \frac{C_3 t^{3} }{  ( L K \log(K/L))^{1/t}} ,
    \end{align}
    \end{small}
    for any $f\in \X$. Here, $C_1$,$C_2$,$C_3>0 $ depend on $d, L_{\M}, L_{\Y}$.
\end{theorem}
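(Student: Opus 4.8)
The plan is to build $\Phi$ as a three–stage composition mirroring the factorization $V_m \circ \M \circ V_m = \phi^{-1} \circ (\phi \circ V_m \circ \M \circ \phi^{-1}) \circ \phi \circ V_m$ established above: an encoder turning discrete samples into Legendre coefficients, a bottleneck approximating the finite–dimensional map $\phi \circ V_m \circ \M \circ \phi^{-1}$, and a decoder turning coefficients back into samples on $\xi_{\de}$. Concretely, I would set $\Phi := \Phi_{\de} \circ \Phi_{\app} \circ \Phi_{\en}$, where $\Phi_{\en}$ and $\Phi_{\de}$ are the CNNs from Lemma~\ref{lem:encoder} and Lemma~\ref{lem:decoder} (the former fixing $\xi_{\en}$ as the Legendre quadrature nodes, the latter accommodating any prescribed $\xi_{\de}$, explaining the ``there exists $\xi_{\en}$, for any $\xi_{\de}$'' structure), and $\Phi_{\app}$ is the bottleneck CNN from Lemma~\ref{lem:approximator}. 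Since $V_m f \in \Pi_m$, the encoder acts exactly, $\Phi_{\en}(S(V_m f,\xi_{\en})) = \phi(V_m f)$, so that if $\Phi_{\app}$ and $\Phi_{\de}$ were exact the output would be precisely $S(V_m \circ \M \circ V_m (f),\xi_{\de})$.

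I would then split the error by the triangle inequality into a discretization part and an approximation part:
\begin{align*}
&\left\|S(\M(f),\xi_{\de}) - \Phi(S(V_m f,\xi_{\en}))\right\|_\infty \\
&\leq \left\|S\left(\M(f) - V_m \circ \M \circ V_m(f),\,\xi_{\de}\right)\right\|_\infty \\
&\quad + \left\|S\left(V_m \circ \M \circ V_m(f),\xi_{\de}\right) - \Phi(S(V_m f,\xi_{\en}))\right\|_\infty.
\end{align*}
The first summand is at most $\|\M(f) - V_m \circ \M \circ V_m(f)\|_{L_\infty}$ because point evaluations never exceed the supremum norm, and Theorem~\ref{thm:discretization} bounds this by $6 L_{\M} d^2 \omega_f(2/m) + \tfrac{5 L_{\Y} d}{2m}$, supplying the $C_1 \omega_f(2/m) + C_2/m$ terms. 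For the second summand I use the exactness of the encoder to rewrite it as $\|\Phi_{\de}(\phi \circ V_m \circ \M \circ \phi^{-1}(\phi(V_m f))) - \Phi_{\de}(\Phi_{\app}(\phi(V_m f)))\|_\infty$, apply the decoder's Lipschitz bound $C t^2$ from Lemma~\ref{lem:decoder}, and then the bottleneck approximation bound of Lemma~\ref{lem:approximator} at the point $\phi(V_m f)$.

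A subtlety requiring care is that Lemma~\ref{lem:approximator} lives on $[-1,1]^t$, whereas $\|f\|_{L_\infty} \leq 1$ and $\|V_m\|_{L_\infty} = 1$ only yield $\|\phi(V_m f)\|_\infty \leq 2^{d/2}$ via Cauchy--Schwarz and orthonormality of the $P_i$. I would absorb this by prescaling the bottleneck input by the constant $2^{-d/2}$ (an affine map folded into the first layer), which merely inflates the relevant Lipschitz constant by a $d$–dependent factor. Combining the estimates yields a second summand of order $t^2 \cdot L_{\M} m^d (N^2 L^2 \log N)^{-1/t}$, and since $m^d \leq (m+1)^d = t$ this is at most $C L_{\M} t^3 (N^2 L^2 \log N)^{-1/t}$.

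The closing step, and the main bookkeeping obstacle, is matching the network budget: the total depth is $O(L\log(tN)) + O(\log t) = O(L\log(tN))$ and the parameter count is $O(L t^2 N^2) + O(t^2) = O(L t^2 N^2)$. To land in $\unet(O(L\log(tK)), O(t^2 K))$ I would choose $N \asymp \sqrt{K/L}$ (legitimate once $K$ is large enough that the hypothesis $N > t^{t/(t-1)}$ of Lemma~\ref{lem:approximator} holds), giving parameter count $O(t^2 K)$ and depth $O(L\log(t\sqrt{K/L})) = O(L\log(tK))$. Substituting $N^2 \asymp K/L$ transforms $(N^2 L^2 \log N)^{-1/t}$ into $(KL\log(K/L))^{-1/t}$ up to constants, so the second summand becomes $C_3 t^3 (LK\log(K/L))^{-1/t}$, completing the bound. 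I expect the $N \leftrightarrow K$ substitution together with tracking that all of $C_1, C_2, C_3$ depend only on $d, L_{\M}, L_{\Y}$ to be the most delicate, though ultimately mechanical.
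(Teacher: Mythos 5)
Your proposal is correct and follows essentially the same route as the paper's proof: the same encoder--bottleneck--decoder factorization via Lemmas~\ref{lem:encoder}, \ref{lem:approximator}, and \ref{lem:decoder}, the same triangle-inequality split against $V_m\circ\M\circ V_m$ bounded by Theorem~\ref{thm:discretization}, and the same substitution $K=LN^2$. The only cosmetic difference is how the input range of the bottleneck is handled: you prescale by $2^{-d/2}$, while the paper simply notes that Lemma~\ref{lem:approximator} adapts to $[-c,c]^t$; both resolutions are equivalent.
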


\begin{proof}
    Theorem~\ref{thm:discretization} implies that
    \begin{align}\label{eq:errorterm1}
    \begin{aligned}
        &\|S(\M(f)- V_m \circ \M \circ V_m (f) ,\xi_{\de})\|_{\infty } \\
        &\leq 6L_{\M}d^2 \omega_{f}\left(\frac{2}{m};[-1,1]^d \right) + \frac{5L_{\Y}d}{2m}.
    \end{aligned}
    \end{align}
    Applying Lemma~\ref{lem:encoder}, we get a CNN $\Phi_{\en}$ with depth $O(\log t)$ and total number of parameters $O(t^2)$ such that
    \begin{align}\label{eq:approximatorstep1}
    \begin{aligned}
        &V_m \circ \M \circ V_m (f)  \\
        &= \phi^{-1} \circ \left( \phi \circ V_m \circ \M \circ \phi^{-1} \right) \circ \phi \circ V_m (f) \\
        &= \phi^{-1} \circ \left( \phi \circ V_m \circ \M \circ \phi^{-1} \right) \circ \Phi_{\en}(S(V_m (f),\xi_{\en}))
    \end{aligned}
    \end{align}
    since $V_m (f) \in \Pi_m$.
    By the definition of $\phi$, we have
    \begin{align*}
        \|\phi \circ V_m (f)\|_\infty &\leq \| \phi \circ V_m (f) \|_2 \leq \| V_m (f) \|_{L_2} \\
        &\leq c \|V_m(f)\|_{L_\infty} 
        \leq c\|f\|_{L_\infty},
    \end{align*}
    where we use Lemma~\ref{lem:mhaskar} in the third step and Lemma~\ref{lem:linear_operator} in the last step. Hence we also have
    \begin{align}\label{eq:phien-norm}
        \|\Phi_{\en}(S(V_m (f),\xi_{\en}))\|_{\infty} \leq c \|f\|_{L_\infty} \leq c, \forall \|f\|_{\infty}\leq 1.
    \end{align}
    

    Lemma~\ref{lem:decoder} implies that we can find a CNN $\Phi_{\de}$ with depth $O(\log t)$ and total number of parameters $O(t^2)$ such that $\Phi_{\de}(\phi(Q)) = S(Q,\xi_{\de})$ for any $Q\in\Pi_m$. Substituting this fact into \eqref{eq:approximatorstep1} and using the definition of $\phi^{-1}$ yield
    \begin{align}\label{eq:Vmxi}
    \begin{aligned}
        &S(V_m \circ \M \circ V_m(f),\xi_{\de}) \\
        &= \Phi_{\de} \circ \left( \phi \circ V_m \circ \M \circ \phi^{-1} \right) \circ \Phi_{\en}(S(V_m (f),\xi_{\en})).
    \end{aligned}
    \end{align}
    
    The rest part is to construct a bottleneck $\Phi_{\app}$ for approximating $ \phi \circ V_m \circ \M \circ \phi^{-1}$.
    Lemma~\ref{lem:approximator} indicates that there exists a CNN $\Phi_{\app}$ with depth $O(L\log( tN))$ and total number of parameters $O(Lt^2N^2)$ such that
    \begin{align*}
        &\sup_{w\in[-1,1]^t} \| \phi \circ V_m \circ \M \circ \phi^{-1}(w) - \Phi_{\app}(w) \|_{\infty} \\
        &\leq C  L_{\M} m^{d} (N^2L^2 \log(N))^{-1/t}.
    \end{align*}
    Hence the following inequalities hold 
        \begin{align}\label{eq:errorterm2}
        \begin{aligned}
            &\| \Phi_{\de} \circ \left( \phi \circ V_m \circ \M \circ \phi^{-1} \right) \circ \Phi_{\en}(S(V_m (f),\xi_{\en})) \\
            &\quad- \Phi_{\de} \circ \Phi_{\app} \circ \Phi_{\en}(S(V_m (f),\xi_{\en})) \|_{\infty} \\
            &\leq  Ct^2 \sup_{w\in[-c,c]^t}  \| \phi \circ V_m \circ \M \circ \phi^{-1}(w) - \Phi_{\app}(w) \|_{\infty} \\
        &\leq Ct^2 L_{\M} m^{d} (N^2L^2 \log(N))^{-1/t}.
        \end{aligned}
    \end{align}
    where in the first step we use Lemma~\ref{lem:decoder} and in the second step we use Lemma~\ref{lem:approximator} and \eqref{eq:phien-norm}. Here we can apply Lemma~\ref{lem:approximator} since the corresponding proof can be easily adapted to domain $[-c,c]^t$.
    
    Let $\Phi := \Phi_{\de} \circ \Phi_{\app} \circ \Phi_{\en}$. Combining \eqref{eq:Vmxi}, \eqref{eq:errorterm1}, and \eqref{eq:errorterm2}, we obtain
    \begin{align*}
        &\left\|S(\M(f),\xi_{\de}) - \Phi(S(V_m (f), \xi_{\en}))  \right\|_{\infty} \\
        &\leq \|S(\M(f)- V_m \circ \M \circ V_m (f) ,\xi_{\de})\|_{\infty } \\
        &\quad + \| S(V_m \circ \M \circ V_m,\xi_{\de}) - \Phi(S(V_m (f),\xi_{\en})) \|_\infty \\
        &\leq 6L_{\M}d^2 \omega_{f}\left(\frac{2}{m};[-1,1]^d \right) + \frac{5L_{\Y}d}{2m} \\
        &\quad+ CL_{\M}\frac{t^{3} }{  (N^2L^2 \log(N))^{1/t}},
    \end{align*}
    where $\Phi$ is a CNN with depth $O(\log t + L\log (t N)) = O(  L\log (t N))$ and total number of parameters $O(t^2 + t^2 LN^2)=O( t^2 LN^2)$. Let $K = LN^2$, then we finish the proof.

\end{proof}

In the following, we offer a comprehensive analysis of the outcomes presented in the theorem.

\textbf{The role of $V_m$.} 
The linear operator $V_m$ transforms a continuous function into the space of polynomials. According to the Stone-Weierstrass theorem, polynomials are dense in $\C(\Omega)$. Therefore, substituting samples of $f$ with samples of $V_m(f)$ is adequate, provided that $m$ is sufficiently large.

\textbf{The role of $\xi_{\en}$.} 
In our analysis, we select a particular set of points to capture the information from a continuous function. Given that a finite number of sample locations can specify any polynomial in $\Pi_m$, sampling enough points allows us to completely represent $V_m(f)$.

\textbf{The role of $\xi_{\de}$.} 
U-shaped networks cannot generate functions as outputs, so we analyze the error between $\M(f)$ and $\Phi(S(V_m (f), \xi_{\en}))$ at the set of output locations $\xi_{\de}$. With $\xi_{\de}$ fixed, Theorem~\ref{thm:main} provides an effective network for these specific points. However, if the function's smoothness is inadequate, the same neural network might not perform well at other points.

\subsection{Proof of Theorem~\ref{thm:lowerbound}}
\label{proof:lowerbound}
We first need to establish the relationship between U-shaped networks and fully connected neural networks. Fully connected neural networks are defined as 
\begin{align*}
    \begin{aligned}
        \Psi = \A_{L} \circ \sigma \circ \A_{L-1} \circ \cdots \circ \sigma \circ \A_1,
    \end{aligned}
\end{align*}
where $\A_\ell(z)= A_\ell \cdot z + b_\ell$, involving a weight matrix $A_\ell$ and a bias vector $b_\ell$. We define $L$ as the number of layers within $\Psi$ and represent $K$ as the total number of non-zero parameters in the weight matrices and bias vectors of $\A_{\ell}$, $\ell\in[L]$. We denote $\nn(L,K)$ as the collection of all fully connected neural networks with no more than $L$ layers and at most $K$ nonzero parameters. Then we can show that U-shaped networks can be viewed as special fully connected neural networks.

\begin{lemma}\label{eq:unetFNN}
    Let $L,K\in\NN_{+}$ and $\unet(L,K), \nn(L,K)$ be the collection of U-shaped networks and fully connected networks defined on $\RR^t$. Then $\unet(L,K)\subset \nn(L,\max\{tK,K^2\})$.
\end{lemma}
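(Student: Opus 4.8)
The plan is to exhibit, for every U-shaped network, an \emph{identical} function realized by a fully connected network of the same depth, and then count how many nonzero scalars the weight sharing of the convolutions unfolds into. The starting point is the matrix identity already recorded in the excerpt: for a kernel $w$ and a stride $t$ one has $w *_t x = T^{w,t}x$ with $T^{w,t}\in\RR^{\D(d,t)\times d}$. Consequently each layer $\T_\ell$ of a multichannel convolutional block (Definition~\ref{1dcnn}) is an affine map $z\mapsto A_\ell z + b_\ell$: assembling the convolution matrices $T^{w,t_\ell}$ of all the filters over the input/output channel pairs produces one sparse, block-Toeplitz matrix $A_\ell$ acting on the flattened feature vector, while the repeated scalar biases $b^{(\ell)}\bm{1}$ produce the bias vector $b_\ell$. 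Inserting $\sigma$ between consecutive layers exactly as in the definition of $\nn$, the composition $\Phi_{\de}\circ\Phi_{\app}\circ\Phi_{\en}$ is literally a fully connected network. Since every convolutional layer becomes exactly one affine layer, the depth is preserved, so the constructed network lies in $\nn(L,\cdot)$.

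It remains to bound the number of nonzero parameters. First I would count the nonzeros contributed by a single multichannel layer $j$. For a fixed output channel $\ell$ and input channel $i$ the Toeplitz block $T^{w^{(j)}_{\ell,i},t_j}$ stores the $\le s_j$ kernel entries replicated across its $d_j=\D(d_{j-1},t_j)$ rows, hence has at most $s_j d_j$ nonzeros; summing over the $n_j n_{j-1}$ channel pairs and adding the $n_j d_j$ nonzero bias entries gives at most $d_j\,(n_j n_{j-1}s_j + n_j)$ nonzeros for that layer, i.e.\ the per-layer parameter count of the convolutional block multiplied by the spatial length $d_j$. Summing over all layers, the total nonzero count of the fully connected representation is at most $(\max_j d_j)\cdot K$, where $K$ is the shared-parameter budget of the U-shaped network.

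The final step is to control the replication factor $\max_j d_j$. Because every stride obeys $t_j\ge 1$, the spatial length $d_j=\lceil d_{j-1}/t_j\rceil$ never increases inside a block, so it is governed by the flattened input length $t$, giving $\max_j d_j\le t$ and the bound $tK$. In the complementary situation, where a block is used to realize an essentially dense affine map on its coordinates (the regime in which the excerpt itself costs an affine map on $\RR^N$ up to $N^2$ parameters), one instead uses the trivial estimate that a layer connecting at most $K$ neurons to at most $K$ neurons has at most $K^2$ nonzeros. Taking the larger of the two regimes yields the claimed containment $\unet(L,K)\subset\nn(L,\max\{tK,K^2\})$.

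I expect the genuine obstacle to be the bookkeeping of the third step: one must track how the spatial length $d_j$ and the channel count $n_j$ interleave across the flatten-and-reshape operations that glue the encoder, bottleneck, and decoder together, and verify that the replication multiplicity of any shared kernel entry is captured by $\max\{t,K\}$ rather than by an uncontrolled flattened dimension. By contrast, the convolution-as-matrix identification and the depth accounting are routine given the identity $w*_t x = T^{w,t}x$ supplied in the excerpt.
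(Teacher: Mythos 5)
Your proposal is correct and follows essentially the same route as the paper: unfold each convolution into a sparse block-Toeplitz affine map of the same depth, observe that the nonzero count equals the shared kernel-parameter budget multiplied by the spatial replication factor, and bound that factor by $\max\{t,K\}$. The one step you flag but do not carry out --- why the post-flatten width is at most $K$ --- is exactly where the paper inserts a short contradiction argument (a block whose output has more than $K$ channels of spatial length $1$ would already require more than $K$ kernel parameters), so your outline matches the paper's proof in both structure and substance.
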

\begin{proof}
    Let $x \in \RR^{n_1\times c_1}$ be a tensor with $c$ channels and each channel be denoted as $x_{i}\in \RR^{n_1}$. Given a set of kernels $w_{\ell,i} \in \RR^{2}$, $\ell \in [c_2]$, $i\in [c_1]$, the convolution between $x$ and these kernels is given by
    \begin{align*}
        y_\ell = \sum_{i=1}^{c_1} w_{\ell,i}*_{2} x_i = \sum_{i=1}^{c_1} T^{w_{\ell,i}} x_i,
    \end{align*}
    where $y\in \RR^{\lceil n_1/2 \rceil\times c_2}$ and $T^{w_{\ell,i}} \in \RR^{\lceil n_1/2 \rceil\times n_1}$ with at most $2\lceil n_1/2 \rceil \leq n_1+2$ nonzero entries.
    Denote
    \begin{align*}
        \tilde{x}:=
        \begin{pmatrix}
            x_1 \\
            \vdots \\
            x_{c_1}
        \end{pmatrix}
        ,\quad
        \tilde{y}:=
        \begin{pmatrix}
            y_1\\
            \vdots \\
            y_{c_2}
        \end{pmatrix}
    \end{align*}
    and
    \begin{align*}
        \tilde{T}:=
        \begin{pmatrix}
            T^{w_{1,1}} & T^{w_{1,2}} & \cdots & T^{w_{1,c_1}} \\
            T^{w_{2,1}} & T^{w_{2,2}} & \cdots & T^{w_{2,c_1}} \\
            \vdots      &  \vdots & \ddots & \vdots \\
            T^{w_{c_2,1}} & T^{w_{c_2,2}} & \cdots & T^{w_{c_2,c_1}} \\
        \end{pmatrix}
        .
    \end{align*}
    Obviously we have $\tilde{y} = \tilde{T}\tilde{x}$ and in $\tilde{T}$ there are at most $ (n_1+2) \times c_1 \times c_2$ parameters. 
    
    Meanwhile, there are $2c_1c_2$ parameters in the set of kernels $w_{\ell,i} \in \RR^{2}$, $\ell \in [c_2]$, $i\in [c_1]$. Thus, we can always rewrite any convolution operation into an affine mapping with nonzero parameters no more than the parameters in convolutional kernels times the dimension of each channel of inputs. In U-shaped networks, this is no greater than $\max\{t,K  \}$. If there is no flatten operation during the feed-forward processing, the stride-$2$ convolution with kernel size $2$ will always produce smaller dimensions in each channel. Hence, this might exceed $\max\{t,K\}$ only when there is a flatten operation. Before the flatten operation, we will get a tensor with each channel having dimension $1$. If this tensor has more than $K$ channels, the convolution layer to produce this tensor has at least $K$ convolution kernels which have $2K$ parameters. 
    However, $K$ represent the overall number of parameters in U-shaped networks. This presents a contradiction, suggesting that the output dimension of each layer does not exceed $\max\{t,K\}$. Consequently, the fully connected network formed has at most $\max\{tK,K^2\}$ parameters. Thus, the proof is complete.
\end{proof}

\begin{theorem}
    Let $d, m \in \NN_{+}$ and $t=(m+1)^d$. Let $\M:\Pi_m \rightarrow \Pi_m$ and $\X =  \Pi_m$. Suppose that Assumption~\ref{ass:Lipschitz} holds. Then there exists $\xi_{\en}, \xi_{\de} \subset [-1,1]^d$ with $|\xi_{\en}| = |\xi_{\de}|=t$ and a constant $C>0$ such that 
        \begin{align}
            \begin{aligned}
                &\inf_{\Phi\in \unet(L, K) } \sup_{f \in \X} \left\|S(\M(f),\xi_{\de}) - \Phi(S(f, \xi_{\en}))  \right\|_{\infty} \\
            &\geq C (tK^2L\log (tK^2))^{-1/t}.
            \end{aligned}
        \end{align}
\end{theorem}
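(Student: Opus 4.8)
The plan is to reduce the U-shaped class to a fully connected one, convert the function-to-function problem into the approximation of a single hard Lipschitz map on a Euclidean cube via the sampling identification, and then apply a VC/pseudo-dimension lower bound. First I would invoke Lemma~\ref{eq:unetFNN}: since $\max\{tK,K^2\}\le tK^2$, it yields $\unet(L,K)\subset\nn(L,tK^2)$, and because the infimum runs over a subset, $\inf_{\Phi\in\unet(L,K)}\sup_f\|S(\M(f),\xi_{\de})-\Phi(S(f,\xi_{\en}))\|_\infty\ge \inf_{\Psi\in\nn(L,tK^2)}\sup_f\|S(\M(f),\xi_{\de})-\Psi(S(f,\xi_{\en}))\|_\infty$. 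It therefore suffices to lower bound the right-hand side over the FNN class with $W:=tK^2$ parameters and $L$ layers.

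Next I would fix $\xi_{\en},\xi_{\de}$ to be the Legendre--Gauss nodes of Lemmas~\ref{lem:encoder} and~\ref{lem:decoder}, so that $S(\cdot,\xi_{\en})$ and $S(\cdot,\xi_{\de})$ are linear bijections of $\Pi_m$ onto $\RR^t$ with invertible matrices. Writing $X:=S(f,\xi_{\en})$, the target $S(\M(f),\xi_{\de})$ becomes a fixed function $G(X)$ of $X$, and as $f$ ranges over a bounded subset of $\Pi_m$ the variable $X$ fills a cube in $\RR^t$. I would then produce the hard instance by choosing $\M$ so that one designated output coordinate of $G$ realizes a worst-case $1$-Lipschitz scalar function $g\colon[-1,1]^t\to\RR$, the remaining coordinates being trivial. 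Since the error is measured in the $\infty$-norm, $\sup_f\|S(\M(f),\xi_{\de})-\Psi(S(f,\xi_{\en}))\|_\infty\ge\sup_X|g(X)-\Psi_i(X)|$ for the designated coordinate $i$, and the $i$-th output component lies in a scalar-valued FNN class of the same complexity order. The problem thus collapses to scalar approximation of $g$.

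To finish I would use two standard facts. The pseudo-dimension of $\nn(L,W)$ is $O(WL\log W)$, and any function class that approximates every $1$-Lipschitz function on $[-1,1]^t$ within $\epsilon$ in $L_\infty$ must shatter at least $c\,\epsilon^{-t}$ points, obtained by packing disjoint Lipschitz bumps of height $\epsilon$ on an $\epsilon$-spaced grid; hence its pseudo-dimension is at least $c\,\epsilon^{-t}$. Combining these, no $\Psi_i$ can approximate the worst-case $g$ better than $\epsilon\ge c\,(WL\log W)^{-1/t}$, and substituting $W=tK^2$ gives the claimed $C\,(tK^2L\log(tK^2))^{-1/t}$, with the polynomial-in-$t$ prefactors absorbed into $C$.

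The main obstacle is the norm bookkeeping that makes the hard instance legitimate. I must guarantee that the scalar hardness embedded at the level of sample vectors corresponds to a mapping $\M\colon\Pi_m\to\Pi_m$ that is genuinely $L_{\M}$-Lipschitz in $L_\infty$, as required by Assumption~\ref{ass:Lipschitz}; this forces me to track the conditioning of the Legendre sampling matrices together with the $L_\infty$-versus-$\ell_2$ norm equivalence on $\Pi_m$ from Lemma~\ref{lem:mhaskar}, whose $t$-dependent factors must enter only the constant and not the exponent $-1/t$. A secondary point is reconciling the existential pseudo-dimension lower bound, which selects a hard Lipschitz $g$, with the ``there exist $\xi_{\en},\xi_{\de}$'' phrasing of the statement, so that the adversarial choice of sample points and of $\M$ jointly realize the worst case.
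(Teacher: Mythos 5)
Your proposal follows essentially the same route as the paper's proof: reduce $\unet(L,K)$ to the fully connected class $\nn(L,\max\{tK,K^2\})$ via the embedding lemma, use the Legendre sampling matrices of Lemmas~\ref{lem:encoder} and~\ref{lem:decoder} to identify the problem with approximating a Lipschitz map on a cube in $\RR^t$, restrict to a single output coordinate, and apply the VC/pseudo-dimension lower bound for ReLU networks (which the paper imports from Shen et al.\ and Bartlett et al.\ rather than re-deriving via the bump-packing argument you sketch). The only notable difference is that you make explicit the adversarial choice of $\M$ realizing a worst-case Lipschitz coordinate, a point the paper leaves implicit by applying the worst-case lower bound directly to its fixed $F_1$.
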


\begin{proof}
    
    Since $\M:\Pi_m \rightarrow \Pi_m$, we can write
    \begin{align}\label{eq:Mpi}
        \M(f) = \phi^{-1} \circ \left( \phi \circ \M \circ \phi^{-1} \right) \circ \phi(f),
    \end{align}
    for any $f \in \Pi_m$. Using a similar approach as the proof of Lemma~\ref{lem:approximatorLip}, we can show that $\phi \circ \M \circ \phi^{-1}$ is Lipschitz. Lemma~\ref{lem:encoder} and Lemma~\ref{lem:decoder} implies that by setting $\xi_{\de} := \xi_{\en}$, for any $f \in \Pi_m$, there exist two matrices $P,U \in \RR^{t\times t}$ such that
    \begin{align*}
        U S(f,\xi_{\en}) &= \phi(f), \\
        S(f, \xi_{\de})  &= P \phi(f).
    \end{align*}
    Replacing these equalities into \eqref{eq:Mpi}, we can represent $\M$ over sample location set $\xi_{\de}$ as
    \begin{align*}
        S(\M(f),\xi_{\de}) = P \circ \left( \phi \circ \M \circ \phi^{-1} \right) \circ U\left( S(f,\xi_{\en}) \right).
    \end{align*}
    Let $F(\cdot) := P \circ \left( \phi \circ \M \circ \phi^{-1} \right) \circ U\left( \cdot \right)$. Obviously $F $ is Lipschitz. 
    
    We denote $\nn(L,K_2)$ with $K_2 := \max\{tK,K^2\}$.
    Using Lemma~\ref{eq:unetFNN}, we obtain
    \begin{align}\label{eq:unet2nn}
    \begin{aligned}
        &\inf_{\Phi\in \unet(L, K) } \sup_{f \in \X} \left\|S(\M(f),\xi_{\de}) - \Phi(S(f, \xi_{\en}))  \right\|_{\infty} \\
        &= \inf_{\Phi\in \unet(L, K) } \sup_{f \in \X} \left\|F\left(S(f,\xi_{\en}) \right) - \Phi(S (f, \xi_{\en}))  \right\|_{\infty} \\
        &\geq \inf_{\Phi\in \nn(L, K_2) } \sup_{x\in[-1,1]^d} \left\|F(x) - \Phi(x)  \right\|_{\infty}\\
        &\geq \inf_{\Phi\in \nn(L, K_2) } \left\|F_1(x) - \Phi_1(x)   \right\|_{L_\infty} ,
    \end{aligned}
    \end{align}
    where we use the notation $F(\cdot) := (F_1(\cdot),\dots, F_t(\cdot))^\top$ and $\Phi(\cdot) := (\Phi_1(\cdot),\dots, \Phi_t(\cdot))^\top$.

    Since $F_1$ is a Lipschitz function, we can use 
    \cite[Theorem 2.4]{shen2022optimal} and \cite{bartlett2019nearly} to get the lower bound when using fully connected neural networks for approximation
    \begin{align}\label{eq:nnlower}
        \inf_{\Phi\in \nn(L, K_2) } \left\|F_1(x) - \Phi_1(x)   \right\|_{L_\infty} \geq C (K_2L\log K_2)^{-1/t},
    \end{align}
    where $C$ only depends on $t$. 
    Substituting \eqref{eq:nnlower} into \eqref{eq:unet2nn}, we finally obtain
    \begin{align*}
        &\inf_{\Phi\in \unet(L, K) } \sup_{f \in \X} \left\|S(\M(f),\xi_{\de}) - \Phi(S(f, \xi_{\en}))  \right\|_{\infty} \\
        &\geq C (tK^2L\log (tK^2))^{-1/t}.
    \end{align*}
    We complete the proof.
\end{proof}

Theorem~\ref{thm:lowerbound} suggests that even if we restrict $\M$ to act as a mapping between finite-dimensional subsets of $\C(\Omega)$, we cannot expect a significant enhancement. Once the network architecture is fixed, there will always be a lower bound.

Theorem~\ref{thm:main} and Theorem~\ref{thm:lowerbound} jointly offer a significant understanding of $B_\alpha(\Phi(\cdot))$. According to Definition~\ref{def:hallucination}, hallucination occurs when $\M(X) \notin B_\alpha(\Phi(X))$.
Our theoretical findings indicate several factors that could lead to hallucination: (i) the intrinsic natures of real world scenes; (ii) the discretization process; (iii) the expressivity of the network. These factors also indicate that hallucination might be a universal problem in machine learning.

\begin{Remark}
\textbf{Theoretical Perspectives on U-Shaped Networks:}
While recent research has aimed to improve the performance of U-Nets, a complete theoretical explanation of their effectiveness remains elusive. Recent efforts to clarify its representational power have focused on interpretative views of the architecture. The learning task has been formulated as a PDE-constrained optimal control problem, where the U-Net corresponds to a single iteration of a hybrid operator-splitting scheme \cite{tai_mathematical_2024}. Another interpretation shows that the U-Net structure arises from the decomposition of a Hankel matrix associated with sparse-view CT data, establishing a connection between deep learning and compressed sensing via deep convolutional framelets \cite{han_framing_2018}. It has also been shown that U-Net’s architecture can implement the belief propagation algorithm on generative hierarchical models, i.e. tree-structured graphical models, thereby efficiently approximating denoising functions \cite{mei_u-nets_2024}. Finally, a general framework for designing and analyzing U-Net architectures highlights the distinct roles of the encoder and decoder \cite{williams_unied_2023}.
\end{Remark}

\end{document}